%% file: main.tex
\documentclass{article}

\usepackage{microtype}
\usepackage{graphicx}
\usepackage{subcaption}
\usepackage{booktabs} 
\usepackage{amsmath,amsfonts,bm,amsthm}
\usepackage{mathtools}
\usepackage{multirow}
\usepackage{enumitem}
\usepackage{pgfplots} 
\pgfplotsset{compat=1.18}

\usepackage{makecell} 

\input{vmr-symbols-vecbold.tex} 
\input{standard-macros.tex}        

\usepackage{hyperref}

\usepackage[preprint]{icml2026}

\usepackage{amsmath}
\usepackage{amssymb}
\usepackage{mathtools}
\usepackage{amsthm}
\usepackage{longtable}

\usepackage[capitalize,noabbrev]{cleveref}

\theoremstyle{plain}
\newtheorem{theorem}{Theorem}[section]
\newtheorem{proposition}[theorem]{Proposition}
\newtheorem{lemma}[theorem]{Lemma}
\newtheorem{corollary}[theorem]{Corollary}
\theoremstyle{definition}
\newtheorem{definition}[theorem]{Definition}

\theoremstyle{remark}

\usepackage{tikz}
\usetikzlibrary{patterns, positioning, calc}
\usetikzlibrary{arrows.meta}
\tikzset{
    cell/.style={draw, minimum size=0.5cm, inner sep=0pt, anchor=south west, thin},
    gray cell/.style={cell, fill=gray!20},
    blue cell/.style={cell, fill=blue!20},
    red cell/.style={cell, fill=red!20},
    white cell/.style={cell, fill=white},
    label node/.style={circle, fill=white, fill opacity=0.7, text opacity=1, inner sep=1pt, font=\large\bfseries, align=center},
    title node/.style={font=\large, anchor=south}
}

\usepackage[textsize=tiny]{todonotes}

\icmltitlerunning{The Key to State Reduction in Linear Attention: A Rank-based Perspective}

\begin{document}
\twocolumn[
\icmltitle{The Key to State Reduction in Linear Attention: A Rank-based Perspective}



  \icmlsetsymbol{equal}{*}

  \begin{icmlauthorlist}
    \icmlauthor{Philipp Nazari}{mpi,eth,ellis}
    \icmlauthor{T. Konstantin Rusch}{mpi,ellis,tueAI,liquid}
  \end{icmlauthorlist}

  \icmlaffiliation{eth}{Department of Computer Science, ETH Zürich, Zürich, Switzerland}
  \icmlaffiliation{mpi}{Max Planck Institute for Intelligent Systems, Tübingen, Germany}
  \icmlaffiliation{ellis}{ELLIS Institute Tübingen}
  \icmlaffiliation{tueAI}{Tübingen AI Center}
  \icmlaffiliation{liquid}{Liquid AI}

  \icmlcorrespondingauthor{Philipp Nazari}{philipp.nazari@tuebingen.mpg.de}

  \icmlkeywords{Machine Learning, ICML}

  \vskip 0.3in
]



\printAffiliationsAndNotice{}  

\begin{abstract}
Linear attention offers a computationally efficient yet expressive alternative to softmax attention. However, recent empirical results indicate that the hidden state of trained linear attention models often exhibits a low-rank structure, suggesting that these models underexploit their capacity in practice. To illuminate this phenomenon, we provide a theoretical analysis of the role of rank in linear attention, revealing that low effective rank can affect retrieval error by amplifying query noise. In addition to these theoretical insights, we conjecture that the low-rank states can be substantially reduced post-training with only minimal performance degradation, yielding faster and more memory-efficient models. To this end, we propose a novel hardware-aware approach that structurally prunes key and query matrices, reducing the state size while retaining compatibility with existing CUDA kernels. We adapt several existing pruning strategies to fit our framework and, building on our theoretical analysis, propose a novel structured pruning method based on a rank-revealing QR decomposition. Our empirical results, evaluated across models of varying sizes and on various downstream tasks, demonstrate the effectiveness of our state reduction framework. We highlight that our framework enables the removal of 50\% of the query and key channels at only a marginal increase in perplexity. The code for this project can be found at \url{https://github.com/camail-official/LinearAttentionPruning}.
\end{abstract}

\section{Introduction}
Linear Attention~\citep{katharopoulos2020transformers,schlag2021linear,sun2023retentive,peng2023rwkv,gu2024mamba,yang2024parallelizing,dao2024transformers,yang2024gated,team2025kimi} has emerged as an efficient alternative to softmax attention \citep{vaswani2017attention}, enabling high-throughput chunkwise parallel training~\citep{hua2022transformer,sun2023retentive,lingle2023transformer,yang2023gated} with linear time complexity and constant memory inference. These efficiency gains have recently driven the development of large hybrid models \citep{lieber2024jamba,li2025minimax,blakeman2025nemotron,team2025qwen3} which predominantly employ linear attention layers, interspersed with only a few softmax attention layers. 

Despite their impressive performance, prior work indicates that linear attention models still underutilize their capacity in practice~\citep{siems2025deltaproduct,parnichkun2025quantifying}. In particular, the matrix-valued hidden states exhibit a low-rank structure after training. In this work, we demonstrate how this structure can increase the model's sensitivity to query noise. Through the lens of \emph{linear associative memories}~\citep{ramsauer2020hopfield,wang2025test}, the hidden state acts as a storage for sequence history. Our observation of low effective rank indicates that the model might be using its memory inefficiently, effectively wasting its capacity. This finding suggests that the state size can be reduced post-training, yielding models that are both faster and more memory-efficient.

Towards this end, we propose a structured pruning framework to reduce the size of the hidden states in linear attention models. Within this framework, our experiments reveal that we can consistently remove approximately 50\% of the key and query channels at only a minor increase in perplexity, even before recovery fine-tuning~\citep{hulora,ma2023llm,ashkboos2024slicegpt}. Crucially, our approach is compatible with causal convolutions~\citep{so2021searching,fu2022hungry,yang2023gated,gu2024mamba,dao2024transformers,yang2024parallelizing} by avoiding internal state-space rotations, unlike methods such as SpinQuant~\citep{liu2024spinquant} and QuaRot~\citep{ashkboos2024quarot}. However, our framework remains fully compatible with the residual stream rotations employed by these methods (as well as SliceGPT~\citep{ashkboos2024slicegpt}), allowing for further efficiency gains.

\begin{figure*}[t]
    \centering
    \resizebox{\textwidth}{!}{%
    \begin{tikzpicture}[scale=1] 
        \tikzset{
            cell/.style={draw, minimum size=0.5cm, inner sep=0pt, anchor=south west, thin},
            gray cell/.style={cell, fill=gray!20},
            blue cell/.style={cell, fill=blue!20},
            red cell/.style={cell, fill=red!20},
            white cell/.style={cell, fill=white},
            label node/.style={circle, fill=white, fill opacity=0.85, text opacity=1, inner sep=1pt, font=\Large\bfseries, align=center},
            title node/.style={font=\LARGE, anchor=south, align=center}
        }

        \begin{scope}[shift={(0,0)}]
            \node[title node] at (7.7, 4.5) {Unstructured sparsity};
            
            \begin{scope}[shift={(0,0)}]
                \foreach \x in {0,...,5} \foreach \y in {0,...,7} {
                    \node[gray cell] at (\x*0.5, \y*0.5) {};
                }
                \node[label node] at (1.5, 2) {X};
            \end{scope}

            \node[font=\Huge] at (3.4, 2) {$\cdot$};

            \begin{scope}[shift={(3.8, 0.5)}] 
                \foreach \x in {0,...,5} \foreach \y in {0,...,5} {
                    \node[white cell] at (\x*0.5, \y*0.5) {};
                }
                \foreach \x/\y in {0/0, 1/1, 2/2, 3/3, 4/4, 5/5, 
                                   0/5, 1/4, 5/0, 2/5, 3/1, 4/2, 
                                   0/2, 5/3, 1/0} {
                    \node[blue cell] at (\x*0.5, \y*0.5) {};
                }
                \node[label node] at (1.5, 1.5) {$\mathbf{W}_\bK$};
            \end{scope}

            \node[font=\huge] at (7.35, 2) {$=$};

            \begin{scope}[shift={(7.9, 0)}]
                \foreach \x in {0,...,5} \foreach \y in {0,...,7} {
                    \node[gray cell] at (\x*0.5, \y*0.5) {};
                }
                \node[label node] at (1.5, 2) {K};
            \end{scope}

            \draw[-{Stealth[scale=1.5]}, thick] (11.25, 2) -- (12.05, 2);

            \begin{scope}[shift={(12.4, 0.5)}]
                \foreach \x in {0,...,5} \foreach \y in {0,...,5} {
                    \node[red cell] at (\x*0.5, \y*0.5) {};
                }
                \node[label node] at (1.5, 1.5) {$\mathbf{S_t}$};
            \end{scope}

        \end{scope}

        \begin{scope}[shift={(17,0)}]
            \node[title node] at (7.7, 4.5) {Structured sparsity};

            \begin{scope}[shift={(0,0)}]
                \foreach \x in {0,...,5} \foreach \y in {0,...,7} {
                    \node[gray cell] at (\x*0.5, \y*0.5) {};
                }
                \node[label node] at (1.5, 2) {X};
            \end{scope}

            \node[font=\Huge] at (3.4, 2) {$\cdot$};

            \begin{scope}[shift={(3.8, 0.5)}] 
                \foreach \x in {0,...,5} \foreach \y in {0,...,5} {
                    \node[white cell] at (\x*0.5, \y*0.5) {};
                }
                \foreach \x in {0, 2, 3} {
                    \foreach \y in {0,...,5} {
                        \node[blue cell] at (\x*0.5, \y*0.5) {};
                    }
                }
                \node[label node] at (1.5, 1.5) {$\mathbf{W}_\bK\mathbf{T^\top}$};
            \end{scope}

            \node[font=\huge] at (7.35, 2) {$=$};

            \begin{scope}[shift={(7.9, 0)}]
                \foreach \x in {0,...,5} \foreach \y in {0,...,7} {
                    \node[white cell] at (\x*0.5, \y*0.5) {};
                }
                \foreach \x in {0, 2, 3} {
                    \foreach \y in {0,...,7} {
                        \node[gray cell] at (\x*0.5, \y*0.5) {};
                    }
                }
                \foreach \x in {1, 4, 5} {
                     \fill[pattern=north east lines, pattern color=black] (\x*0.5, 0) rectangle (\x*0.5+0.5, 4);
                     \foreach \y in {0,...,7} {
                        \draw[thin] (\x*0.5, \y*0.5) rectangle (\x*0.5+0.5, \y*0.5+0.5);
                     }
                }
                \node[label node] at (1.5, 2) {K};
            \end{scope}

            \draw[-{Stealth[scale=1.5]}, thick] (11.25, 2) -- (12.05, 2);

            \begin{scope}[shift={(12.4, 0.5)}]
                \foreach \x in {0,...,2} \foreach \y in {0,...,5} {
                    \node[red cell] at (\x*0.5, \y*0.5) {};
                }
                \node[label node] at (0.75, 1.5) {$\mathbf{S_t}$};
            \end{scope}

        \end{scope}

    \end{tikzpicture}
    }
    \caption{\textbf{Left:} Unstructured pruning yields a sparse weight matrix $\mathbf{W}_{\bK}$ yet preserves the column dimension of $\mathbf{K}$, leaving the domain of the state matrix $\mathbf{S}_t \in \mathbb{R}^{d_v, d_k}$ invariant. \textbf{Right:} Structured pruning eliminates basis vectors, mapping keys to a lower-dimensional space $\mathbb{R}^{d'_k}$ where $d'_k < d_k$. This results in a compressed state $\mathbf{S}_t \in \mathbb{R}^{d_v, d'_k}$. This reduction strictly decreases the FLOP count required to compute the recurrence. This figure is inspired by \citet[Figure 1]{ashkboos2024slicegpt}.}
    \label{fig:one}
\end{figure*}
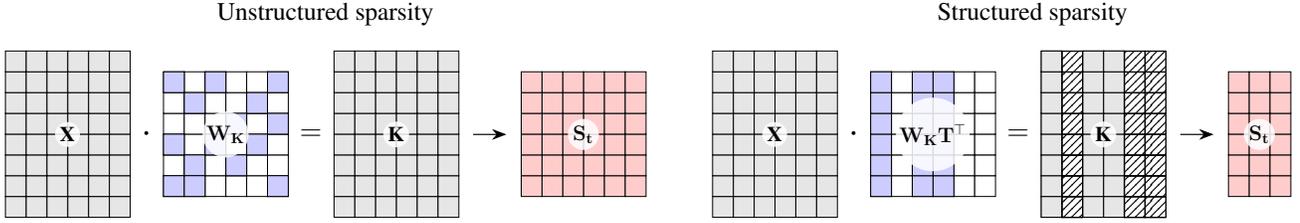

Specifically, our approach relies on the observation that linear attention models are invariant under orthogonal transformations applied jointly to the queries and keys. Building on this insight, we seek transformations that select the columns of the keys and queries that contribute substantially to model performance. Within this framework, we provide a reformulation of several established pruning strategies, including those based on parameter magnitude and gradient saliency. Motivated by theoretical insights, we furthermore introduce a novel structured pruning approach that selects a subset of columns that maximizes the rank utilization of the remaining memory.

In summary, our \textbf{main contributions} are:
\begin{itemize}[leftmargin=15pt, labelindent=0pt, parsep=1pt, topsep=1pt, partopsep=1pt]
    \item We provide theoretical insights into the role of rank in linear attention (Section~\ref{sec:theor-insights}). In particular, we show that rank utilization affects retrieval error, and that low rank utilization can amplify query noise, yielding poorly conditioned query gradients.
    \item Motivated by the low rank utilization observed in practice, we formulate a post-training state-size reduction framework (Section~\ref{sec:method}). Specifically, we show that the state can be reduced substantially by jointly selecting subsets of channels from the keys and queries.
    \item We adapt several existing pruning strategies to fit our framework and, building on our theoretical analysis, propose a structured pruning method based on rank-revealing QR decompositions (Section~\ref{sec:axis-aligned}).
    \item We present extensive empirical results on pre-trained (Gated) DeltaNet models evaluated across a range of zero-shot and recall tasks, demonstrating the effectiveness of our proposed framework (Section~\ref{sec:results}).
\end{itemize}

\section{Theoretical Insights}
This section establishes the theoretical foundations of our analysis. We begin by introducing the linear attention mechanism and its interpretation as an associative memory. Building on this, we develop a framework to characterize the effect of rank collapse within this setting.


\subsection{Background on Linear Attention}
\label{sec:theor-insights}

Attention \cite{vaswani2017attention} maps an input sequence $\bX \in \mathbb R^{T, h}$ to queries $\bQ = \bX \bW_{\bQ} \in \mathbb R^{T, d_k}$, keys $\bK = \bX \bW_{\bK} \in \mathbb R^{T, d_k}$, and values $\bV = \bX \bW_v \in \mathbb R^{T, d_v}$ via linear projections. The output of softmax attention can be computed in parallel,
\begin{equation}
\label{eq:softmax_att}
    \bO = \operatorname{softmax}\left(\frac{\mathbf{Q}\mathbf{K}^\top}{\sqrt{d_k}}\right)\mathbf{V} \in \mathbb R^{T, d_v},
\end{equation}
or sequentially,
\begin{equation}
\label{eq:seq_softmax_att}
    \mathbf{o}_t = \sum_{s=1}^t \frac{\exp(\mathbf{q}_t \mathbf{k}_s^\top / \sqrt{d_k})}{\sum_{j=1}^t \exp(\mathbf{q}_t \mathbf{k}_j^\top / \sqrt{d_k})} \mathbf{v}_s \in \mathbb R^{d_v}.
\end{equation}
The parallel formulation is commonly used during training and takes advantage of GPU parallelism; however, it imposes quadratic time and memory complexity. The sequential formulation is used at inference and can also be implemented as a KV-cache~\citep{pope2023efficiently}. Even then, the sequential formulation still requires $O(T)$ memory, which is specifically problematic for longer sequences.

Linear attention~\citep{katharopoulos2020transformers, schlag2021linear} addresses this bottleneck by removing the softmax operator in Equation~\eqref{eq:softmax_att} and exploiting the associativity of matrix multiplication:
\begin{equation*}
    \bO = \left(\bQ\bK^\top\right)\bV = \bQ\left(\bK^\top\bV\right).
\end{equation*}
Based on this rearrangement, the softmax-free sequential version of Equation~\eqref{eq:seq_softmax_att} can be expressed recurrently,
\begin{equation}
\label{eq:lsa}
    \bS_t = \bS_{t-1} + \bmv_t \bmk_t^\top,
\end{equation}
where $\bS_t \in \mathbb{R}^{d_v, d_k}$ is a matrix-valued hidden state used to compute the output $\bmo_t = \bS_t \bmq_t$, and $\bS_0={\bf 0}$. The state $\bS_t$ implements a \emph{linear associative memory}~\citep{ramsauer2020hopfield,wang2025test} that is addressed using the queries. Throughout this work, we will refer to the update rule of $\bS_t$ as the sequence mixer.

The mechanism defined in Equation~\eqref{eq:lsa} can only write \emph{into} the associative memory. DeltaNet~\citep{schlag2021linear,yang2024parallelizing} addresses this limitation by explicitly erasing old information correlated with the current key before writing the new value to $\bS_t$, i.e.,
\begin{equation}
\label{eq:dn-recursion}
    \bS_t = \bS_{t-1} (\mathbf{I} - \beta_t \bmk_t \bmk_t^\top) + \beta_t \bmv_t \bmk_t^\top.
\end{equation}
Gated DeltaNet~\citep{yang2024gated} further refines this mechanism by introducing a data-dependent decay term.

\subsection{On the Role of the Rank}
\label{sec:rank-considerations}
A commonly identified weakness of linear attention models compared to their softmax counterparts is their fixed-sized associative memory. However, recent empirical results indicate that these models do not manage this memory well~\citep{siems2025deltaproduct,parnichkun2025quantifying}, exhibiting an effective low-rank structure in practice. We confirm this finding for DeltaNet 370M, showing that its hidden state is characteristically heavy-tailed (Figure~\ref{fig:histograms-main}). In the following, we build a framework that illuminates this phenomenon.

By construction, the subspace spanned by the associative memory is governed by the keys and values.
Formally, for a family of linear attention models, the row and column spaces of the associative memory satisfy
\begin{equation*}
    \operatorname{row} \bS_t \subseteq \operatorname{span}(\bmk_1,\ldots,\bmk_t), \;\;\; \operatorname{col} \bS_t \subseteq \operatorname{span}(\bmv_1,\ldots,\bmv_t).
\end{equation*}
It thus follows that
\begin{equation}
\label{eq:rank-S-bound}
    \rank \bS_t \leq \min \left( \rank \bK_t, \rank \bV_t \right) \leq t,
\end{equation}
where $\bK_t = [\bmk_1,\ldots,\bmk_t]$ and $\bV_t = [\bmv_1,\ldots,\bmv_t]$ are the matrices obtained by stacking the keys and values, respectively. The proof is presented in Appendix~\ref{appendix:proof-rank}.

Equation~\eqref{eq:rank-S-bound} shows that the algebraic rank of the associative memory is bounded by the algebraic ranks of the keys and values. However, the algebraic rank is too rigid to serve as a meaningful measure for noisy real-world data. To address this, we consider the \textit{effective rank} (or \textit{stable rank})~\citep{rudelson2007sampling,tropp2015introduction,vershynin2018high,ipsen2025stable} instead:
\begin{definition}[Effective Rank]
    Given a matrix $\bS \in \mathbb R^{d_v,d_k}$, its \textit{effective rank} is defined as
    \begin{equation*}
        \operatorname{er}(\bS) \coloneqq \|\bS\|_F^2 / \|\bS\|_2^2.
    \end{equation*}
    It measures the skewness of the singular value spectrum and can also be computed as
    \begin{equation*}
        \operatorname{er}(\bS) = \sum_{i} \sigma_i^2/\sigma_1^2,
    \end{equation*}
    with $\sigma_1 \geq \ldots \geq \sigma_{\min(d_v,d_k)} \geq 0$ the singular values of $\bS$.
\end{definition}

The following proposition generalizes Equation~\eqref{eq:rank-S-bound}, relating the effective rank of the associative memory to the conditioning of the keys. It serves as a first tool that provides control over the effective rank of the associative memory:
\begin{proposition}
    \label{proposition:sandwich}
    Consider the linear attention recurrence \mbox{$\bS_t = \bS_{t-1} + \bmv_t\bmk_t^\top$}. There exists a scalar quantity $\nu(\bV_t)$ such that the effective rank of the memory is lower bounded:
    \begin{equation*}
        \frac{\nu(\bV_t)}{\kappa^2(\bK_t)} \leq \operatorname{er}(\bS_t), 
    \end{equation*}
    where $\kappa$ denotes the $l^2$ condition number.
\end{proposition}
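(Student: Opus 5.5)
Unrolling the recurrence from $\bS_0={\bf 0}$ gives the factorization
\begin{equation*}
    \bS_t = \sum_{s=1}^t \bmv_s\bmk_s^\top = \bV_t \bK_t^\top,
\end{equation*}
where we read $\bV_t\in\mathbb R^{d_v,t}$ and $\bK_t\in\mathbb R^{d_k,t}$ as the column-stacked matrices. The plan is to bound the numerator $\|\bS_t\|_F^2$ from below and the denominator $\|\bS_t\|_2^2$ from above, separately. In both bounds the $\bK_t$-dependence should enter only through its extreme singular values $\sigma_{\max}(\bK_t)$ and $\sigma_{\min}(\bK_t)$. Their ratio will then produce $\kappa^2(\bK_t)$, and whatever depends on $\bV_t$ will be collected into $\nu(\bV_t)$.

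For the denominator we use submultiplicativity:
\begin{equation*}
    \|\bS_t\|_2^2 \le \|\bV_t\|_2^2\,\sigma_{\max}^2(\bK_t).
\end{equation*}
For the numerator we write $\|\bS_t\|_F^2 = \operatorname{tr}(\bV_t \bK_t^\top\bK_t \bV_t^\top)$. If $\bK_t$ has full column rank (the regime $t\le d_k$), then $\bK_t^\top\bK_t \succeq \sigma_{\min}^2(\bK_t)\mathbf I_t$, which gives
\begin{equation*}
    \|\bS_t\|_F^2 \ge \sigma_{\min}^2(\bK_t)\|\bV_t\|_F^2.
\end{equation*}
Dividing the two bounds yields
\begin{equation*}
    \operatorname{er}(\bS_t)\ \ge\ \frac{\|\bV_t\|_F^2}{\|\bV_t\|_2^2}\cdot\frac{\sigma_{\min}^2(\bK_t)}{\sigma_{\max}^2(\bK_t)} = \frac{\operatorname{er}(\bV_t)}{\kappa^2(\bK_t)},
\end{equation*}
so we take $\nu(\bV_t)=\operatorname{er}(\bV_t)$.

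The main obstacle is the case $t>d_k$. Then $\bK_t$ has a nontrivial kernel, and $\bK_t^\top\bK_t$ is only bounded below on $\operatorname{row}(\bK_t)\subseteq\mathbb R^t$. There are two ways to handle this.
\begin{itemize}
    \item One option is to interpret $\kappa$ as $\sigma_{\max}/\sigma_{\min}^{+}$, the ratio to the smallest \emph{nonzero} singular value. With $P$ the orthogonal projector onto $\operatorname{row}(\bK_t)$, the same argument gives $\|\bS_t\|_F^2\ge (\sigma^{+}_{\min})^2\|\bV_t P\|_F^2$. We then set $\nu(\bV_t)=\|\bV_tP\|_F^2/\|\bV_t\|_2^2$. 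This still depends on the keys through $P$.
    \item Alternatively, we can work in the (generic, full-rank) regime where $\bK_t$ has full row rank $d_k$. We write $\bS_t=(\bV_t \bK_t^\top(\bK_t\bK_t^\top)^{-1/2})(\bK_t\bK_t^\top)^{1/2}$ and apply the same two-sided bounds to the invertible square factor. This again isolates $\kappa^2(\bK_t)$, now with a $\nu$ built from $\bV_t$ and an orthonormal frame.
\end{itemize}
I would first state the clean result $\nu=\operatorname{er}(\bV_t)$ under the full-column-rank assumption. I would then add a remark giving the projected version for general $t$, since the proposition only asserts the existence of \emph{some} scalar $\nu(\bV_t)$. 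The case $\bV_t={\bf 0}$, where $\operatorname{er}$ is undefined, is excluded trivially.
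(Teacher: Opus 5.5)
Your proposal is correct and follows the paper's argument: submultiplicativity bounds $\|\bS_t\|_2$ from above, and $\sigma_{\min}(\bK_t)$ bounds $\|\bS_t\|_F$ from below once the values are restricted to the key span. Your $\bV_tP$ is exactly the paper's $\bV_{\parallel}^\top$, and your second option reproduces the paper's $\nu=\operatorname{er}(\bV_{\parallel})$. The differences are minor. Your first option's $\|\bV_tP\|_F^2/\|\bV_t\|_2^2$ is slightly weaker than $\operatorname{er}(\bV_tP)$ only because you bounded $\|\bS_t\|_2$ by $\|\bV_t\|_2$ instead of $\|\bV_tP\|_2$. Your separate $t\le d_k$ case, where $\nu=\operatorname{er}(\bV_t)$ is genuinely key-independent, is one the paper does not single out, since it assumes the keys span $\mathbb{R}^{d_k}$.
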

The proof of this proposition is presented in Appendix~\ref{appendix:proof-stable-rank}. Although this statement holds for the specific case of plain linear attention, we use it as an approximation for (Gated) DeltaNet. It establishes that the conditioning of the keys, $\kappa(\bK_t)$, is tightly connected to the effective rank of the hidden state. Towards our goal of improving the memory utilization of linear attention models, this proposition shows that improving the conditioning of the keys improves the effective rank of the memory.

\begin{figure}[t]
    \centering
        \input{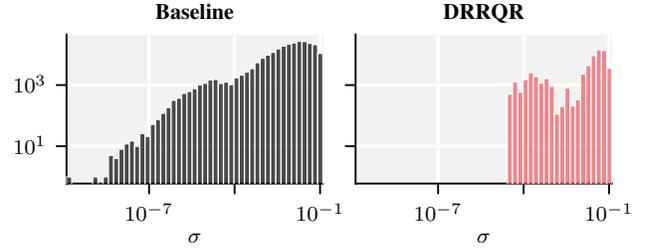}%
    \caption{Singular value spectra for a single DeltaNet 370M head, aggregated over tokens (Fineweb-Edu, $T\!=\!2048$, skipping the first $128$ tokens). \textbf{Left:} Uncompressed baseline. \textbf{Right:} DRRQR at 75\% compression (pre-RFT).}
    \label{fig:histograms-main}
\end{figure}

\subsubsection{Rank Utilization}
While the effective rank measures the raw dimensionality of the stored information, it does not capture how \textit{efficiently} a model uses its available memory. We thus introduce the notion of rank utilization:
\begin{definition}[Rank Utilization]
\label{def:rank-utilization}
    Given a non-zero matrix $\bS \in \mathbb{R}^{d_v, d_k}$, its \textit{rank utilization} is the ratio of its effective rank and its theoretically maximal rank:
    \begin{equation*}
        \operatorname{u}(\bS) \coloneqq \frac{\operatorname{er}(\bS)}{\min(d_k,d_v)}.
    \end{equation*}
\end{definition}
Rank utilization combines effective rank and theoretically maximum capacity $d \coloneqq \min(d_k, d_v)$ and thus serves as a measure for memory utilization in linear attention models. We identify the following two edge cases:
\begin{itemize}[leftmargin=15pt, labelindent=0pt, itemsep=1pt, parsep=1pt, topsep=1pt, partopsep=1pt]
    \item \textbf{Low Utilization ($\operatorname{u} \ll 1$):} The memory suffers from \textit{rank collapse}. Its energy is concentrated in a few principal components. The vast majority of information stored in the state is redundant.
    \item \textbf{High Utilization ($\operatorname{u} \approx 1$):} The memory is \textit{isotropic}. Energy is distributed evenly across all dimensions.
\end{itemize}
Proposition~\ref{proposition:sandwich} directly relates the conditioning of the keys to the rank utilization of the associative memory:
\begin{equation*}
    \frac{\nu(\bV_t)}{d\kappa^2(\bK_t)} \leq \operatorname{u}(\bS_t). 
\end{equation*}
In particular, at a fixed capacity $d$, rank utilization of $\bS_t$ can be increased by improving the conditioning of the keys.


\subsubsection{Why does Rank Utilization Matter?}
\label{sec:why-utilization-matters}
Having established effective rank and rank utilization as measures for the isotropy of the associative memory, this section links these metrics to retrieval sensitivity.

Let $\mathbf{S} = \sum_{i=1}^{d} \sigma_i \mathbf{u}_i \mathbf{w}_i^\top$ be the SVD of the associative memory (we drop the subscript $t$ for brevity), with singular values $\sigma_1 \geq \dots \geq \sigma_d \geq 0$. Moreover, consider a noisy query $\tilde{\mathbf{q}} = \mathbf{q}^* + \mathbf{n}$, where $\mathbf{q}^*$ is the pure query and $\mathbf{n}$ is noise. To analyze how the memory structure affects the output, define two coefficients that capture the alignment of noise and signal with the principal axis:
\begin{equation*}
    \delta \coloneqq {|\mathbf{n}^\top \mathbf{w}_1|}/{\|\mathbf{n}\|_2} \quad \text{and} \quad \gamma \coloneqq {|\mathbf{q}^{*\top} \mathbf{w}_1|}/{\|\mathbf{q}^*\|_2}.
\end{equation*}
The following theorem establishes that the relative retrieval error is governed by the effective rank $\operatorname{er}(\mathbf{S})$.
\begin{theorem}[Effective Rank Governs Retrieval Error]
\label{thm:snr-degradation}
    The ratio of the relative output error to the input noise-to-signal ratio is governed by the effective rank of the memory:
    \begin{equation}
    \label{eq:snr-degradation}
        \frac{\delta}{\sqrt{\operatorname{er}(\mathbf{S})}} \leq \frac{\| \mathbf{o} - \mathbf{o}^* \|_2 / \|\mathbf{o}^*\|_2}{\|\mathbf{n}\|_2 / \|\mathbf{q}^*\|_2} \leq \frac{\sqrt{\operatorname{er}(\bS)}}{\gamma}.
    \end{equation}
\end{theorem}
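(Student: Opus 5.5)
The plan is to write the output error and the clean output in terms of the SVD and bound numerator and denominator separately. By linearity, $\mathbf{o} - \mathbf{o}^* = \mathbf{S}\tilde{\mathbf{q}} - \mathbf{S}\mathbf{q}^* = \mathbf{S}\mathbf{n}$. The ratio in \eqref{eq:snr-degradation} therefore equals
\begin{equation*}
    R \coloneqq \frac{\|\mathbf{S}\mathbf{n}\|_2\,\|\mathbf{q}^*\|_2}{\|\mathbf{S}\mathbf{q}^*\|_2\,\|\mathbf{n}\|_2}.
\end{equation*}
For any vector $\mathbf{x}$, orthonormality of the $\mathbf{u}_i$ gives
\begin{equation*}
    \|\mathbf{S}\mathbf{x}\|_2^2 = \sum_i \sigma_i^2 (\mathbf{w}_i^\top \mathbf{x})^2 .
\end{equation*}
This yields two elementary estimates. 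The first is
\begin{equation*}
    \sigma_1 |\mathbf{w}_1^\top \mathbf{x}| \leq \|\mathbf{S}\mathbf{x}\|_2 ,
\end{equation*}
obtained by dropping all terms except $i=1$. The second is
\begin{equation*}
    \|\mathbf{S}\mathbf{x}\|_2 \leq \sigma_1 \|\mathbf{x}\|_2 .
\end{equation*}
We will also use the cruder bound $\|\mathbf{S}\mathbf{x}\|_2 \leq \|\mathbf{S}\|_F\|\mathbf{x}\|_2$. Recall also that $\|\mathbf{S}\|_F = \sigma_1\sqrt{\operatorname{er}(\mathbf{S})}$ by definition of the effective rank.

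\textbf{Upper bound.} Bound the numerator by
\begin{equation*}
    \|\mathbf{S}\mathbf{n}\|_2 \leq \sigma_1\|\mathbf{n}\|_2 \leq \|\mathbf{S}\|_F \|\mathbf{n}\|_2 = \sigma_1\sqrt{\operatorname{er}(\mathbf{S})}\,\|\mathbf{n}\|_2 .
\end{equation*}
Bound the denominator from below by
\begin{equation*}
    \|\mathbf{S}\mathbf{q}^*\|_2 \geq \sigma_1 |\mathbf{w}_1^\top\mathbf{q}^*| = \sigma_1\gamma\|\mathbf{q}^*\|_2 .
\end{equation*}
Dividing gives $R \leq \sqrt{\operatorname{er}(\mathbf{S})}/\gamma$. (In fact $R \leq 1/\gamma$ holds; the Frobenius step only loosens this by the factor $\sqrt{\operatorname{er}}\geq 1$, and it is the Frobenius step that makes the effective rank appear.)

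\textbf{Lower bound.} Symmetrically, bound the numerator from below by
\begin{equation*}
    \|\mathbf{S}\mathbf{n}\|_2 \geq \sigma_1\delta\|\mathbf{n}\|_2 .
\end{equation*}
Bound the denominator from above by
\begin{equation*}
    \|\mathbf{S}\mathbf{q}^*\|_2 \leq \|\mathbf{S}\|_F\|\mathbf{q}^*\|_2 = \sigma_1\sqrt{\operatorname{er}(\mathbf{S})}\,\|\mathbf{q}^*\|_2 .
\end{equation*}
Dividing gives $R \geq \delta/\sqrt{\operatorname{er}(\mathbf{S})}$. Again, the sharper bound $R\ge\delta$ is available, and the stated form is weaker.

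\textbf{Hypotheses and main obstacle.} The bounds are genuinely elementary, so there is no real technical obstacle. The only points needing care are the implicit hypotheses under which the ratio is defined:
\begin{itemize}[leftmargin=15pt, labelindent=0pt, itemsep=1pt, parsep=1pt, topsep=1pt, partopsep=1pt]
    \item $\mathbf{S}\neq 0$, so that $\operatorname{er}$ is defined;
    \item $\mathbf{n}\neq 0$;
    \item $\mathbf{S}\mathbf{q}^*\neq 0$, which is implied by $\gamma>0$.
\end{itemize}
If $\gamma=0$, the upper bound is read as $+\infty$ and holds trivially. A further subtlety is that $\mathbf{w}_1$ may be non-unique when $\sigma_1$ is repeated. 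The argument works for any fixed choice of $\mathbf{w}_1$ in the SVD, so this should be stated explicitly.
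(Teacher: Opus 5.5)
Your proof is correct and takes the same route as the paper. You lower-bound $\|\mathbf{S}\mathbf{n}\|_2$ and $\|\mathbf{S}\mathbf{q}^*\|_2$ by projecting onto the top singular direction, upper-bound them via $\|\mathbf{S}\mathbf{x}\|_2 \le \|\mathbf{S}\|_F\|\mathbf{x}\|_2$, and then use $\|\mathbf{S}\|_F = \sigma_1\sqrt{\operatorname{er}(\mathbf{S})}$. Your remark that the sharper bounds $\delta \le R \le 1/\gamma$ already hold (so the effective rank enters only through the Frobenius loosening) is also correct, and it goes slightly beyond what the paper states.
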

The proof is presented in Appendix~\ref{appendix:output-util-proof}. Equation~\eqref{eq:snr-degradation} can also be expressed in terms of rank utilization \mbox{$\operatorname{u}(\bS) = \operatorname{er}(\bS)/d$} for a given maximum capacity \mbox{$d \coloneqq \min(d_k,d_v)$}. In this formulation, Theorem~\ref{thm:snr-degradation} reveals the influence of rank utilization on the retrieval error. We highlight two regimes:
\begin{itemize}[leftmargin=15pt, labelindent=0pt, itemsep=1pt, parsep=1pt, topsep=1pt, partopsep=1pt]
    \item \textbf{Low Utilization ($u(\mathbf{S}) \ll 1$):} The system is highly sensitive to noise, unless the noise is orthogonal to the principal component of the memory ($\delta \approx 0$). Simultaneously, the upper bound is small only if the signal is aligned with the principal component ($\gamma \approx 1$). The rank utilization acts as a multiplier to the alignment of the noise with the principal component.
    \item \textbf{High Utilization ($u(\mathbf{S}) \approx 1$):} The memory becomes isotropic, all dimensions carry the same energy. Rank utilization acts only as a weak multiplier on the alignment of the noise and signal with the principal component. The model is not overly sensitive to noise along the principal component.
\end{itemize}

The following corollary is an immediate consequence of Theorem~\ref{thm:snr-degradation} and bounds the expected retrieval error:
\begin{corollary}[Expected Error Bounds]
\label{cor:expected-error}
    Assume the noise is isotropic Gaussian, $\mathbf{n} \sim \mathcal{N}(\mathbf{0}, \xi^2 \mathbf{I})$. Assume furthermore, for simplicity, that $\|q^*\|_2 = 1$.
    Then
    \begin{equation*}
        \sqrt{\frac{2}{\pi \operatorname{er}(\bS)}} \xi \leq \mathbb{E} \left[ \frac{\| \mathbf{o} - \mathbf{o}^* \|_2}{\| \mathbf{o}^* \|_2} \right] \leq \frac{\sqrt{\operatorname{er}(\bS)}}{\gamma} \xi \mu,
    \end{equation*}
    where \mbox{$\mu \coloneqq \sqrt{2} {\Gamma(\frac{d+1}{2})}/{\Gamma(\frac{d}{2})}$}.
\end{corollary}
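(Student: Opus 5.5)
The plan is to take expectations of the two pointwise inequalities in Theorem~\ref{thm:snr-degradation}. With $\|\mathbf{q}^*\|_2 = 1$, the theorem gives, for every noise realization,
\begin{equation*}
    \frac{\delta\,\|\mathbf{n}\|_2}{\sqrt{\operatorname{er}(\bS)}} \leq \frac{\|\mathbf{o}-\mathbf{o}^*\|_2}{\|\mathbf{o}^*\|_2} \leq \frac{\sqrt{\operatorname{er}(\bS)}}{\gamma}\,\|\mathbf{n}\|_2 .
\end{equation*}
The two bounds depend on $\mathbf{n}$ differently. On the right, $\gamma$ involves only $\mathbf{q}^*$ and $\mathbf{w}_1$, so it is deterministic. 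On the left, $\delta\|\mathbf{n}\|_2 = |\mathbf{n}^\top\mathbf{w}_1|$. Since $\bS$ and $\mathbf{q}^*$ are fixed, monotonicity of expectation reduces the corollary to computing $\mathbb{E}|\mathbf{n}^\top \mathbf{w}_1|$ and $\mathbb{E}\|\mathbf{n}\|_2$ for $\mathbf{n}\sim\mathcal{N}(\mathbf{0},\xi^2\mathbf{I})$.

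\textbf{Lower bound.} Because $\mathbf{w}_1$ is a unit vector and the noise is isotropic, $\mathbf{n}^\top\mathbf{w}_1 \sim \mathcal{N}(0,\xi^2)$. The absolute value of a centered Gaussian has mean $\xi\sqrt{2/\pi}$. Dividing by $\sqrt{\operatorname{er}(\bS)}$ gives $\sqrt{2/(\pi\operatorname{er}(\bS))}\,\xi$.

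\textbf{Upper bound.} Here $\|\mathbf{n}\|_2/\xi$ follows a chi distribution with $d$ degrees of freedom. Its mean is the standard formula $\sqrt{2}\,\Gamma(\frac{d+1}{2})/\Gamma(\frac{d}{2}) = \mu$, so $\mathbb{E}\|\mathbf{n}\|_2 = \xi\mu$. Multiplying by the deterministic factor $\sqrt{\operatorname{er}(\bS)}/\gamma$ gives the stated bound.

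\textbf{Main obstacle.} The delicate point is dimensional bookkeeping, not probability. The noise lives in $\mathbb{R}^{d_k}$, while $\mu$ is stated with $d=\min(d_k,d_v)$. When $d_k>d_v$, I would handle this by noting that only the component of $\mathbf{n}$ in $\operatorname{row}\bS = \operatorname{span}(\mathbf{w}_1,\dots,\mathbf{w}_d)$ affects $\mathbf{o}-\mathbf{o}^* = \bS\mathbf{n}$. I would therefore re-derive the upper bound in Theorem~\ref{thm:snr-degradation} with $\|P\mathbf{n}\|_2$ in place of $\|\mathbf{n}\|_2$, where $P$ is the orthogonal projector onto that $d$-dimensional subspace. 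This costs nothing, since $\|\bS\mathbf{n}\|_2 \le \sigma_1\|P\mathbf{n}\|_2$. Then $P\mathbf{n}$ is isotropic Gaussian in $d$ dimensions, and its norm has mean exactly $\xi\mu$.

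A secondary check is whether the lower bound needs $\gamma>0$ or $\mathbf{o}^*\neq\mathbf{0}$. Both are already implicit in Theorem~\ref{thm:snr-degradation}, so nothing further is required.
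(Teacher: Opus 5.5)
Your proposal is correct and follows the paper's own argument: take expectations of the pointwise bounds of Theorem~\ref{thm:snr-degradation}, using $\mathbb{E}|\mathbf{w}_1^\top\mathbf{n}| = \xi\sqrt{2/\pi}$ for the lower bound and the chi-distribution mean $\mathbb{E}\|\mathbf{n}\|_2 = \xi\mu$ for the upper bound. Your projection onto $\operatorname{span}(\mathbf{w}_1,\dots,\mathbf{w}_d)$ is a worthwhile addition: the paper simply asserts $\mathbb{E}\|\mathbf{n}\|_2 = \xi\mu$ with $d=\min(d_k,d_v)$ even though $\mathbf{n}\in\mathbb{R}^{d_k}$, which is only justified without your projection step when $d_k \le d_v$.
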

The proof is presented in Appendix~\ref{app:proof-expected-error}. We furthermore provide a tighter bound including the condition number of $\bS$ instead of its rank utilization in Appendix~\ref{appendix:tighter-bounds}.

Besides amplifying retrieval error, low rank utilization also leads to poorly conditioned gradients for $\bW_\bQ$ during the backwards-pass, since
\begin{equation*}
    \kappa \left(\frac{\partial \bmo_t}{\partial \text{vec}(\bW_\bQ^\top)}\right) = \kappa (\bmx_t^\top \otimes \bS_t) = \kappa(\bS_t).
\end{equation*}
See Appendix~\ref{app:matrix-derivative} for a proof. The above results shed light on the role of the associative memory's effective rank and provide a theoretical justification for our structured pruning framework, extending beyond raw efficiency gains. To this end, the following section develops hardware-aware pruning methods. In particular, we introduce an algorithm that explicitly targets better conditioning of the keys to increase rank utilization at a fixed compression ratio.


\section{The Proposed Pruning Approach}
\label{sec:method}
In this section, we propose a structured pruning framework to reduce the per-head key dimension of linear attention models, which yields a strictly smaller hidden state with higher throughput and lower memory requirements.
\subsection{Invariant Transformations for Linear Attention}
Decomposing $\bS_t$ row-wise reveals that the sequence mixer simulates $d_v$ parallel, independent linear time-varying (LTV) dynamical systems. Indeed, let $\bmh_t^{(i)} \in \mathbb R^{d_k}$ denote the transpose of the $i$-th row of $\bS_t$. Each value channel $i \in \{1, \dots, d_v\}$ follows the vector-valued dynamics
\begin{equation}
\label{eq:dyn-system}
    \bmh_t^{(i)} = \mathbf{A}_t^\top \bmh_{t-1}^{(i)} + \mathbf{B}_t v_{t,i},
\end{equation}
where the system matrices $\mathbf{A}_t = \bI - \beta_t \bmk_t \bmk_t^\top$ and \mbox{$\mathbf{B}_t = \beta_t \bmk_t$} are shared across all value channels. The readout is computed via the vector $\bC_t = \bmq_t^\top$.

The dynamical system in Equation~\eqref{eq:dyn-system} is invariant under the choice of basis in state-space.~\citep[Chapter 4.4]{chahine2025curious,chen1984linear}. That is, every transformation
\begin{equation*}
(\mathbf{A}_t,\mathbf{B}_t,\mathbf{C}_t) \to (\mathbf{T}^{-\top}\mathbf{A}_t\mathbf{T}^\top, \bT\mathbf{B}_t, \mathbf{C}_t\mathbf{T}^{-1})
\end{equation*}
derived from an invertible matrix $\bT \in GL(d_k)$ leaves the input-output mapping invariant. However, requiring invertibility is not sufficient, as the transformed transition matrix
\begin{equation*}
    \tilde{\bA}_t = \bT^{-\top} \bA_t \bT^\top = \bI - (\bT^{-\top}\bmk_t)(\bT \bmk_t)^\top,
\end{equation*}
is not symmetric and can thus not be expressed as a DeltaNet. If additionally $\bT \in O(d_k)$ is an orthogonal change of basis, this transformation can be absorbed to preserve the structure of the DeltaNet recurrence:
\begin{proposition}[Orthogonal Invariance of Sequence Mixing]
\label{prop:invariance}
Let $\bT \in O(d_k)$ be an orthogonal matrix. Then the (Gated) DeltaNet attention mechanism is invariant under the \emph{simultaneous} transformation \mbox{$(\bmk_t, \bmq_t) \mapsto (\bT \bmk_t, \bT \bmq_t)$.}
\end{proposition}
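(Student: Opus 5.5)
The approach is an induction on $t$. We show that the transformed state is the original state composed with $\bT^\top$, so every output is unchanged. Let $\tilde{\bmk}_t = \bT\bmk_t$ and $\tilde{\bmq}_t = \bT\bmq_t$, and let $\tilde{\bS}_t$ be the state produced by the (Gated) DeltaNet recurrence driven by $\tilde{\bmk}_t$, $\tilde{\bmq}_t$ and the same $\bmv_t,\beta_t$, together with the same gate $\alpha_t$ in the gated case. The claim to establish is
\begin{equation*}
    \tilde{\bS}_t = \bS_t \bT^\top \quad \text{for all } t \geq 0.
\end{equation*}
Given this claim, the output satisfies $\tilde{\bmo}_t = \tilde{\bS}_t\tilde{\bmq}_t = \bS_t\bT^\top\bT\bmq_t = \bS_t\bmq_t = \bmo_t$, using $\bT^\top\bT = \bI$.

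The base case holds because $\bS_0 = \tilde{\bS}_0 = \mathbf{0}$. For the inductive step, substitute the hypothesis into \eqref{eq:dn-recursion} (with an extra scalar gate $\alpha_t$ multiplying $\bS_{t-1}$ for Gated DeltaNet):
\begin{equation*}
    \tilde{\bS}_t = \alpha_t \bS_{t-1}\bT^\top\bigl(\bI - \beta_t \bT\bmk_t\bmk_t^\top\bT^\top\bigr) + \beta_t \bmv_t \bmk_t^\top \bT^\top .
\end{equation*}
Using $\bT^\top\bT = \bI$, we have $\bT^\top(\bI - \beta_t\bT\bmk_t\bmk_t^\top\bT^\top) = (\bI - \beta_t\bmk_t\bmk_t^\top)\bT^\top$. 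Factoring $\bT^\top$ out on the right then gives $\tilde{\bS}_t = \bS_t\bT^\top$. Plain linear attention \eqref{eq:lsa} is the special case $\alpha_t = \beta_t = 1$ without the erase term, and the same argument applies.

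The main obstacle is not the algebra but checking that every other quantity in the mechanism is unchanged by the transformation. First, $\beta_t$ and the gate $\alpha_t$ must not depend on $\bmk_t$. In (Gated) DeltaNet they are computed from $\bmx_t$ through separate projections, so this holds. Second, the keys and queries are $L_2$-normalized in DeltaNet, and any elementwise activation or short convolution acts before that normalization. $L_2$ normalization commutes with $\bT$ because $\|\bT\bmk\|_2 = \|\bmk\|_2$. An elementwise nonlinearity (e.g.\ SiLU) or a channelwise causal convolution, however, does not commute with a general orthogonal $\bT$. The statement should therefore be read with $(\bmk_t,\bmq_t)$ denoting the vectors that actually enter the recurrence, that is, after these maps. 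For the pruning use case later in the paper, $\bT$ will be a permutation or selection matrix, and such matrices do commute with elementwise maps and channelwise convolutions. I would state this caveat explicitly, then run the induction above, noting that the orthogonality of $\bT$ is what keeps the transition matrix symmetric. This is exactly the point where, as observed before the proposition, a general $\bT \in GL(d_k)$ fails.
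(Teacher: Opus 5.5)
Your induction is correct and is essentially the paper's argument. The paper justifies the proposition via the state-space change of basis $\mathbf{h}_t^{(i)} \mapsto \mathbf{T}\mathbf{h}_t^{(i)}$ of the per-value-channel LTV system: for orthogonal $\mathbf{T}$ one has $\mathbf{T}^{-\top} = \mathbf{T}$, so the transformed transition $\mathbf{T}\mathbf{A}_t\mathbf{T}^\top = \mathbf{I} - \beta_t(\mathbf{T}\mathbf{k}_t)(\mathbf{T}\mathbf{k}_t)^\top$ stays symmetric and the readout becomes $(\mathbf{T}\mathbf{q}_t)^\top$, which is exactly the row-wise form of your identity $\tilde{\mathbf{S}}_t = \mathbf{S}_t\mathbf{T}^\top$; your caveat that the statement concerns the vectors actually entering the recurrence likewise matches the paper's separate ``Convolution Constraint'' discussion.
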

Our structured pruning framework, presented in the subsequent section, makes use of this result by applying a \emph{semi}-orthogonal transformation jointly to queries and keys.

\subsection{State Size Reduction Requires Structured Pruning}
Existing pruning methods for Large Language Models generally rely on unstructured or semi-structured sparsity~\citep{frantar2023sparsegpt,sun2023simple}. However, these methods do not reduce the state dimension of the dynamical system, as they leave the query and key vectors dense (see Figure~\ref{fig:one}). In particular, they do not speed up the sequence mixer. To address this issue, we focus on \emph{structured} pruning.

Formally, given a \emph{semi-orthogonal}\footnote{That is, $\bT\bT^\top = \bI_{d_k'}$.} matrix $\bT \in \mathbb R^{d_k', d_k}$ with target dimension $d_k' < d_k$, we apply it \emph{jointly} to the keys and queries before computing the dynamical system:
\begin{equation*}
    (\bmk_t, \bmq_t) \to (\bT\bmk_t, \bT \bmq_t) \quad \forall t = 1,\ldots,T.
\end{equation*}
By Proposition~\ref{prop:invariance}, if $d_k' = d_k$, this transformation yields an equivalent realization of the dynamical system. Our goal is finding a $\mathbf{T}$ with $d_k' \ll d_k$ that entails a small error. A natural framework for deriving such a transformation is Principal Component Analysis (PCA). By computing the empirical covariance $\hat{\mathbf{\Sigma}} = \frac{1}{N} \sum_{t} \mathbf{k}_t \mathbf{k}_t^\top$ over a calibration set, one can derive a projection matrix $\mathbf{T}$ from the top-$d_k'$ principal components. This transformation rotates the state-space to maximize the preserved variance. While theoretically optimal for key-reconstruction~\citep{eckart1936approximation,mirsky1960symmetric}, this approach is not compatible with the causal convolutions employed in linear attention architectures.

\paragraph{The Convolution Constraint.}
To achieve actual wall-clock speedup, $\mathbf{T}$ must be absorbed into the weight matrices $\mathbf{W}_{\bK}$ and $\mathbf{W}_{\bQ}$. However, linear attention models typically employ causal convolutions~\citep{so2021searching,fu2022hungry,yang2023gated,gu2024mamba,dao2024transformers,yang2024parallelizing} to queries and keys (see Algorithm~\ref{alg:deltanet} in the Appendix for a prototypical DeltaNet layer). While the sequence mixer is compatible with orthogonal transformations (Proposition~\ref{prop:invariance}), depthwise convolutions are not. A dense semi-orthogonal matrix $\mathbf{T}$ (such as one derived from PCA) would mix channels, misaligning them with their corresponding convolution filters. Preserving the dynamics under such rotations would require converting the efficient diagonal convolution filters into expensive dense matrices. Therefore, we effectively constrain our framework to \emph{axis-aligned} transformations. The case of general orthogonal matrices is discussed in Appendix~\ref{appendix:conv-gen-orth}.

\subsection{Axis-Aligned Methods}
\label{sec:axis-aligned}
To circumvent intricacies related to convolutions, we focus on transformations that preserve the channel-wise convolutions. Formally, this restricts transformations to \emph{axis-aligned} semi-orthogonal matrices:
\begin{definition}[Axis-Aligned Transformations]
\label{def:axis-aligned}
    Let \mbox{$\mathcal{I} = \{i_1, \dots, i_{d_k'}\} \subseteq \{1, \dots, d_k\}$} be a set of distinct indices with cardinality $d_k' < d_k$. We define the axis-aligned projection matrix $\mathbf{P}_{\mathcal{I}} \in \mathbb{R}^{d_k', d_k}$ as the matrix whose rows are the standard basis vectors corresponding to $\mathcal{I}$:
    \begin{equation*}
        \mathbf{P}_{\mathcal{I}} \coloneqq [\mathbf{e}_{i_1}; \dots; \mathbf{e}_{i_{d_k'}}]^\top.
    \end{equation*}
\end{definition}
This matrix is semi-orthogonal. Applying it corresponds to a structural pruning operation that selects the subset of channels $\mathcal{I}$ and discards the rest. Crucially, this operation preserves the independence of the remaining channels:

\begin{proposition}[Compatibility with Depthwise Convolutions]
\label{prop:axis-aligned-conv}
    Let $\operatorname{Conv1D}(\mathbf{X}, \mathbf{W})$ denote a depthwise convolution on input $\mathbf{X} \in \mathbb{R}^{T, d_k}$ with per-channel filters $\mathbf{W} \in \mathbb{R}^{d_k, l}$ of size $l$. Let $\mathbf{P}_{\mathcal{I}}$ be an axis-aligned semi-orthogonal transformation. Then
    \begin{equation*}
        \operatorname{Conv1D}(\mathbf{X}, \mathbf{W})\mathbf{P}_{\mathcal{I}}^\top = \operatorname{Conv1D}(\mathbf{X}\mathbf{P}_{\mathcal{I}}^\top, \mathbf{P}_{\mathcal{I}}\mathbf{W}).
    \end{equation*}
\end{proposition}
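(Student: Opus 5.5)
The plan is a direct entrywise verification that rests on two facts. First, a depthwise convolution acts on each channel separately. Second, right-multiplying by $\mathbf{P}_{\mathcal{I}}^\top$ selects columns, while left-multiplying $\mathbf{W}$ by $\mathbf{P}_{\mathcal{I}}$ selects rows.

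\textbf{Step 1: fix the definition.} For $\mathbf{X} \in \mathbb{R}^{T,d_k}$ and $\mathbf{W}\in\mathbb{R}^{d_k,l}$, the causal depthwise convolution is
\begin{equation*}
\operatorname{Conv1D}(\mathbf{X},\mathbf{W})_{t,c} = \sum_{j=1}^{l} W_{c,j}\, X_{t-j+1,c},
\end{equation*}
with the convention $X_{s,c}=0$ for $s\le 0$ (causal zero padding). The exact indexing of the filter taps does not matter. What matters is that output channel $c$ depends only on column $c$ of $\mathbf{X}$ and row $c$ of $\mathbf{W}$.

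\textbf{Step 2: record how $\mathbf{P}_{\mathcal{I}}$ acts.} From Definition~\ref{def:axis-aligned}:
\begin{itemize}
\item For any $\mathbf{M}\in\mathbb{R}^{T,d_k}$, column $m$ of $\mathbf{M}\mathbf{P}_{\mathcal{I}}^\top$ is column $i_m$ of $\mathbf{M}$, since $(\mathbf{M}\mathbf{P}_{\mathcal{I}}^\top)_{t,m} = \mathbf{M}_{t,:}\mathbf{e}_{i_m} = M_{t,i_m}$.
\item For $\mathbf{W}\in\mathbb{R}^{d_k,l}$, row $m$ of $\mathbf{P}_{\mathcal{I}}\mathbf{W}$ is row $i_m$ of $\mathbf{W}$.
\end{itemize}

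\textbf{Step 3: compare entry $(t,m)$ of both sides.} For the left-hand side,
\begin{equation*}
\bigl(\operatorname{Conv1D}(\mathbf{X},\mathbf{W})\mathbf{P}_{\mathcal{I}}^\top\bigr)_{t,m} = \operatorname{Conv1D}(\mathbf{X},\mathbf{W})_{t,i_m} = \sum_j W_{i_m,j}X_{t-j+1,i_m}.
\end{equation*}
For the right-hand side, write $\tilde{\mathbf{X}}=\mathbf{X}\mathbf{P}_{\mathcal{I}}^\top$ and $\tilde{\mathbf{W}}=\mathbf{P}_{\mathcal{I}}\mathbf{W}$. Then
\begin{equation*}
\operatorname{Conv1D}(\tilde{\mathbf{X}},\tilde{\mathbf{W}})_{t,m} = \sum_j \tilde W_{m,j}\tilde X_{t-j+1,m} = \sum_j W_{i_m,j}X_{t-j+1,i_m}.
\end{equation*}
The two expressions agree. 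The padded positions also match, because zero columns stay zero under selection.

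\textbf{Main obstacle.} There is no real difficulty. The only care needed is the dimension bookkeeping: the right-hand side is a depthwise convolution on $d_k'$ channels. One should also note that the argument uses only the fact that each row of $\mathbf{P}_{\mathcal{I}}$ is a standard basis vector, not its orthogonality. This is exactly why a dense $\mathbf{T}$ fails: a dense $\mathbf{T}$ mixes channels with different filters, so no per-channel filter bank reproduces the output.
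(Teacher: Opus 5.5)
Your proof is correct. The paper gives no proof of this proposition: it is stated in Section~\ref{sec:axis-aligned} and used directly, and the appendix contains no corresponding argument. Your entrywise check is therefore exactly the argument the paper leaves implicit.

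There is also an equivalent matrix-form version in the notation of the proof of Proposition~\ref{proposition:optimal-conv}. Write the convolution as $\mathbf{y}_t=\sum_{j}\operatorname{diag}(\mathbf{w}^{(j)})\mathbf{x}_{t-j}$ and use the intertwining identity $\mathbf{P}_{\mathcal{I}}\operatorname{diag}(\mathbf{w})=\operatorname{diag}(\mathbf{P}_{\mathcal{I}}\mathbf{w})\,\mathbf{P}_{\mathcal{I}}$.

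The same row-by-row computation gives a sharper version of your closing remark. A general $\mathbf{T}$ satisfies $\mathbf{T}\operatorname{diag}(\mathbf{w}^{(j)})=\operatorname{diag}(\mathbf{w}'^{(j)})\,\mathbf{T}$ for all lags, for some filters $\mathbf{w}'^{(j)}$, if and only if each row of $\mathbf{T}$ is supported on channels whose filters (rows of $\mathbf{W}$) coincide. This makes your statement about dense $\mathbf{T}$ precise. It also places the proposition (rows supported on single channels, arbitrary filters) and Lemma~\ref{lemma:shared-conv} (arbitrary $\mathbf{T}$, one shared filter) at the two extremes of a single criterion.
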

This result implies that key dimensions can be pruned by simply slicing the corresponding columns from the projection matrices $\mathbf{W}_{\bK}$ and $\mathbf{W}_{\bQ}$, along with the corresponding entries of the convolution weights $\mathbf{W}$.
In the following paragraphs, we present a plethora of different strategies to select the index set $\mathcal{I}$ for each head, ranging from simple magnitude-based heuristics to a novel rank-revealing approach. Recall that, by Proposition~\ref{prop:invariance}, every transformation must be applied to the queries and keys \textit{simultaneously}. To satisfy this requirement and to capture interactions between hidden state and queries, the proposed column selection mechanisms incorporate both queries and keys.

\subsubsection{Proposed Pruning Methods}

\paragraph{Weight-Magnitude-based ($\mathbf{L^1}$).}
As a baseline, we implement a weight-magnitude based pruner, defining the importance of the $j$-th channel as the sum of the $l^1$ norms of the corresponding columns in the query and key projection matrices: $s_j = \|\mathbf{W}_{\bQ, :j}\|_1 + \|\mathbf{W}_{\bK, :j}\|_1$. We rank these scores locally within each head and select the top-$d_k'$ indices to form the retained set $\mathcal{I}$.

\paragraph{S-Wanda.}
We adapt Wanda~\citep{sun2023simple} to our structured setting (coining the method \emph{S-Wanda}), defining the saliency of the $j$-th channel by aggregating the element-wise Wanda scores across the channel dimension:
\begin{equation*}
    s_j = \sum_{i=1}^{h} \left(|\mathbf{W}_{q, ij}| + |\mathbf{W}_{k, ij}|\right) \|\mathbf{X}_{:i}\|_2,
\end{equation*}
where $\|\mathbf{X}_{:i}\|_2$ is the $l^2$ norm of the $i$-th input feature computed over a small calibration set.
Unlike $L^1$, this metric accounts for the distribution of the input.

\paragraph{Sensitivity-based.}
We employ a gradient-based saliency criterion~\citep{lecun1989optimal,wang2019eigendamage,ma2023llm} to identify dimensions that maximally influence the training objective. It is the only method we consider that takes into account information beyond the current layer and is task-aware. We quantify the importance of the $j$-th key dimension using the first-order Taylor expansion of the loss on a calibration set:
\begin{equation*}
    s_j = \sum_{i} \left( \left| \mathbf{W}_{\bQ, ij} \nabla_{\mathbf{W}_{\bQ, ij}} \mathcal L \right| + \left| \mathbf{W}_{\bK, ij} \nabla_{\mathbf{W}_{\bK, ij}} \mathcal L \right| \right).
\end{equation*}
\paragraph{Rank-based (DRRQR).}
In light of our theoretical insights into the role of rank in linear attention models (Section~\ref{sec:why-utilization-matters}), we propose a simple algorithm that explicitly improves the conditioning of the keys. By Proposition~\ref{proposition:sandwich}, this increases the effective rank of the associative memory. \emph{Deep Rank Revealing QR} (\emph{DRRQR}) applies a \emph{Strong Rank Revealing QR} factorization~\citep{gu1996efficient} to the activation statistics $\mathbf{M} = [\bK, \bQ] \in \mathbb{R}^{2N, d_k}$ to select a subset of $d_k'$ columns that form a well-conditioned set of channels. The algorithm (see Algorithm~\ref{alg:drrqr_compact}) starts from a QR decomposition with column pivoting and then iteratively permutes columns between a current chosen basis and a set of candidates to satisfy numerical stability bounds; we detail the specific swap-gain metric and update rules in Appendix~\ref{app:drrqr_details}. This procedure guarantees that the leading triangular factor $\mathbf{A}_{d_k'}$ is well-conditioned. By explicitly lower-bounding $\sigma_{\min}(\mathbf{A}_{d_k'})$, \emph{DRRQR} minimizes $\kappa(\mathbf{A}_{d_k'})$ and thus implicitly increases rank utilization (see Proposition~\ref{proposition:stable-rank-relation-to-condition-number} in the Appendix). Unlike heuristic approaches, \emph{DRRQR} relies on deterministic guarantees for rank revelation of the keys, rendering the method tractable and interpretable.

\begin{algorithm}[t]
   \caption{Compact version of Deep Rank Revealing QR (\emph{DRRQR}). The full algorithm may be found Algorithm~\ref{alg:drrqr}.}
   \label{alg:drrqr_compact}
\begin{algorithmic}[1]
   \STATE {\bfseries Input:} Matrix $\mathbf{M} \in \mathbb{R}^{2N, d_k}$, Rank $d_k'$, Tolerance $f \ge 1$
   \STATE {\bfseries Output:} Selected Indices $\mathcal{I}$
   \STATE
   \STATE $[\mathbf{Q}, \mathbf{R}, \mathbf{\Pi}] \leftarrow \operatorname{QRCP}(\mathbf{M})$
   \STATE Partition $\mathbf{R} = \left(\begin{smallmatrix} \mathbf{A}_{d_k'} & \mathbf{B}_{d_k'} \\ \mathbf{0} & \mathbf{C}_{d_k'} \end{smallmatrix}\right)$, where $\mathbf{A}_{d_k'} \in \mathbb{R}^{d_k', d_k'}$
   \STATE Init $\omega_i(\mathbf{A}_{d_k'}) \! = \! \|(\mathbf{A}_{d_k'}^{-1})_{i,:}\|_2^{-1}, \gamma_j(\mathbf{C}_{d_k'}) \!=\! \|(\mathbf{C}_{d_k'})_{:,j}\|_2$
   \STATE
   \WHILE{True}
       \STATE Compute $\mathbf{U} = \mathbf{A}_{d_k'}^{-1}\mathbf{B}_{d_k'}$
       \STATE
       \IF{$\rho_{\operatorname{argmax}_{i,j} \sqrt{ |U_{ij}|^2 + (\gamma_j(\mathbf{C}_{d_k'}) / \omega_i(\mathbf{A}_{d_k'}))^2 }} \le f$}
           \STATE \textbf{break}
       \ENDIF
       \STATE
    \STATE Swap col $i^*$ of $\left(\begin{smallmatrix} \mathbf{A}_{d_k'} \\ \mathbf{0} \end{smallmatrix}\right)$ with col $j^*$ of $\left(\begin{smallmatrix} \mathbf{B}_{d_k'} \\ \mathbf{C}_{d_k'} \end{smallmatrix}\right)$
    \STATE Update $\mathbf{R}$, $\omega(\mathbf{A}_{d_k'})$, and $\gamma(\mathbf{C}_{d_k'})$
   \ENDWHILE
   \STATE {\bfseries Return:} $\mathbf{\Pi}[1 : d_k']$
\end{algorithmic}
\end{algorithm}

\section{Related Work}
\label{sec:related-work}

\paragraph{Rank Considerations.}
Recent empirical results imply that linear attention models do not manage their associative memory well~\citep{siems2025deltaproduct,parnichkun2025quantifying}, indicating that the memory of linear attention models often exhibits a low-rank structure. Our work builds on these observations, forcing the model to operate in a lower-dimensional space at increased rank utilization and minimal decrease in {effective rank~\citep{ipsen2025stable}.
We note that rank collapse is a well-known phenomenon in transformers. \citet{dong2021attention} for instance show that skip connections help alleviate it, while~\citet{noci2022signal} show that collapsed queries and keys hinder gradient flow at initialization. In this work, we tie the rank of the queries and keys to that of the hidden state in linear attention and show how a skewed spectrum of the associative memory can amplify query noise during readout.

\begin{table*}[t]
\caption{Comprehensive post-RFT evaluation of Gated DeltaNet 1.3B. We report Wikitext and Lambada perplexities, the average zero-shot generation accuracy (\textbf{ZS},~\citet{eval-harness}) as well as average retrieval accuracy (\textbf{Ret},~\citet{arora2024just}) across varying compression ratios (best results in \textbf{bold}, second best \underline{underlined}).}
\label{tab:full_results_gdn_13B_paper}
\centering
\begin{small}
\setlength{\tabcolsep}{3pt}
\newcommand{\msep}{\hspace{1.5em}}
\newcommand{\msepp}{\hspace{2em}}
\newcommand{\mseppp}{\hspace{0.75em}}
\begin{sc}
\begin{tabular}{l cccc @{\msep} cccc @{\msep} cccc @{\msep} cccc}
\toprule
& \multicolumn{4}{c@{\msep}}{\textbf{75\% Compression}} & \multicolumn{4}{c@{\msepp}}{\textbf{50\% Compression}} & \multicolumn{4}{c@{\msepp}}{\textbf{40\% Compression}} & \multicolumn{4}{c@{\mseppp}}{\textbf{30\% Compression}} \\
Method & \thead{Wiki\\$\downarrow$} & \thead{LMB\\$\downarrow$} & \thead{ZS\\ $\uparrow$} & \thead{Ret\\ $\uparrow$} & \thead{Wiki\\$\downarrow$} & \thead{LMB\\$\downarrow$} & \thead{ZS\\ $\uparrow$} & \thead{Ret\\ $\uparrow$} & \thead{Wiki\\$\downarrow$} & \thead{LMB\\$\downarrow$} & \thead{ZS\\ $\uparrow$} & \thead{Ret\\ $\uparrow$} & \thead{Wiki\\$\downarrow$} & \thead{LMB\\$\downarrow$} & \thead{ZS\\ $\uparrow$} & \thead{Ret\\ $\uparrow$} \\
\midrule
Rand    & 19.3 & 18.1 & 54.6 & 23.9 & 16.9 & \underline{11.5} & 57.8 & 32.1 & 16.5 & \underline{10.7} & 58.3 & 34.6 & \underline{16.1} & \underline{10.2} & 58.7 & \underline{37.3} \\
L1      & 18.6 & 16.8 & 55.3 & 25.7 & 16.8 & 13.0 & 57.2 & 32.9 & 16.5 & 11.9 & 57.8 & 34.6 & 16.3 & 11.3 & 58.3 & 35.4 \\
S-Wanda & 18.2 & 15.1 & 55.4 & 26.2 & 16.6 & 11.7 & 57.8 & \underline{33.2} & 16.4 & 11.5 & 58.1 & 34.4 & 16.1 & 11.0 & 58.6 & 37.2 \\
Grad    & \textbf{17.8} & \textbf{12.5} & \textbf{56.8} & \textbf{26.9} & \underline{16.4} & \textbf{10.5} & \textbf{58.3} & \textbf{34.3} & \textbf{16.1} & \textbf{10.1} & \underline{58.6} & \textbf{35.8} & \textbf{15.9} & \textbf{9.9} & \textbf{59.0} & \textbf{38.0} \\
DRRQR   & \underline{17.9} & \underline{14.7} & \underline{56.2} & \underline{26.1} & \textbf{16.3} & 12.0 & \underline{58.0} & 33.0 & \textbf{16.1} & 11.0 & \textbf{58.7} & \underline{35.0} & \textbf{15.9} & 10.7 & \underline{58.8} & 36.5 \\
\midrule
Baseline & 16.8 & 9.7 & 59.4 & 40.3 & \multicolumn{4}{c@{\msep}}{\raisebox{0.5ex}{\rule{1.2cm}{0.4pt}}} & \multicolumn{4}{c@{\msep}}{\raisebox{0.5ex}{\rule{1.2cm}{0.4pt}}} & \multicolumn{4}{c}{\raisebox{0.5ex}{\rule{1.2cm}{0.4pt}}} \\\bottomrule
\end{tabular}
\end{sc}
\end{small}
\end{table*}

\paragraph{Pruning Methods.}
Conventional pruning methods are usually either unstructured or semi-structured~\citep{lecun1989optimal,frantar2023sparsegpt,sun2023simple} and thus require specialized kernels to realize sparsity-induced speedups. SparseGPT~\citep{frantar2023sparsegpt} frames pruning as a local reconstruction problem. Wanda~\citep{sun2023simple} proposes a simpler, gradient-free metric based on the product of weight magnitudes and input activation norms. Several works have sought to enhance this metric by re-incorporating gradients. GBLM~\citep{das2023beyond} and Pruner-Zero~\citep{dong2024pruner} utilize gradients derived from full-model backpropagation to refine pruning scores. Wanda++~\citep{yang2025wanda++} introduces regional gradients to decrease the computational cost.

A challenge in structured pruning is the handling of coupled structures, where removing a neuron or head in one layer breaks dimensional consistency in subsequent layers. \citet{ma2023llm} (LLM-Pruner) address this by constructing dependency graphs to identify groups of parameters that must be excised simultaneously. Similarly, our work addresses the structural coupling of linear attention models, specifically the dependency between the projection matrices of queries and keys and the depthwise convolutions.

SliceGPT~\citep{ashkboos2024slicegpt} prunes a model's backbone. This affects the rows of $\bW_\bQ$ and $\bW_\bK$. In particular, the attention mechanism operates in the same space pre- and post-slicing. Our work can be considered complementary to this approach. By pruning columns of $\bW_\bQ$ and $\bW_\bK$, we explicitly reduce the dimension of the attention mechanism.

\paragraph{Quantization Methods.} 
Recent Transformer quantization approaches like QuaRot~\citep{ashkboos2024quarot} and SpinQuant~\citep{liu2024spinquant} apply orthogonal rotations to both the residual stream and the state-space to disperse outliers. While effective for softmax Transformers, these state-space rotations are incompatible with the causal convolutions typically employed in linear attention models. Our framework instead utilizes axis-aligned transformations, which are compatible with causal convolutions and allow for state-size reduction without compromising the convolution filters.

\section{Experiments}
\label{sec:exp}
We use \emph{flame}~\citep{yang2025flame} for recovery fine-tuning (RFT,~\citet{ashkboos2024slicegpt}) and pre-training the 370M parameter models. RFT employs LoRA~\citep{hulora} on a single H100 GPU. The details on our experimental setup may be found in Appendix~\ref{app:exp-setup}. Besides the methods introduced in Section~\ref{sec:method}, we also report a \emph{Rand} baseline which randomly selects key channels to drop.

\subsection{Results}
\label{sec:results}
We evaluate our structured pruning methods on DeltaNet and Gated DeltaNet~\citep{yang2024parallelizing,yang2024gated} at the 370M and 1.3B parameter scale. Specifically, we compare the optimization-based approaches \emph{DRRQR} (rank-optimal) and \emph{Grad} (gradient-based) with the magnitude-based ones.

\begin{figure}[ht]
    \centering
        \input{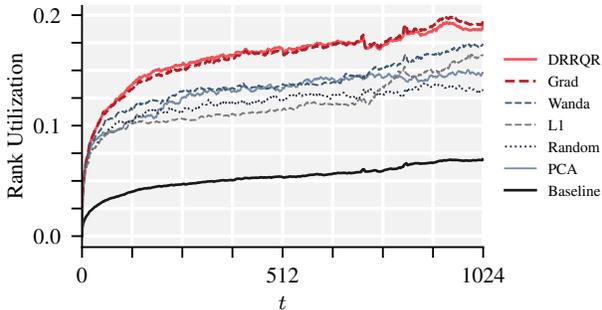}%
    \caption{Rank utilization of DeltaNet 370M as a function of the token index for a random sample of Fineweb-Edu of length $1024$, averaged over layers and heads, at a compression ratio of 75\% pre-RFT. All models except for the baseline have the same maximum capacity and are thus directly comparable.}
    \label{fig:rank-util-curve}
\end{figure}

Figure~\ref{fig:rank-util-curve} shows the rank utilization during a forward pass through DeltaNet 370M at a compression ratio of 75\%. We can see that the two most powerful compression methods, \emph{Grad} and \emph{DRRQR}, have the largest rank utilization. Figure~\ref{fig:histograms-main} furthermore shows how DRRQR removes the tail of the hidden states' spectrum (see Appendix~\ref{app:singular-value-spectrum} for more results).

\paragraph{\textbf{Language Modeling}.}
We observe that pruning 50\% of the key dimension entails only a small degradation in Wikitext and Lambada perplexity, especially when using the \emph{Grad} and \emph{DRRQR} pruners, even before RFT (see Table~\ref{tab:pre_rft_50_gated_only}). For instance, at a 50\% compression ratio, Gated DeltaNet 1.3B’s perplexity on Wikitext increases modestly from 16.8 to 17.3. Even at 75\%, the perplexity only goes up by about two points. This finding suggests that these models seem to effectively use at most half of their available capacity for next token prediction.

\begin{table}[h]
\caption{Pre-RFT Performance of Gated DeltaNet models at 50\% compression, evaluated using Wikitext and Lambada perplexities.}
\label{tab:pre_rft_50_gated_only}
\centering
\begin{small}
\setlength{\tabcolsep}{3.5pt}
\providecommand{\msep}{\hspace{2em}}
\providecommand{\msepp}{\hspace{1em}} 
\begin{sc}
\begin{tabular}{l cc @{\msep} cc}
\toprule
& \multicolumn{2}{c@{\msep}}{\textbf{370M}} & \multicolumn{2}{c@{\msepp}}{\textbf{1.3B}} \\
Method & \thead{Wiki\\$\downarrow$} & \thead{LMB\\$\downarrow$} & \thead{Wiki\\$\downarrow$} & \thead{LMB\\$\downarrow$} \\
\midrule
L1      & 41.8 & 156.6 & 26.5 & 34.0 \\
Rand    & 35.5 & 50.7 & 18.8 & 14.4 \\
S-Wanda & 40.0 & 141.5 & 21.9 & 24.1 \\
Grad    & \underline{33.3} & \textbf{39.2} & \underline{17.4} & \textbf{10.1} \\
DRRQR   & \textbf{31.6} & \underline{39.4} & \textbf{17.3} & \underline{14.2} \\
\midrule
\textit{Base} & \textit{28.8} & \textit{35.9} & \textit{16.8} & \textit{9.7} \\
\bottomrule
\end{tabular}
\end{sc}
\end{small}
\end{table}

Table~\ref{tab:full_results_gdn_13B_paper} contains extended post-RFT results for Gated DeltaNet 1.3B (more comprehensive breakdowns are provided in Tables~\ref{tab:unified_summary_50}-\ref{tab:recall-gdn-13B} in the Appendix). The zero-shot common sense reasoning scores remain robust under compression. For instance, Gated DeltaNet 1.3B maintains an average of 58.3 at 50\% compression when compressed via \emph{Grad}, around a one-point drop from the 59.4 baseline. The long-range retrieval tasks \emph{FDA} and \emph{SWDE} show higher sensitivity to state reduction. However, for more conservative pruning ratios around 30\%, retrieval capabilities stay competitive.

We generally find the rank-based method \emph{DRRQR} to perform competitively with the gradient-saliency based \emph{Grad}, even though it is local and task-agnostic. \emph{Grad}, on the other hand, is non-local and removes weight in a way that entails a minimal increase in perplexity. This highlights the influence of our rank considerations on model performance.

\paragraph{A Note on PCA.} Our theoretical analysis in Section~\ref{sec:method} suggests a problem with PCA due to misaligned causal convolutions. Our experiments empirically validate those concerns, finding that PCA-based pruning generally falls short of axis-aligned methods. We provide a detailed discussion on those experiments in Appendix~\ref{appendix:results-pca}.

\paragraph{Ablation on Column Selection.}
We investigate the impact of selecting columns based on just keys, just queries, or both. Our ablation study (detailed in Appendix~\ref{app:ablation-kq}) reveals that magnitude-based heuristics (\emph{L1}, \emph{S-Wanda}) are unstable when targeting keys, performing best when restricted to queries. In contrast, the two best methods, \emph{Grad} and \emph{DRRQR}, consistently achieve the lowest perplexity when using a joint selection scheme. For comparability, all results in the main body of this paper use joint selection schemes for all methods. However, even when comparing the optimal configuration for magnitude based approaches (queries-only) against our proposed methods (queries and keys), \emph{Grad} and \emph{DRRQR} still demonstrate superior performance. 

\paragraph{Speedup.}
Our structured pruning methods apply a joint semi-orthogonal projection to the keys and queries, effectively reducing the per-head key dimension from $d_k$ to $d_k' < d_k$.
We benchmark the sequence mixer throughput and peak VRAM usage on an NVIDIA H100 GPU (see Table~\ref{tab:mixer_scaling}). Comparing the baseline ($d_k=128$) against compressed variants ($d_k \in \{64, 32\}$): For DeltaNet, a 50\% reduction yields approximately a $1.33\times$ speedup in terms of throughput. A 75\% reduction yields a speedup of $\approx1.58\times$. Peak VRAM usage decreases by 28\% and 42\%, respectively.

\begin{table}[ht]
\centering
\caption{Training throughput (\textbf{TPS}, tokens per second) of sequence mixers on an NVIDIA H100 (batch size $32$, sequence length $2048$, number of heads $16$, hidden dimension $2048$). We fix the per-head value dimension at $128$ and vary the per-head key dimension $d_k$.}
\label{tab:mixer_scaling}
\resizebox{\columnwidth}{!}{%
\begin{tabular}{lccccc}
\toprule
\textbf{Model} & \bm{$d_k$} & \textbf{TPS} & \textbf{Speedup} & \textbf{Memory} & \textbf{Ratio} \\
\midrule
DeltaNet & 128 & 8.1M & 1.00$\times$ & 6.33GiB & 1.00 \\
 & 64 & 10.8M & 1.34$\times$ & 4.57GiB & 0.72 \\
 & 32 & 12.9M & 1.60$\times$ & 3.70GiB & 0.58 \\
\midrule
Gated DeltaNet & 128 & 7.9M & 1.00$\times$ & 6.35GiB & 1.00 \\
 & 64 & 10.4M & 1.32$\times$ & 4.59GiB & 0.72 \\
 & 32 & 12.2M & 1.55$\times$ & 3.71GiB & 0.58 \\
\bottomrule
\end{tabular}%
}
\end{table}

\section{Discussion}
Motivated by the low-rank structure of the associative memory in linear attention models, we propose a state-size reduction framework that selects a subset of informative key and query columns while discarding the remainder. In addition to adapting several pruning strategies, we introduce a method explicitly designed to improve the conditioning of the keys. We further provide a rigorous analysis of the role of rank in linear attention, showing that low-rank structure can amplify query noise and govern retrieval error, as well as poorly condition the query gradients. Finally, we present extensive empirical evaluations demonstrating the practical efficiency and effectiveness of our structured pruning framework.

Our empirical results show that the sequence mixer can be compressed by up to 50\% while incurring only a minor increase in perplexity. However, our findings also highlight a limitation of the proposed framework: reducing the state size can lead to performance drops on some recall tasks. This observation is well known for linear attention models \cite{arora2024just} and is a primary motivation for hybrid architectures that combine linear and softmax attention. Accordingly, a promising direction for future work would be to apply our structured pruning approach to hybrid models, where the additional softmax attention layers may help mitigate performance losses on recall-intensive tasks.

\section*{Impact Statement}
This paper presents work whose goal is to advance the field
of Machine Learning. There are many potential societal
consequences of our work, none of which we feel must be
specifically highlighted here.

\section*{Acknowledgements}
PN is supported by the Max Planck ETH Center for Learning Systems. This work was supported in part by the Hector Foundation.

The authors would like to thank Shlomo Libo Feigin, Neehal Tumma, Sajad Movahedi, Timur Carstensen, Patrik Wolf,  Heinrich Campe, and Benedict Armstrong for the interesting discussions and valuable feedback on this work.

\bibliography{example_paper}
\bibliographystyle{icml2026}

\newpage
\appendix
\onecolumn
\section{Training Details}
\label{app:exp-setup}
We use the \emph{flame}~\citep{yang2025flame} library for recovery fine-tuning (RFT,~\citet{ashkboos2024slicegpt}) and pre-training the 370M parameter models. The latter uses $10$ billion tokens of Fineweb-Edu. The larger DeltaNet and Gated DeltaNet models are taken from fla-hub~\footnote{https://huggingface.co/collections/fla-hub/deltanet} and m-a-p~\footnote{https://huggingface.co/m-a-p/1.3B-100B-GatedDeltaNet-pure}, respectively.

We perform RFT on a single H100 GPU using LoRA~\citep{hulora} with rank $r=16$ $\alpha=32$, using $32k$ samples of Fineweb-Edu. During training, we use: a batch size of $16$, training on sequences of length $2048$. After warming up for 5\% of the total steps, we decay the learning rate from $10^{-4}$ down to $10^{-5}$.  We furthermore unfreeze the causal convolutions, which make up just a fraction of the total parameters. As suggested by~\citet{ma2023llm}, we furthermore apply knowledge-distillation during RFT using the original model.

For the compression methods requiring a calibration set, we use $128$ samples of Fineweb-Edu. \emph{DRRQR} uses a random subsample of $5k$ keys and queries each.

\section{Additional Material}
Algorithm~\ref{alg:deltanet} describes a typical DeltaNet~\citep{yang2024parallelizing} layer.

\begin{center}
    \begin{minipage}{0.7\textwidth}
    
    \begin{algorithm}[H]
       \caption{A Typical DeltaNet Layer}
       \label{alg:deltanet}
    \begin{algorithmic}[1]
       \small 
       \STATE {\bfseries Input:} Hidden states $\mathbf{X} \in \mathbb{R}^{T, d}$, Previous state $\mathbf{S}_{0}$
       \STATE {\bfseries Parameters:} $\mathbf{W}_q, \mathbf{W}_k \in \mathbb R^{d_k, h}$, $\mathbf{W}_v \in \mathbb R^{d_v, h}$, $\mathbf{W}_o \in \mathbb{R}^{h, d_v}$, $\mathbf{W}_\beta \in \mathbb R^{1, h}$.
       \STATE
       \STATE \textit{// 1. Projections and Local Mixing (Short Convolutions)}
       \STATE $\mathbf{q} \leftarrow \operatorname{SiLU}(\operatorname{Conv1D}(\mathbf{X}\mathbf{W}_q))$
       \STATE $\mathbf{k} \leftarrow \operatorname{SiLU}(\operatorname{Conv1D}(\mathbf{X}\mathbf{W}_k))$
       \STATE $\mathbf{v} \leftarrow \operatorname{SiLU}(\operatorname{Conv1D}(\mathbf{X}\mathbf{W}_v))$
       \STATE $\beta \leftarrow \sigma(\mathbf{X}\mathbf{W}_\beta)$
       \STATE
       \STATE \textit{// 2. Normalization}
       \STATE $\mathbf{q} \leftarrow \mathbf{q} / \|\mathbf{q}\|_2$
       \STATE $\mathbf{k} \leftarrow \mathbf{k} / \|\mathbf{k}\|_2$
       \STATE
       \STATE \textit{// 3. Delta Rule}
       \FOR{$t = 1$ {\bfseries to} $T$}
           \STATE $\mathbf{S}_t \leftarrow \mathbf{S}_{t-1} (\mathbf{I} - \beta_t \mathbf{k}_t \mathbf{k}_t^\top) + \beta_t \mathbf{v}_t \mathbf{k}_t^\top$
           \STATE $\mathbf{o}_t \leftarrow \mathbf{S}_t \mathbf{q}_t$
       \ENDFOR
       \STATE Concatenate heads to form $\mathbf{O} \in \mathbb{R}^{T, d}$
       \STATE
       \STATE \textit{// 4. Output Projection}
        \STATE $\mathbf{O} \leftarrow \operatorname{RMSNorm}(\mathbf{O})$
       \STATE $\mathbf{Y} \leftarrow \mathbf{O}\mathbf{W}_o$
       \STATE
       \STATE {\bfseries Return:} $\mathbf{Y}$
    \end{algorithmic}
    \end{algorithm}
    
    \end{minipage}
\end{center}

\subsection{Structured Pruning from the Test-Time Regression Perspective}
\label{app:ttr}
Linear attention, and specifically DeltaNet, are usually interpreted as performing gradient descent on a linear regression objective~\citep{schlag2021linear,yang2024parallelizing}, where the hidden state $\mathbf{S}_t$ acts as the fast weight matrix trained to map keys $\mathbf{k}_t$ (inputs) to values $\mathbf{v}_t$ (targets). In this framework, our proposed structured pruning strategy admits another interpretation, functioning as a \emph{feature selection} step applied to the input of the online learner. By restricting the input features to the subspace spanned by the transformation matrix $\bT$, we effectively constrain the hypothesis class of the regression. This forces the fast weights to ignore the null space of $\mathbf{T}$, acting as a form of regularization.

\begin{center}
    \begin{minipage}{0.7\textwidth}
\begin{algorithm}[H]
   \caption{Deep Rank Revealing QR (\emph{DRRQR}). Adapted from Algorithm 4 of \citet{gu1996efficient}.}
   \label{alg:drrqr}
\begin{algorithmic}[1]
   \STATE {\bfseries Input:} Activation Matrix $\mathbf{A} \in \mathbb{R}^{2N, d_k}$, Target Rank $d_k'$, Tolerance $f \ge 1$
   \STATE {\bfseries Output:} Selected Indices $\mathcal{I}$
   \STATE
   \STATE \textit{// 1. Initialization: QR with Column Pivoting}
   \STATE $[\mathbf{Q}, \mathbf{R}, \mathbf{\Pi}] \leftarrow \operatorname{QRCP}(\mathbf{A})$
   \STATE Partition $\mathbf{R} = \begin{pmatrix} \mathbf{A}_{d_k'} & \mathbf{B}_{d_k'} \\ \mathbf{0} & \mathbf{C}_{d_k'} \end{pmatrix}$, where $\mathbf{A}_{d_k'} \in \mathbb{R}^{d_k', d_k'}$
   \STATE Initialize $\omega_i(\mathbf{A}_{d_k'}) = 1/\|(\mathbf{A}_{d_k'}^{-1})_{i,:}\|_2$ and $\gamma_j(\mathbf{C}_{d_k'}) = \|(\mathbf{C}_{d_k'})_{:,j}\|_2$
   \STATE
   \STATE \textit{// 2. Iterative Swapping for Stability}
   \WHILE{True}
       \STATE Compute $\mathbf{U} = \mathbf{A}_{d_k'}^{-1}\mathbf{B}_{d_k'}$
       \STATE
       \STATE \textit{// Calculate swap gain metric $\rho_{ij}$ for all pairs}
       \STATE \textit{// Checks if $U_{ij}$ is large or if residual $\gamma_j$ is large relative to basis $\omega_i$}
       \STATE $\rho_{ij} \leftarrow \sqrt{ |U_{ij}|^2 + (\gamma_j(\mathbf{C}_{d_k'}) / \omega_i(\mathbf{A}_{d_k'}))^2 }$
       \STATE
       \STATE Let $(i^*, j^*) = \operatorname{argmax}_{i,j} \rho_{ij}$
       \IF{$\rho_{i^*j^*} \le f$}
           \STATE \textbf{break}  // Strong RRQR condition met
       \ENDIF
       \STATE
       \STATE Swap column $i^*$ of $\mathbf{A}_{d_k'}$ with column $j^*$ of $\mathbf{C}_{d_k'}$
       \STATE Update $\mathbf{R}$, $\omega(\mathbf{A}_{d_k'})$, and $\gamma(\mathbf{C}_{d_k'})$
   \ENDWHILE
   \STATE
   \STATE $\mathcal{I} \leftarrow \mathbf{\Pi}[1 : d_k']$
   \STATE {\bfseries Return:} $\mathcal{I}$
\end{algorithmic}
\end{algorithm}
\end{minipage}
\end{center}

\subsection{Details on Rank Revealing QR (RRQR)}
\label{app:drrqr_details}

In this section, we detail the \emph{Strong Rank-Revealing QR} (RRQR) algorithm (see Algorithm~\ref{alg:drrqr}) proposed by \citet{gu1996efficient}, which forms the basis of our \emph{DRRQR} pruning method.

\paragraph{Mathematical Formulation.}
Let $\mathbf{M} \in \mathbb{R}^{m, n}$ be the input matrix of concatenated keys and queries (in the main text, $m=2N$ and $n=d_k$). We seek a permutation $\mathbf{\Pi}$ and a target rank $k$ (in our main text denoted as $d_k'$) such that the QR factorization
\begin{equation}
    \mathbf{M}\mathbf{\Pi} = \mathbf{Q} \begin{pmatrix} \mathbf{A}_k & \mathbf{B}_k \\ \mathbf{0} & \mathbf{C}_k \end{pmatrix}
\end{equation}
satisfies specific bounds on the singular values of the leading principal submatrix $\mathbf{A}_k \in \mathbb{R}^{k, k}$ and the trailing submatrix $\mathbf{C}_k \in \mathbb{R}^{(m-k), (n-k)}$. Specifically, a Strong RRQR factorization guarantees that $\sigma_{\min}(\mathbf{A}_k)$ is bounded away from zero and $\|\mathbf{C}_k\|_2 = \sigma_{\max}(\bC_k)$ is small. This implies that $\mathbf{A}_k$ is well-conditioned.

\paragraph{Geometric Intuition.}
The algorithm aims to select $k$ columns that maximize the volume of the parallelotope formed by the selected column vectors. Since $|\det(\mathbf{A}_k)| = \prod_{i=1}^k \sigma_i(\mathbf{A}_k)$, maximizing the determinant pushes the smallest singular values upward, thereby minimizing the condition number $\kappa(\mathbf{A}_k)$.

\paragraph{The Swap Criterion.}
Let $\mathbf{\Pi}$ be the current permutation column permutation matrix. To determine if swapping the $i$-th column of the basis (where $1 \le i \le k$) with the $j$-th column of the residual (where $1 \le j \le n-k$) improves the factorization, we analyze the ratio of the new determinant to the current determinant.

\citet{gu1996efficient} derive an efficiently computable metric for this ratio. Let $\mathbf{U} = \mathbf{A}_k^{-1}\mathbf{B}_k$. We define:
\begin{itemize}
    \item $\gamma_j(\mathbf{C}_k) = \| (\mathbf{C}_k)_{:,j} \|_2$: The $\ell_2$-norm of the $j$-th column of the residual block.
    \item $\omega_i(\mathbf{A}_k) = 1 / \| (\mathbf{A}_k^{-1})_{i,:} \|_2$: The reciprocal of the $\ell_2$-norm of the $i$-th row of the inverse basis.
\end{itemize}

By Lemma 3.1 of \citet{gu1996efficient}, the potential gain $\rho_{ij}$ from swapping basis column $i$ with candidate column $j$ is given by:
\begin{equation}
    \rho_{ij} = \sqrt{ |U_{ij}|^2 + \left( \frac{\gamma_j(\mathbf{C}_k)}{\omega_i(\mathbf{A}_k)} \right)^2 }.
\end{equation}
If $\rho_{ij} > f$ for a chosen tolerance factor $f \ge 1$, swapping these columns guarantees an increase in $|\det(\mathbf{A}_k)|$ by a factor of at least $\rho_{ij}$. The first term, $|U_{ij}|^2$, captures the linear dependence of the candidate vector on the current basis vector, while the second term captures the magnitude of the candidate relative to the stability of the basis vector.

\paragraph{Algorithm and Update Rules.}
The \emph{DRRQR} procedure (Algorithm~\ref{alg:drrqr}) proceeds as follows:
\begin{enumerate}
    \item \textbf{Initialization:} Compute an initial factorization using standard QRCP. This provides a baseline $\mathbf{\Pi}$, $\mathbf{A}_k$, $\mathbf{B}_k$, and $\mathbf{C}_k$.
    \item \textbf{Identification:} Search for a pair of indices $(i, j)$ such that $\rho_{ij} > f$. Efficient search strategies maximize over $j$ for fixed $i$, or simply identify the first valid pair.
    \item \textbf{Update:} If a valid pair is found:
    \begin{enumerate}
        \item Permute columns to swap indices $i$ and $j+k$.
        \item Retriangularize the matrix $\mathbf{R}$ using Givens rotations to restore the upper-triangular structure of $\mathbf{A}_k$. This costs $O(k(n-k))$ operations rather than the $O(n^3)$ of a full factorization.
        \item Update the auxiliary vectors $\omega(\mathbf{A}_k)$ and $\gamma(\mathbf{C}_k)$ using the formulas provided in Section 4 of \citet{gu1996efficient}.
    \end{enumerate}
    \item \textbf{Termination:} The process repeats until no pair $(i, j)$ satisfies $\rho_{ij} > f$, ensuring the matrix satisfies the strong rank-revealing condition.
\end{enumerate}

\subsection{Derivation of Tighter Error Bounds}
\label{appendix:tighter-bounds}

In this section, we derive tighter bounds for the retrieval error ratio. The proof of Theorem~\ref{thm:snr-degradation} relies on the loose upper bound $\|\bS \bmn\|_2 \leq \|\bS\|_F \|\bmn\|_2$.

However, if we allow using the condition number $\kappa(\bS) = \sigma_1 / \sigma_d$ as a measure for the anisotropy of the associative memory, it follows readily from standard perturbation theory that
\begin{equation}
\label{eq:double-bound-kappa}
    \frac{1}{\kappa(\bS)} \leq \frac{\| \mathbf{o} - \mathbf{o}^* \|_2 / \|\mathbf{o}^*\|_2}{\|\mathbf{n}\|_2 / \|\mathbf{q}^*\|_2} \leq \kappa(\bS).
\end{equation}
Indeed, it holds that~\citep[Lecture 12]{trefethen2022numerical} $\sigma_d \|\bmx\|_2 \leq \|\bS \bmx\|_2 \leq \sigma_1 \|\bmx\|_2$ for any vector $\bmx$. Applying these inequalities to $\bS \bmn$ and $\bS \bmq^*$ yields Equation~\eqref{eq:double-bound-kappa}. Similar to Corollary~\ref{cor:expected-error}, this allows deriving bounds on the expected error under isotropic Gaussian noise (assuming again $\|q^*\|_2=1$ for simplicity):
\begin{equation*}
        \frac{1}{\kappa(\bS)} \mu \leq \| \mathbf{o} - \mathbf{o}^* \|_2 / \|\mathbf{o}^*\|_2 \leq \kappa(\bS) \mu.
\end{equation*}

\subsection{Adapting Convolutions to General Rotations}
\label{appendix:conv-gen-orth}

Handling the more general case of (semi-) orthogonal, non-axis-aligned transformations is more intricate than the axis-aligned one (see Section~\ref{sec:axis-aligned}). In particular, Proposition~\ref{prop:axis-aligned-conv} does not hold anymore. If one wishes to employ general orthogonal transformations $\mathbf{T} \in O(d_k)$ (such as those derived from PCA) to prune the sequence mixer, the convolution layers must be adapted.

Depthwise convolutions operate independently on each channel. When the input space is rotated via $\mathbf{T}$, the original basis-aligned filters become misaligned with the new principal components. Towards maintaining learned structures after pruning, one must find new convolution kernels that best approximate the original dynamics.

\subsection{Optimal Diagonal Adaptation}
We formalize this adaptation as an optimization problem: finding the optimal diagonal (channel-wise) filters in the new basis that minimize the reconstruction error of the original convolution output.

\begin{proposition}[Optimal Diagonal Adaptation]
\label{proposition:optimal-conv}
    Let $\mathbf{x}_t$ be the input signal and let $\mathbf{T} \in {O}(d)$ be an orthogonal matrix,  yielding features $\tilde{\mathbf{x}}_t = \mathbf{T} \mathbf{x}_t$. Let $\mathbf{W} \in \mathbb{R}^{d, l}$ be the original learnable filters for $d$ channels and kernel size $l$.
    
    The optimal diagonal per-channel weights $\mathbf{W}' \in \mathbb{R}^{d, l}$ that minimize the expected squared reconstruction error:
    \begin{equation*}
        \min_{\mathbf{W}'} \mathbb{E}_{\mathbf{x}} \left[ \left\| \mathbf{W}' \ast \tilde{\mathbf{x}} - \mathbf{T}(\mathbf{W} \ast \mathbf{x}) \right\|_F^2 \right]
    \end{equation*}
    are given by the energy-weighted projection:
    \begin{equation*}
        \mathbf{W}' = (\mathbf{T} \odot \mathbf{T}) \mathbf{W},
    \end{equation*}
    where $\ast$ denotes the depthwise convolution and $\odot$ is the Hadamard (element-wise) product.
\end{proposition}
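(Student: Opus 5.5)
The plan is to write the objective out entry by entry, use a second-moment assumption on the input to turn it into a separable weighted least-squares problem, and minimize that in closed form. Concretely, I would expand both terms of the objective channel by channel and lag by lag. Index output channels by $i$, input channels by $j$ and filter taps by $s\in\{0,\dots,l-1\}$. Since the convolution is depthwise and $\tilde{\mathbf{x}}_t=\mathbf{T}\mathbf{x}_t$, the $i$-th output channel of $\mathbf{W}'\ast\tilde{\mathbf{x}}$ at time $t$ is $\sum_s W'_{is}\sum_j T_{ij}x_{t-s,j}$. The $i$-th channel of $\mathbf{T}(\mathbf{W}\ast\mathbf{x})$ is $\sum_j T_{ij}\sum_s W_{js}x_{t-s,j}$. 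Subtracting, the residual in channel $i$ at time $t$ is
\begin{equation*}
r_{t,i}=\sum_{s}\sum_{j}T_{ij}\,(W'_{is}-W_{js})\,x_{t-s,j},
\end{equation*}
which is linear in the unknowns $W'_{is}$.

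The step that carries the argument, and the one I expect to be the main obstacle, is taking the expectation. The statement does not spell out a distribution for $\mathbf{x}$, and the claimed closed form only comes out if the input is white and isotropic: $\mathbb{E}[x_{t-s,j}x_{t-s',j'}]=c\,\delta_{ss'}\delta_{jj'}$ for some constant $c>0$. This is the natural ``uninformative'' calibration assumption, since any cross-channel or cross-lag correlations would enter the solution through the input covariance. I would therefore make this hypothesis explicit. Under it, all cross terms vanish and
\begin{equation*}
\mathbb{E}\,r_{t,i}^2=c\sum_{s}\sum_{j}T_{ij}^2\,(W'_{is}-W_{js})^2 .
\end{equation*}
Summing over $i$ and $t$ gives the objective up to a positive constant. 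The objective is then a sum of independent strictly convex quadratics, one in each scalar $W'_{is}$, so the minimization decouples completely across channels and taps.

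For fixed $(i,s)$, setting the derivative to zero gives the weighted mean
\begin{equation*}
W'_{is}=\frac{\sum_j T_{ij}^2W_{js}}{\sum_j T_{ij}^2}.
\end{equation*}
Because $\mathbf{T}\in O(d)$, every row has unit norm, so $\sum_jT_{ij}^2=1$ and $W'_{is}=\sum_j (T_{ij})^2W_{js}$. In matrix form this is $\mathbf{W}'=(\mathbf{T}\odot\mathbf{T})\mathbf{W}$. Strict convexity holds because each coefficient $\sum_jT_{ij}^2=1>0$, so this minimizer is unique.

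Finally, I would remark on two points. The same computation applies to a semi-orthogonal $\mathbf{T}$, since only the unit row norms are used. And $\mathbf{T}\odot\mathbf{T}$ is doubly stochastic, so each new filter is a convex combination of the old filters, weighted by how much energy each original channel contributes to the new axis.
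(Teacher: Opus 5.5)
Your computation is correct under the hypothesis you impose, and it takes a different route from the paper. The paper works in the rotated basis. It rewrites the target as $\sum_j \mathbf{T}\mathbf{W}^{(j)}\mathbf{T}^\top\tilde{\mathbf{x}}_{t-j}$, so the ideal filter at lag $j$ is the dense matrix $\mathbf{M}^{(j)}=\mathbf{T}\mathbf{W}^{(j)}\mathbf{T}^\top$. It then assumes only that the rotated features are decorrelated, $\mathbb{E}[\tilde{x}_k\tilde{x}_n]=0$ for $n\neq k$, with arbitrary variances. The channel-$k$ error then splits into $(W'_{kk}-M_{kk})^2\mathbb{E}[\tilde{x}_k^2]$ plus a term $\sum_{n\neq k}M_{kn}^2\mathbb{E}[\tilde{x}_n^2]$ that does not depend on $\mathbf{W}'$. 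So the optimal diagonal filter is the diagonal of the ideal dense one, and the leftover term quantifies the irreducible misalignment cost. You instead stay in the original basis and obtain a weighted least-squares problem whose solution is a weighted mean of the old filters. Your version has two advantages. It treats all lags jointly, and your whiteness assumption across lags makes explicit the decoupling that the paper's lag-by-lag argument takes for granted. It also carries over directly to semi-orthogonal $\mathbf{T}$, where the paper's substitution $\mathbf{x}=\mathbf{T}^\top\tilde{\mathbf{x}}$ no longer holds.

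However, your claim that the closed form ``only comes out if the input is white and isotropic'' is false, and isotropy is a real weakness here. The proposition is meant for $\mathbf{T}$ obtained by PCA, where the rotated features are decorrelated but have very different variances. If the input were isotropic, PCA truncation would be pointless. Your own computation handles the general case with one change. Keep whiteness across lags, but let $\mathbf{x}_t$ have a general covariance $\mathbf{\Sigma}$. Write $\mathbf{t}_i$ for the $i$-th row of $\mathbf{T}$ as a column vector and $\mathbf{D}_s=\operatorname{diag}(\mathbf{W}_{:,s})$. Then the $(i,s)$ contribution to the objective is $(W'_{is}\mathbf{t}_i-\mathbf{D}_s\mathbf{t}_i)^\top\mathbf{\Sigma}(W'_{is}\mathbf{t}_i-\mathbf{D}_s\mathbf{t}_i)$, minimized at
\[
W'_{is}=\frac{\mathbf{t}_i^\top\mathbf{\Sigma}\mathbf{D}_s\mathbf{t}_i}{\mathbf{t}_i^\top\mathbf{\Sigma}\mathbf{t}_i}.
\]
Suppose $\mathbf{\Sigma}\mathbf{t}_i=\lambda_i\mathbf{t}_i$ with $\lambda_i>0$. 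For orthogonal $\mathbf{T}$ this is exactly the paper's decorrelation hypothesis, and it is what PCA, including truncated PCA, delivers. Under it the minimizer reduces to $\mathbf{t}_i^\top\mathbf{D}_s\mathbf{t}_i=\sum_j T_{ij}^2W_{js}$. Isotropy is just the degenerate case in which every vector is an eigenvector. Stating your hypothesis in this eigenvector form makes your proof cover the case the proposition is actually used for.
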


\begin{proof}
    We seek to find new depthwise separable convolutions with filter weights $\mathbf{W}'$ that minimize the error between the rotated input convolved with the new weights and the rotated original output.
    
    Recall that a depthwise convolution with filter matrix $\mathbf{W}'$ acting on input $\tilde{\mathbf{x}}$ can be written as:
    \begin{equation*}
        \mathbf{W}' \ast \tilde{\mathbf{x}} = \sum_{j=0}^{l-1} \mathbf{W}'^{(j)} \tilde{\mathbf{x}}_{t-j},
    \end{equation*}
    where $\mathbf{W}'^{(j)} = \operatorname{diag}(\mathbf{w}'^{(j)})$ is the diagonal filter matrix at time lag $j$. Similarly, the rotated original output is:
    \begin{equation*}
        \mathbf{T}(\mathbf{W} \ast \mathbf{x}) = \mathbf{T} \sum_{j=0}^{l-1} \mathbf{W}^{(j)} \mathbf{x}_{t-j} = \sum_{j=0}^{l-1} \mathbf{T} \mathbf{W}^{(j)} \mathbf{T}^\top \tilde{\mathbf{x}}_{t-j},
    \end{equation*}
    where we used $\mathbf{x} = \mathbf{T}^\top \tilde{\mathbf{x}}$.
    
    The error term is then $\sum_{j=0}^{l-1} (\mathbf{W}'^{(j)} - \mathbf{T}\mathbf{W}^{(j)}\mathbf{T}^\top) \tilde{\mathbf{x}}_{t-j}$. Thus, the ideal filter in the new basis is the dense matrix $\mathbf{M}^{(j)} \coloneqq \mathbf{T}\mathbf{W}^{(j)}\mathbf{T}^\top$. However, to maintain the efficiency of depthwise convolutions, we are constrained to approximate this dense matrix with a diagonal matrix $\mathbf{W}'^{(j)}$.
    
    Let us focus on the error for a specific lag $j$ (omitting $j$ for brevity) and channel $k$. The error vector is $\mathbf{e} = (\mathbf{W}' - \mathbf{M})\tilde{\mathbf{x}}$. The $k$-th component is:
    \begin{equation*}
        e_k = (W'_{kk} - M_{kk}) \tilde{x}_k - \sum_{n \neq k} M_{kn} \tilde{x}_n.
    \end{equation*}
    Squaring and taking the expectation, assuming the features in the rotated basis $\tilde{\mathbf{x}}$ are decorrelated (which is true if $\mathbf{T}$ is the PCA transformation matrix) such that $\mathbb{E}[\tilde{x}_k \tilde{x}_n] = 0$ for $n \neq k$:
    \begin{equation*}
        \mathbb{E}[e_k^2] = (W'_{kk} - M_{kk})^2 \mathbb{E}[\tilde{x}_k^2] + \sum_{n \neq k} M_{kn}^2 \mathbb{E}[\tilde{x}_n^2].
    \end{equation*}
    To minimize this error with respect to the diagonal weight $W'_{kk}$, we must set the first term to zero:
    \begin{equation*}
        W'_{kk} = M_{kk} = (\mathbf{T} \operatorname{diag}(\mathbf{w}) \mathbf{T}^\top)_{kk}.
    \end{equation*}
    Expanding this matrix multiplication:
    \begin{equation*}
        W'_{kk} = \sum_{m=1}^d T_{km} w_m T_{km} = \sum_{m=1}^d (T_{km})^2 w_m = ((\mathbf{T} \odot \mathbf{T}) \mathbf{w})_k.
    \end{equation*}
    Extending this to all channels and lags, we obtain the matrix form $\mathbf{W}' = (\mathbf{T} \odot \mathbf{T}) \mathbf{W}$.
\end{proof}

Intuitively, the new kernel for a principal component is a weighted average of the original kernels, weighted by the energy (squared contribution) each original dimension contributes to that component.

\subsection{Shared Convolutions}
An alternative approach to facilitate general rotations is to constrain the model architecture itself. If we enforce that the convolution filters are shared across all channels within a head, the convolution operation becomes a scalar multiplication at each lag, which commutes with any linear transformation.

\begin{lemma}[Commutativity of Shared Convolutions]
\label{lemma:shared-conv}
    Let $\mathbf{w} \in \mathbb{R}^l$ be a filter shared across all $d$ channels, such that the convolution kernel matrix $\mathbf{W} \in \mathbb{R}^{d, l}$ has identical rows $\mathbf{W}_{k, :} = \mathbf{w}$ for all $k$. For any linear transformation matrix $\mathbf{T} \in \mathbb{R}^{d, d}$ (including orthogonal rotations), the convolution commutes with the transformation:
    \begin{equation*}
        \mathbf{W} \ast (\mathbf{T}\mathbf{x}) = \mathbf{T}(\mathbf{W} \ast \mathbf{x}).
    \end{equation*}
\end{lemma}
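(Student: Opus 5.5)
The plan is to write the depthwise convolution with a shared filter in its time-domain form and observe that, at every lag, the filter acts as a scalar multiple of the identity on the channel dimension. Concretely, for a depthwise convolution with kernel matrix $\mathbf{W}\in\mathbb{R}^{d,l}$, the same notation as in the proof of Proposition~\ref{proposition:optimal-conv} gives
\begin{equation*}
    (\mathbf{W}\ast\mathbf{x})_t=\sum_{j=0}^{l-1}\mathbf{W}^{(j)}\mathbf{x}_{t-j},\qquad \mathbf{W}^{(j)}=\operatorname{diag}(W_{1,j},\dots,W_{d,j}).
\end{equation*}
Under the hypothesis that all rows equal $\mathbf{w}$, each lag matrix collapses to $\mathbf{W}^{(j)}=w_j\mathbf{I}_d$. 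I will state this explicitly as the first step, because it is the only place the sharing assumption enters. The indexing of $\mathbf{w}$ (lag $j$ versus reversed order) is immaterial, since the same convention applies to every channel.

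Next, I substitute $\mathbf{T}\mathbf{x}$ as the input:
\begin{equation*}
    (\mathbf{W}\ast(\mathbf{T}\mathbf{x}))_t=\sum_{j}w_j\mathbf{I}_d\,\mathbf{T}\mathbf{x}_{t-j}=\mathbf{T}\sum_{j}w_j\mathbf{x}_{t-j}=\mathbf{T}(\mathbf{W}\ast\mathbf{x})_t.
\end{equation*}
This uses only two facts: scalar matrices commute with every $\mathbf{T}$, and matrix multiplication is linear. So neither invertibility nor orthogonality of $\mathbf{T}$ is needed, and the identity holds for any $\mathbf{T}\in\mathbb{R}^{d,d}$, as claimed.

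Finally, I will note that causal padding, i.e.\ zero inputs for $t-j\le 0$, is preserved because $\mathbf{T}\mathbf{0}=\mathbf{0}$, so the identity holds at every time step including the boundary. I also expect the same computation to go through verbatim for a semi-orthogonal $\mathbf{T}\in\mathbb{R}^{d',d}$, provided the kernel on the left is taken to be the $d'$-row shared kernel. There is no real obstacle here; the only care needed is fixing the convolution convention so that each lag acts as a diagonal matrix.
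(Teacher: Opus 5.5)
Your proposal is correct and takes essentially the same route as the paper: write the shared-filter convolution as $\sum_{j} w_j \mathbf{x}_{t-j}$ and pull $\mathbf{T}$ out by linearity. Your extra remarks on causal zero-padding and on rectangular $\mathbf{T}$ (with a $d'$-row shared kernel) are valid refinements that the paper's proof leaves implicit.
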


\begin{proof}
    For a shared filter, the convolution operation on the vector $\mathbf{x}_t$ can be written as a scalar convolution applied element-wise: $(\mathbf{W} \ast \mathbf{x})_t = \sum_{j=0}^{l-1} w_j \mathbf{x}_{t-j}$.
    Applying the transformation $\mathbf{T}$ first:
    \begin{equation*}
        \mathbf{W} \ast (\mathbf{T}\mathbf{x})_t = \sum_{j=0}^{l-1} w_j (\mathbf{T}\mathbf{x}_{t-j}) = \mathbf{T} \left( \sum_{j=0}^{l-1} w_j \mathbf{x}_{t-j} \right) = \mathbf{T}(\mathbf{W} \ast \mathbf{x})_t.
    \end{equation*}
\end{proof}

This lemma implies that for models trained with shared convolutions, the optimal filter in the rotated basis $\mathbf{W}'$ is identical to the original filter $\mathbf{W}$. This architectural choice would render the model naturally robust to basis changes, enabling rotation-based pruning methods like PCA-based truncation without the need for filter adaptation or approximation errors.

\section{Additional Experimental Results}
\subsection{Singular Value Spectrum}
\label{app:singular-value-spectrum}
Figure~\ref{fig:histogram} illustrates the singular value spectrum of a randomly selected head's hidden state, aggregated across all tokens. We compare the uncompressed baseline against \emph{DRRQR}, \emph{Grad} and \emph{L1} methods at a 75\% compression ratio (prior to recovery fine-tuning). Notably, while the uncompressed model exhibits a heavy tail of singular values, the compressed models display a much sharper spectral truncation. For some heads, we observe the emergence of a spectral gap in certain heads. This gap can also develop during recovery fine-tuning (see Figure~\ref{fig:histogram_pre_vs_post}). We consider studying this phenomenon an interesting branch for future research.

\begin{figure}[ht]
    \centering
    \input{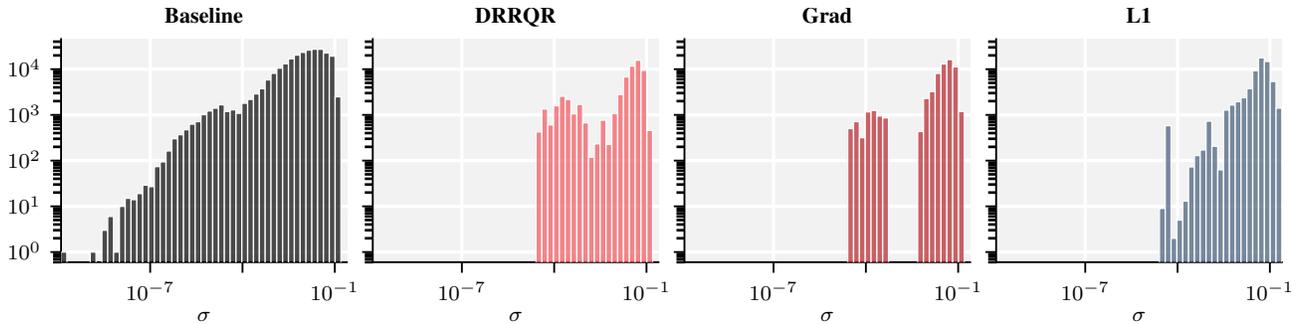}%
    \caption{Singular value spectrum of a DeltaNet 370M head's hidden state (Fineweb-Edu, $T=2048$, first 128 tokens skipped). We compare the uncompressed \textit{Baseline} against compressed models at a 75\% compression ratio (pre-RFT).}
    \label{fig:histogram}
\end{figure}

\begin{figure}[ht]
    \centering
    \resizebox{0.70\linewidth}{!}{%
        \input{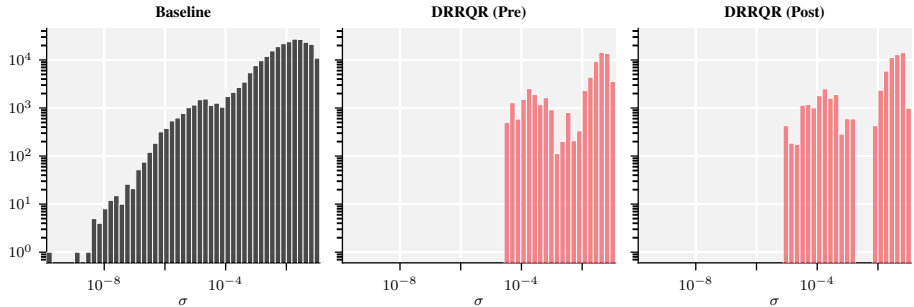}%
    }
    \caption{Impact of Recovery Fine-Tuning (RFT) on the singular value spectrum ($T=2048$, first 128 tokens skipped). We compare the uncompressed \textit{Baseline} with \textit{DRRQR} (75\% compression) both before (\textit{pre}) and after (\textit{post}) RFT.}
    \label{fig:histogram_pre_vs_post}
\end{figure}

\label{app:more-results}
\subsection{On PCA and Convolutions}
\label{appendix:results-pca}

In this section, we provide extended experimental results on the PCA-based pruning strategy. We furthermore analyze the impact of depthwise convolutions on the applicability and performance of semi-orthogonal structured pruning.

As mentioned in Section~\ref{sec:method}, depthwise convolutions are generally not invariant under orthogonal transformations. When pruning via PCA, the features are rotated, causing a misalignment with the per-channel convolution filters. In Appendix~\ref{appendix:conv-gen-orth}, we lay out a framework to fix this misalignment, either by introducing \emph{shared convolutions} or by \emph{mixing filters} (see Proposition~\ref{proposition:optimal-conv}).

\subsubsection{Shared Convolutions}
To understand the impact of those two approaches, we train DeltaNet 370M variants using \emph{Shared Convolutions}, where the convolution filter is tied across all channels within a head. As shown in Lemma~\ref{lemma:shared-conv}, this architecture is equivariant under rotations.

\begin{table}[ht]
\centering
\caption{DeltaNet 370M models trained on $10B$ tokens evaluated on common sense zero-shot reasoning tasks. ``Shared Conv'' indicates whether convolution filters are shared across heads and/or channels (for example, $\times$\checkmark means that filters are shared across channels inside of a head but not across heads).}
\label{tab:shared-non-shared-comp-full}
\begin{small}
\begin{tabular}{l|cc|cccccc|c}
\toprule
\thead[t]{Shared\\Conv} & \thead[t]{Wiki.\\ ppl $\downarrow$} & \thead[t]{LMB.\\ ppl $\downarrow$} & \thead[t]{ARC-e\\ acc\_n $\uparrow$} & \thead[t]{ARC-c\\ acc\_n $\uparrow$} & \thead[t]{Hella.\\ acc\_n $\uparrow$} & \thead[t]{Wino.\\ acc $\uparrow$} & \thead[t]{PIQA\\ acc\_n $\uparrow$} & \thead[t]{LMB.\\ acc $\uparrow$} & \thead[t]{Avg\\ $\uparrow$} \\
\midrule
$\times$$\times$ & 29.8 & 37.0 & 51.1 & 27.6 & 38.1 & 52.2 & 65.0 & 31.3 & 44.2 \\
$\times$\checkmark & 29.5 & 40.4 & 49.7 & 27.1 & 38.3 & 52.4 & 64.7 & 30.4 & 43.8 \\
\checkmark\checkmark & 29.3 & 40.4 & 49.9 & 26.4 & 37.8 & 49.9 & 65.2 & 29.9 & 43.2 \\
\bottomrule
\end{tabular}
\end{small}
\end{table}

\paragraph{Filter Similarity.}
Figure~\ref{fig:filters_dn} visualizes the learned filters of a standard DeltaNet (non-shared). We observe high similarity between filters within specific heads, suggesting that the model naturally learns to share dynamics across channels. This could serve as a justification for sharing filters among channels inside each head.


\paragraph{Performance of Shared Convolutions.}
We include results on pre-trained DeltaNet models with and without shared convolutions in Table~\ref{tab:shared-non-shared-comp-full}. It shows that, while models with shared convolutions are competitive (especially when just shared inside of a head and not across heads), there is a slight drop-off.

\begin{table}[ht]
\centering
\caption{Comparison of post-compression (pre-RFT) perplexity on Wikitext-2 for DeltaNet 370M. We compare standard (Non-Shared) vs. Shared Convolutions under different pruning strategies. "PCA (adv.)" removes the highest variance components (keeping noise), while "PCA (prop.)" removes the lowest variance components (keeping signal).}
\label{table:adv_vs_non_adv}
\begin{small}
\begin{tabular}{ll|c|c}
\toprule
\textbf{Comp.} & \textbf{Method} & \textbf{Non Shared} & \textbf{Shared} \\
\midrule
\multirow[t]{3}{*}{75\%} & PCA (adv.) & $3.1 \cdot 10^5$ & $2.1 \cdot 10^5$ \\
 & PCA (prop.) & $3.3 \cdot 10^5$ & ${184.48}$ \\
 & Grad & ${46.7}$ & $69.78$ \\
\midrule
\multirow[t]{3}{*}{50\%} & PCA (adv.) & $2.4 \cdot 10^5$ & $6.1 \cdot 10^4$ \\
 & PCA (prop.) & $3.2 \cdot 10^5$ & ${155.76}$ \\
 & Grad & ${31.75}$ & $33.73$ \\
\midrule
\textbf{0\%} & -- & 29.83 & 29.55 \\
\bottomrule
\end{tabular}
\end{small}
\end{table}

\subsubsection{Impact of Convolutions on PCA-based Pruning}
Towards quantifying the two approaches of handling the per-channel convolutions, we compare a "Proper PCA" method, where we retain the principal components with the highest variance, against an "Adversarial PCA" baseline, where we deliberately retain the dimensions with the lowest variance. In a system robust to rotation, proper PCA should outperform the adversarial baseline.

Table~\ref{table:adv_vs_non_adv} presents the perplexity on Wikitext-2 for DeltaNet 370M (pre-RFT). For the shared convolutions, proper PCA generally yields way better perplexity than the adversarial baseline. However, in the standard, non-shared case, the filter averaging does not suffice to make up for the misalignment post-transformation. Furthermore, the \emph{Grad} pruning methods still outperforms PCA even when using shared convolutions.

\subsection{On Coupled Selection}
\label{app:ablation-kq}
We compare selecting indices based on (i) the sum of scores derived from both projections ($\mathbf{K}, \mathbf{Q}$), (ii) keys only ($\mathbf{K}$), and (iii) queries only ($\mathbf{Q}$). Results are reported in Table~\ref{tab:ablation_kq_targets}.

\begin{table*}[t]
\centering
\caption{Consolidated ablation study on DeltaNet and Gated DeltaNet models (370M and 1.3B) measuring Wikitext-2 perplexity. We compare the impact of picking just keys ($\mathbf{K}$), just queries ($\mathbf{Q}$), or both ($\mathbf{K}, \mathbf{Q}$) for feature selection at a 50\% compression ratio.}
\label{tab:ablation_kq_targets}
\begin{small}
\begin{tabular}{l|ccc|ccc|ccc|ccc}
\toprule
 & \multicolumn{6}{c|}{\textbf{DeltaNet}} & \multicolumn{6}{c}{\textbf{Gated DeltaNet}} \\
\cmidrule(lr){2-7} \cmidrule(lr){8-13}
 & \multicolumn{3}{c|}{\textbf{370M}} & \multicolumn{3}{c|}{\textbf{1.3B}} & \multicolumn{3}{c|}{\textbf{370M}} & \multicolumn{3}{c}{\textbf{1.3B}} \\
\cmidrule(lr){2-4} \cmidrule(lr){5-7} \cmidrule(lr){8-10} \cmidrule(lr){11-13}
\textbf{Method} & $\mathbf{K}, \mathbf{Q}$ & $\mathbf{K}$ & $\mathbf{Q}$ & $\mathbf{K}, \mathbf{Q}$ & $\mathbf{K}$ & $\mathbf{Q}$ & $\mathbf{K}, \mathbf{Q}$ & $\mathbf{K}$ & $\mathbf{Q}$ & $\mathbf{K}, \mathbf{Q}$ & $\mathbf{K}$ & $\mathbf{Q}$ \\
\midrule
L1 & 3843.5 & 425218.3 & 33.0 & 66.1 & 1596.3 & 34.9 & 41.8 & 61.0 & 49.9 & 26.5 & 29.75 & 24.5 \\
DRRQR & 31.4 & 286035.7 & 32.1 & 20.5 & 566.9 & 22.9 & 31.6 & 56.9 & 43.5 & 17.3 & 21.9 & 22.0 \\
Grad & 31.7 & 17709.6 & 31.9 & 18.3 & 20.3 & 19.5 & 33.3 & 36.2 & 34.6 & 17.4 & 18.3 & 17.7 \\
S-Wanda & 915.9 & 376074.0 & 33.0 & 60.0 & 1604.4 & 29.1 & 40.0 & 61.3 & 43.5 & 21.9 & 22.5 & 21.5 \\
\midrule
\textit{Baseline} & \multicolumn{3}{c|}{\textit{29.8}} & \multicolumn{3}{c|}{\textit{16.7}} & \multicolumn{3}{c|}{\textit{28.8}} & \multicolumn{3}{c}{\textit{16.8}} \\
\bottomrule
\end{tabular}
\end{small}
\end{table*}

We observe that selecting columns based on query projections ($\mathbf{Q}$) consistently outperforms selection based solely on keys ($\mathbf{K}$) for magnitude-based methods. Since queries govern retrieval, pruning based on $\mathbf{Q}$ ensures we discard dimensions with minimal contribution to the output. In contrast, pruning based on $\mathbf{K}$ risks removing information that the model attempts to access with a strong query, leading to significant readout errors.

Curiously, magnitude-based methods seem to perform better when selecting just based on queries ($\bQ$) than when selecting based on both keys and queries ($\bK, \bQ$). Since they add the scores of keys and queries ($s_j = \|\mathbf{W}_{k, :j}\| + \|\mathbf{W}_{q, :j}\|$), this suggests the model contains large key weights whose corresponding query weights have lower magnitude. 

\emph{DRRQR} avoids this by targeting the effective rank of the joint subspace.

The results in Table~\ref{tab:ablation_kq_targets} reveal a clear difference between the studied pruning methods. Magnitude-based methods (\emph{L1}, \emph{Wanda}) are unstable when targeting keys, performing best when restricted to queries. In contrast, optimization-based methods (\emph{Grad}, \emph{DRRQR}) consistently achieve the lowest perplexity using joint selection ($\mathbf{K}, \mathbf{Q}$). This indicates that the associative memory's effective rank requires accounting for the coupled interaction between keys and queries, rather than treating them in isolation.

\section{The Effective Rank}
\label{appendix:stable-rank}
In this section we present some properties of the effective rank. For more details, please refer to~\citep{ipsen2025stable}.

\begin{definition}[effective rank]
    For a non-zero matrix $\bA \in \mathbb{R}^{m, n}$, the effective rank is defined as:
    \begin{equation*}
        \operatorname{er}(\bA) \coloneqq \frac{\|\bA\|_F^2}{\|\bA\|_2^2} = \frac{\sum_{i} \sigma_i^2(\bA)}{\sigma_{\max}^2(\bA)}.
    \end{equation*}
\end{definition}

Unlike the algebraic rank, which is discontinuous, the effective rank is a continuous function of the matrix entries. This implies that small perturbations to the memory state $\bS_t$ (e.g., from gradient noise or quantization) result in bounded changes to $\operatorname{er}(\bS_t)$.

\begin{proposition}[Invariance under Transposition]
\label{proposition:stable-rank-invariance-under-transposition}
    The effective rank is invariant under transposition. For any matrix $\bA$:
    \begin{equation*}
        \operatorname{er}(\bA) = \operatorname{er}(\bA^\top).
    \end{equation*}
\end{proposition}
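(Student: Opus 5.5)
The plan is to show that the two ingredients of the effective rank, the Frobenius norm and the spectral norm, are each separately invariant under transposition; the ratio then agrees for $\bA$ and $\bA^\top$. Assume $\bA \neq \mathbf{0}$, so that the effective rank is defined, and note that $\bA^\top$ is then also non-zero.

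For the Frobenius norm, I would use the entrywise definition $\|\bA\|_F^2 = \sum_{i,j} A_{ij}^2$. Transposition only permutes the entries, since $(\bA^\top)_{ji} = A_{ij}$, so the sum of squares is unchanged. Equivalently, $\|\bA\|_F^2 = \operatorname{tr}(\bA^\top\bA) = \operatorname{tr}(\bA\bA^\top) = \|\bA^\top\|_F^2$ by the cyclic property of the trace.

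For the spectral norm, I would argue through the singular value decomposition. If $\bA = \bU\mathbf{\Sigma}\bW^\top$, then $\bA^\top = \bW\mathbf{\Sigma}^\top\bU^\top$ is an SVD of $\bA^\top$, and it has the same nonzero singular values. In particular $\sigma_{\max}(\bA) = \sigma_{\max}(\bA^\top)$, that is, $\|\bA\|_2 = \|\bA^\top\|_2$. An alternative route avoids the SVD: $\|\bA\|_2 = \sup_{\|\bmx\|_2 = \|\mathbf{y}\|_2 = 1} \mathbf{y}^\top \bA \bmx$, and this expression is symmetric under swapping $\bA$ with $\bA^\top$ together with $\bmx$ and $\mathbf{y}$.

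Putting these together gives $\operatorname{er}(\bA^\top) = \|\bA^\top\|_F^2/\|\bA^\top\|_2^2 = \|\bA\|_F^2/\|\bA\|_2^2 = \operatorname{er}(\bA)$. The singular value form $\sum_i \sigma_i^2/\sigma_{\max}^2$ gives the same conclusion at once, because the multiset of nonzero singular values is preserved under transposition. No step here is a real obstacle. The only point that needs care is the nonzero assumption, which keeps the denominator positive for both $\bA$ and $\bA^\top$.
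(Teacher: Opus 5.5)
Your proof is correct: the effective rank depends only on the singular values (equivalently on $\|\cdot\|_F$ and $\|\cdot\|_2$), both are preserved under transposition, and you rightly note the nonzero assumption that keeps the denominator positive. The paper states this property without proof and refers the reader to \citep{ipsen2025stable}; your argument is the standard one it implicitly relies on.
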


\begin{proposition}[Invariance under Unitary Transformations and Scaling]
    The effective rank is invariant under unitary transformations and scalar multiplication. For any unitary matrices $\bU, \bV$ and non-zero scalar $c \in \mathbb{R}$:
    \begin{equation*}
        \operatorname{er}(c \bU \bA \bV^\top) = \operatorname{er}(\bA).
    \end{equation*}
\end{proposition}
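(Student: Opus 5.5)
The statement to prove is that $\operatorname{er}(c\,\bU\bA\bV^\top)=\operatorname{er}(\bA)$ for orthogonal (unitary) $\bU,\bV$ and a nonzero scalar $c$. The plan is to work directly from the definition $\operatorname{er}(\bA)=\|\bA\|_F^2/\|\bA\|_2^2$ and to show that the numerator and the denominator transform in exactly the same way. Throughout, $\bA\neq\mathbf{0}$ is assumed so that the quotient is defined. Since $c\neq 0$ and $\bU,\bV$ are invertible, $c\bU\bA\bV^\top\neq\mathbf{0}$ as well.

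First, handle the scalar. Both norms are absolutely homogeneous, so
\begin{equation*}
\|c\bM\|_F^2=c^2\|\bM\|_F^2 \quad\text{and}\quad \|c\bM\|_2^2=c^2\|\bM\|_2^2 .
\end{equation*}
The factor $c^2$ therefore cancels in the ratio, and it suffices to treat $c=1$.

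Second, handle the unitary factors, for which there are two routes.

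\textbf{Route via singular values.} If $\bA=\bU_A\mathbf{\Sigma}\bV_A^\top$ is an SVD, then $\bU\bA\bV^\top=(\bU\bU_A)\mathbf{\Sigma}(\bV\bV_A)^\top$. The products $\bU\bU_A$ and $\bV\bV_A$ are again orthogonal, so this is a valid SVD of $\bU\bA\bV^\top$ with the same $\mathbf{\Sigma}$. Hence both matrices have identical singular values. The result then follows from the second expression in the definition, $\sum_i\sigma_i^2/\sigma_{\max}^2$.

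\textbf{Route via norms.} Use $\|\bU\bA\bV^\top\|_F^2=\operatorname{tr}(\bV\bA^\top\bU^\top\bU\bA\bV^\top)=\operatorname{tr}(\bA^\top\bA)$, together with $\|\bU\bM\bmx\|_2=\|\bM\bmx\|_2$ and the fact that $\bV^\top$ is a bijection of the unit sphere. The second fact gives $\|\bU\bA\bV^\top\|_2=\|\bA\|_2$.

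I would present the singular-value route as the main argument, since it matches how the paper defines effective rank. There is no real obstacle here. The only point needing care is checking that the rotated factorization is a genuine SVD, which reduces to orthogonality being closed under products and preserving the ordering of $\mathbf{\Sigma}$. In the complex case one replaces transposes by conjugate transposes, and the argument is otherwise the same.
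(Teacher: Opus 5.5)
Your proof is correct: the factor $c^2$ cancels between $\|\cdot\|_F^2$ and $\|\cdot\|_2^2$, and $(\bU\bU_A)\mathbf{\Sigma}(\bV\bV_A)^\top$ is a genuine SVD of $\bU\bA\bV^\top$, so the singular values, and hence $\sum_i\sigma_i^2/\sigma_1^2$, are unchanged. The paper gives no proof of this proposition---it lists it among standard properties of the effective rank and refers to \citet{ipsen2025stable}---so your singular-value argument is the routine justification that the citation stands in for.
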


\begin{proposition}[Bounds and Relation to Algebraic Rank]
    The effective rank is bounded by the algebraic rank:
    \begin{equation*}
        1 \leq \operatorname{er}(\bA) \leq \operatorname{rank}(\bA) \leq \min(m, n).
    \end{equation*}
    The lower bound is achieved if and only if $\bA$ has rank 1. The upper bound is achieved if and only if all non-zero singular values are equal.
\end{proposition}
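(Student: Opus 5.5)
The plan is to work directly with the singular value decomposition of $\bA$. Let $r = \operatorname{rank}(\bA)$, and let $\sigma_1 \geq \dots \geq \sigma_r > 0$ be the nonzero singular values; all remaining singular values vanish. Since $\bA \neq \mathbf{0}$, we have $r \geq 1$ and $\sigma_1 = \|\bA\|_2 > 0$, so the definition applies. Using $\|\bA\|_F^2 = \sum_i \sigma_i^2$ and discarding the zero singular values, the definition of the effective rank becomes
\begin{equation*}
    \operatorname{er}(\bA) = \sum_{i=1}^{r} \frac{\sigma_i^2}{\sigma_1^2} = 1 + \sum_{i=2}^{r} \frac{\sigma_i^2}{\sigma_1^2}.
\end{equation*}
Both the bounds and the equality cases will be read off from this expression.

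For the lower bound, every summand with $i \geq 2$ is nonnegative, so $\operatorname{er}(\bA) \geq 1$. Equality holds if and only if $\sigma_i = 0$ for all $2 \leq i \leq r$. Since these $\sigma_i$ are by definition strictly positive, this happens exactly when the range $2 \le i \le r$ is empty, i.e.\ $r = 1$. For the upper bound, the ordering $\sigma_i \leq \sigma_1$ gives $\sigma_i^2/\sigma_1^2 \leq 1$ for each of the $r$ summands, hence $\operatorname{er}(\bA) \leq r$. Equality holds if and only if every ratio equals $1$, that is, $\sigma_1 = \dots = \sigma_r$, so that all nonzero singular values are equal. The remaining inequality $\operatorname{rank}(\bA) \leq \min(m,n)$ is the standard fact that the rank is at most the number of rows and columns.

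No step is a genuine obstacle. The only care needed is in the equality statements. The ``upper bound'' characterization should be read as equality in $\operatorname{er}(\bA) \leq \operatorname{rank}(\bA)$, not in $\operatorname{er}(\bA) \leq \min(m,n)$. The latter additionally requires $\bA$ to have full rank, and I would note this explicitly.
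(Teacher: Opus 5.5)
Your proof is correct and complete. Writing $\operatorname{er}(\bA)=\sum_{i=1}^{r}\sigma_i^2/\sigma_1^2$ with $r=\operatorname{rank}(\bA)$ and reading off both inequalities and both equality cases term by term settles the proposition.

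The paper gives no proof of this proposition. It lists it among standard properties of the effective rank and defers to the literature on stable rank. The singular-value form you start from is the paper's own alternative expression from the main-text definition.

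The nearest argument in the paper is the ``Edge Case (Isotropy)'' remark. It combines $\kappa^2(\bA)\ge \operatorname{rank}(\bA)/\operatorname{er}(\bA)$, with $\kappa=\sigma_1/\sigma_r$ taken on the support, and $\operatorname{er}(\bA)\le\operatorname{rank}(\bA)$. That gives only the ``if'' direction of the upper equality case. Your termwise argument gives both directions directly and does not need the condition-number bound.

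Your caveat about which upper bound is meant is also correct. Equality in $\operatorname{er}(\bA)\le\min(m,n)$ additionally requires full rank. For example, $\bA=\operatorname{diag}(1,0)$ has all nonzero singular values equal, yet $\operatorname{er}(\bA)=1<2$.
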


\begin{proposition}[Relation to Condition Number]
\label{proposition:stable-rank-relation-to-condition-number}
    Let $\bA$ be a (non-zero) matrix. Then
    \begin{equation*}
        \kappa^2(\bA) \geq \frac{\operatorname{rank}(\bA)}{\operatorname{er}(\bA)}.
    \end{equation*}
    In the specific case where $\bA$ has full rank, this implies
    \begin{equation*}
        \kappa^2(\bA) \geq \frac{1}{\operatorname{u}(\bA)},
    \end{equation*}
    where $u$ is the rank utilization (see Definition~\ref{def:rank-utilization}).
\end{proposition}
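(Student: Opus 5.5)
The argument is a direct comparison of sums of squared singular values, using only the definitions of the condition number and the effective rank. Let $r = \operatorname{rank}(\bA) \geq 1$, with nonzero singular values $\sigma_1 \geq \dots \geq \sigma_r > 0$. I take the condition number in the sense relevant for possibly rank-deficient matrices, $\kappa(\bA) = \sigma_1/\sigma_r$, the ratio of the largest to the smallest \emph{nonzero} singular value. If one instead defines $\kappa$ through $\sigma_{\min(m,n)}$, which may be zero, then in the rank-deficient case $\kappa = \infty$ and the inequality holds trivially. So it suffices to treat $\kappa = \sigma_1/\sigma_r$.

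First, write $\operatorname{er}(\bA) = \sum_{i=1}^r \sigma_i^2 / \sigma_1^2$; the zero singular values contribute nothing. Second, bound each term from below by the smallest nonzero singular value: $\sum_{i=1}^r \sigma_i^2 \geq r\,\sigma_r^2$. Dividing by $\sigma_1^2$ gives $\operatorname{er}(\bA) \geq r\,\sigma_r^2/\sigma_1^2 = \operatorname{rank}(\bA)/\kappa^2(\bA)$. Rearranging, which is allowed since all quantities are positive, yields $\kappa^2(\bA) \geq \operatorname{rank}(\bA)/\operatorname{er}(\bA)$.

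For the full-rank specialization, $\operatorname{rank}(\bA) = \min(m,n)$. Definition~\ref{def:rank-utilization} gives $\operatorname{u}(\bA) = \operatorname{er}(\bA)/\min(m,n)$, so $\operatorname{rank}(\bA)/\operatorname{er}(\bA) = 1/\operatorname{u}(\bA)$, which is exactly the second claim.

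There is no real obstacle here. The only point requiring care is the convention for $\kappa$ when $\bA$ is rank-deficient, and I would state it explicitly at the start of the proof so that the first inequality is unambiguous.
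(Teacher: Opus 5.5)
Your argument is correct and complete. The paper states this proposition without a written proof. Its subsequent isotropy remark ("perfectly conditioned on its support") relies on exactly your bound $\sum_{i\le r}\sigma_i^2 \ge r\,\sigma_r^2$ with $\kappa(\bA)=\sigma_1/\sigma_r$. Your explicit treatment of the $\sigma_{\min(m,n)}$ convention, under which the rank-deficient case is trivial, settles the only possible ambiguity in the statement.
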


\textbf{Edge Case (Isotropy):}
In the specific case where the matrix is perfectly conditioned on its support (i.e., $\kappa(\bA) = 1$), the inequality becomes an equality:
\begin{equation*}
    1 \geq \frac{r}{\operatorname{er}(\bA)} \implies \operatorname{er}(\bA) \geq r.
\end{equation*}
Since we known $\operatorname{er}(\bA) \leq r$, this forces $\operatorname{er}(\bA) = \operatorname{rank}(\bA)$. This confirms that for isotropic matrices (where all non-zero singular values are equal), the effective rank and algebraic rank coincide. Conversely, a large gap between $r$ and $\operatorname{er}(\bA)$ is a sufficient condition for ill-conditioning.

\section{Dynamical Systems Perspective}
\label{appendix:dynamica-systems}

In this section, we derive the DeltaNet recurrence rule from a continuous-time dynamical systems perspective. We show that the sequence mixer can be interpreted as a discretization of a continuous gradient flow minimizing a linear regression objective.

\paragraph{Continuous-Time Dynamics.}
Consider a time-continuous associative memory $\mathbf{S}(t) \in \mathbb{R}^{d_v, d_k}$ receiving a stream of keys $\mathbf{k}(t) \in \mathbb{R}^{d_k}$ and values $\mathbf{v}(t) \in \mathbb{R}^{d_v}$. We define the instantaneous regression loss at time $t$ as:
\begin{equation*}
    \mathcal{L}(\mathbf{S}(t)) = \frac{1}{2} \| \mathbf{v}(t) - \mathbf{S}(t)\mathbf{k}(t) \|_2^2.
\end{equation*}
The dynamics of the state $\mathbf{S}(t)$ are governed by the gradient flow minimizing this objective:
\begin{align}
    \dot{\mathbf{S}}(t) &= -\nabla_{\mathbf{S}} \mathcal{L}(\mathbf{S}(t)) \\
    &= -(\mathbf{S}(t)\mathbf{k}(t) - \mathbf{v}(t))\mathbf{k}(t)^\top \\
    &= \mathbf{v}(t)\mathbf{k}(t)^\top - \mathbf{S}(t)\mathbf{k}(t)\mathbf{k}(t)^\top. \label{eq:ode-flow}
\end{align}
Equation~\eqref{eq:ode-flow} represents a linear time-varying (LTV) ordinary differential equation (ODE) of the form $\dot{\mathbf{S}}(t) = \mathbf{S}(t)\mathbf{A}(t) + \mathbf{B}(t)$, where $\mathbf{A}(t) = -\mathbf{k}(t)\mathbf{k}(t)^\top$ is the state-transition matrix acting on the right, and $\mathbf{B}(t) = \mathbf{v}(t)\mathbf{k}(t)^\top$ is the input forcing term.

\paragraph{Euler Discretization.}
To obtain the discrete-time update rule employed by DeltaNet, we apply the forward Euler method to Equation~\eqref{eq:ode-flow}. Let $\Delta t$ be the step size, which corresponds to the gating factor $\beta_t$ in the DeltaNet formulation. The discretization yields:
\begin{equation*}
    \frac{\mathbf{S}_t - \mathbf{S}_{t-1}}{\beta_t} \approx \mathbf{v}_t\mathbf{k}_t^\top - \mathbf{S}_{t-1}\mathbf{k}_t\mathbf{k}_t^\top.
\end{equation*}
Rearranging the terms to solve for the next state $\mathbf{S}_t$:
\begin{align*}
    \mathbf{S}_t &= \mathbf{S}_{t-1} + \beta_t (\mathbf{v}_t\mathbf{k}_t^\top - \mathbf{S}_{t-1}\mathbf{k}_t\mathbf{k}_t^\top) \\
    &= \mathbf{S}_{t-1}(\mathbf{I} - \beta_t \mathbf{k}_t\mathbf{k}_t^\top) + \beta_t \mathbf{v}_t\mathbf{k}_t^\top.
\end{align*}
This exactly recovers the DeltaNet update rule (Equation~\eqref{eq:dn-recursion}).

\paragraph{System Stability.}
The stability of this dynamical system is determined by the spectral properties of the transition matrix $(\mathbf{I} - \beta_t \mathbf{k}_t\mathbf{k}_t^\top)$. For the system to be stable (non-divergent), the eigenvalues of this operator must lie within the unit circle. This implies the condition $|1 - \beta_t \|\mathbf{k}_t\|_2^2| \leq 1$. In standard DeltaNet implementations, keys are normalized ($\|\mathbf{k}_t\|_2 = 1$), and $\beta_t$ is the output of a sigmoid function ($\beta_t \in (0, 1)$), strictly satisfying the stability condition and ensuring the memory decays appropriately over time.

\section{Proofs}
\label{appendix:proofs}
In this section, we provide proofs of statements presented in the main paper.

\subsection{Proof of Proposition~\ref{proposition:sandwich}}
\label{appendix:proof-stable-rank}

We consider the matrix form of the associative memory $\bS = \bV^\top \bK$, where $\bV \in \mathbb{R}^{T, d_v}$ and $\bK \in \mathbb{R}^{T, d_k}$.

To handle potential misalignment between the subspaces of values and keys, we decompose the values $\bV$ into two orthogonal components relative to the column space of the keys $\bK$:
\begin{equation*}
    \bV = \bV_{\parallel} + \bV_{\perp},
\end{equation*}
where the columns of $\bV_{\parallel}$ lie in $\operatorname{col}(\bK)$, and the columns of $\bV_{\perp}$ are orthogonal to it. Consequently, $\bV_{\perp}^\top \bK = \mathbf{0}$, and the memory state simplifies to:
\begin{equation*}
    \bS = (\bV_{\parallel} + \bV_{\perp})^\top \bK = \bV_{\parallel}^\top \bK.
\end{equation*}
We define the scalar quantity $\nu(\bV)$ appearing in the main text as the effective rank of the projected values:
\begin{equation*}
    \nu(\bV) \coloneqq \operatorname{er}(\bV_{\parallel}) = \frac{\|\bV_{\parallel}\|_F^2}{\|\bV_{\parallel}\|_2^2}.
\end{equation*}
Since the columns of $\bV_{\parallel}$ lie entirely within the column space of $\bK$, and assuming $\bK$ has full column rank, the matrix multiplication acts as a bijection on the row space of $\bV_{\parallel}^\top$. We now derive the lower bound for the effective rank $\operatorname{er}(\bS) = \|\bS\|_F^2 / \|\bS\|_2^2$.

First, we bound the numerator (Frobenius norm) from below. We use the property $\|\mathbf{A}\mathbf{B}\|_F \geq \|\mathbf{A}\|_F \sigma_{\min}(\mathbf{B})$, which holds strictly here because the rows of $\mathbf{V}_{\parallel}^\top$ align with the range of $\mathbf{K}$:
\begin{equation*}
    \|\bS\|_F^2 = \|\bV_{\parallel}^\top \bK\|_F^2 \geq \|\bV_{\parallel}\|_F^2 \sigma_{\min}^2(\bK).
\end{equation*}

Next, we bound the denominator (Spectral norm) from above using the standard sub-multiplicative property $\|\mathbf{A}\mathbf{B}\|_2 \leq \|\mathbf{A}\|_2 \|\mathbf{B}\|_2$:
\begin{equation*}
    \|\bS\|_2^2 = \|\bV_{\parallel}^\top \bK\|_2^2 \leq \|\bV_{\parallel}\|_2^2 \|\bK\|_2^2.
\end{equation*}

Combining these inequalities yields the lower bound:
\begin{equation*}
    \operatorname{er}(\bS) = \frac{\|\bS\|_F^2}{\|\bS\|_2^2} \geq \frac{\|\bV_{\parallel}\|_F^2 \sigma_{\min}^2(\bK)}{\|\bV_{\parallel}\|_2^2 \|\bK\|_2^2} = \operatorname{er}(\bV_{\parallel}) \frac{1}{\kappa^2(\bK)} = \frac{\nu(\bV)}{\kappa^2(\bK)}.
\end{equation*}
\qed

\subsection{Proof of Algebraic Rank}
\label{appendix:proof-rank}
We first prove the following proposition:
\begin{proposition}
    Let $\bS_0 = 0$ and $\bS_t$ be a matrix satisfying the recursion
    \begin{equation*}
        \bS_t = \bS_{t-1} (\alpha_t \bI - \beta_t \bmk_t \bmk_t^\top) + \gamma_t \bmv_t \bmk_t^\top
    \end{equation*}
    for some non-zero scalars $\alpha_t, \beta_t, \gamma_t$ and some vectors $\bmv_t$, $\bmk_t$. Then
    \begin{equation*}
        \row \bS_t \subseteq \operatorname{span} \langle \bmk_1,\ldots,\bmk_t \rangle \quad \text{and} \quad \operatorname{col} \bS_t \subseteq \operatorname{span} \langle \bmv_1,\ldots,\bmv_t \rangle.
    \end{equation*}
    Consequently,
    \begin{equation*}
        \rank \bS_t \leq \min \left( \rank \bK_t, \rank \bV_t \right)
    \end{equation*}
    where $\bK_t = (\bmk_1,\ldots,\bmk_t)$ and $\bV_t = (\bmv_1,\ldots,\bmv_t)$ are the matrices obtained by stacking the $\bmk$- and $\bmv$-vectors, respectively.
\end{proposition}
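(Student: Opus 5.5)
The plan is to argue by induction on $t$, proving both subspace inclusions at once; the rank bound then follows from elementary linear algebra.

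\textbf{Base case and induction step.} For $t=0$ we have $\bS_0 = \mathbf{0}$, whose row and column spaces are $\{\mathbf{0}\}$. These are contained in the empty span. (If one prefers to avoid $t=0$, start at $t=1$: $\bS_1 = \gamma_1 \bmv_1\bmk_1^\top$ directly has row space in $\operatorname{span}\langle\bmk_1\rangle$ and column space in $\operatorname{span}\langle\bmv_1\rangle$.)

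Assume the claim holds for $\bS_{t-1}$ and expand the recursion:
\begin{equation*}
    \bS_t = \alpha_t \bS_{t-1} - \beta_t (\bS_{t-1}\bmk_t)\bmk_t^\top + \gamma_t \bmv_t \bmk_t^\top .
\end{equation*}
Each of the three terms is handled separately.

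\emph{Row space.} The rows of $\alpha_t\bS_{t-1}$ lie in $\operatorname{span}\langle \bmk_1,\ldots,\bmk_{t-1}\rangle$ by hypothesis. Every row of the two rank-one terms is a scalar multiple of $\bmk_t^\top$. Since the row space of a sum is contained in the sum of the row spaces, $\row \bS_t \subseteq \operatorname{span}\langle\bmk_1,\ldots,\bmk_t\rangle$.

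\emph{Column space.} The columns of $\alpha_t\bS_{t-1}$ lie in $\operatorname{span}\langle\bmv_1,\ldots,\bmv_{t-1}\rangle$ by hypothesis. The only term needing care is $(\bS_{t-1}\bmk_t)\bmk_t^\top$. Its columns are multiples of $\bS_{t-1}\bmk_t$, and this vector lies in $\operatorname{col}\bS_{t-1}$, hence in the old value span. The columns of $\gamma_t\bmv_t\bmk_t^\top$ are multiples of $\bmv_t$. This closes the induction. The nonzero-ness of the scalars is not actually needed for the inclusions.

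\textbf{Rank bound.} Use $\rank \bS_t = \dim\row\bS_t = \dim\operatorname{col}\bS_t$. Monotonicity of dimension under inclusion gives $\rank\bS_t \le \dim\operatorname{span}\langle\bmk_1,\ldots,\bmk_t\rangle = \rank\bK_t$, and likewise $\rank\bS_t \le \rank\bV_t$. Taking the minimum gives the bound. The extra inequality $\le t$ in Equation~\eqref{eq:rank-S-bound} is immediate, because $\bK_t$ has $t$ columns.

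\textbf{Scope and main obstacle.} The argument covers plain linear attention ($\alpha_t=1$, $\beta_t=0$ formally, or take the recursion directly), DeltaNet ($\alpha_t=1$, $\gamma_t=\beta_t$) and the gated variant ($\alpha_t$ the decay). The only point needing attention is the column-space inclusion for the erase term $\bS_{t-1}\bmk_t\bmk_t^\top$. That step works because the vector $\bS_{t-1}\bmk_t$ is an image of $\bS_{t-1}$. Beyond this the proof is routine bookkeeping.
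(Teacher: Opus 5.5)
Your proof is correct and follows essentially the paper's route. Both arguments use induction on $t$. The row-space step writes $\bS_t$ as $\alpha_t\bS_{t-1}$ plus rank-one terms with right factor $\bmk_t^\top$. The column-space step uses $\bS_{t-1}\bmk_t \in \operatorname{col}\bS_{t-1}$, which is the paper's $\operatorname{col}(\bA\bB)\subseteq\operatorname{col}(\bA)$ in specialized form. You also spell out the dimension argument for the rank bound, which the paper leaves implicit.
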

\begin{proof}
We start by showing the first claim by induction. For $t=1$, we have $\bS_1 = \gamma_1 \bmv_1 \bmk_1^\top$. Consider $t > 1$ and assume the claim is true for every $s < t$. Then
\begin{align*}
    \bS_t &= \bS_{t-1}(\alpha_t \bI - \beta_t \bmk_t \bmk_t^\top) + \gamma_t \bmv_t \bmk_t^\top \\
    &= \alpha_t \bS_{t-1} + (\gamma_t \bmv_t - \beta_t \bS_{t-1}\bmk_t)\bmk_t^\top \\
    &= \alpha_t \bS_{t-1} + \bmu_t\bmk_t^\top,
\end{align*}
where we defined $\bmu_t \coloneqq \gamma_t \bmv_t - \beta_t \bS_{t-1}\bmk_t$. Next, we use that for two matrices $\bA$ and $\bB$,
\begin{equation*}
\text{row}(\bA + \bB) \subseteq \text{span}(\text{row}(\bA), \text{row}(\bB))
\end{equation*}
and thus
\begin{align*}
\text{row}(\alpha_t \bS_{t-1} + \bmu_t \bmk_t^\top) &\subseteq \text{span}(\text{row}(\alpha_t \bS_{t-1}), \text{row}(\bmu_t \bmk_t^\top)) \\
&= \text{span}(\text{row}(\bS_{t-1}), \bmk_t) \\
&\subseteq \text{span}(\bmk_1,...,\bmk_t).
\end{align*}
This concludes the first claim.

For the second claim, we proceed analogously.

Indeed, we again show this claim by induction. The case $t=1$ is clear. For any $t>1$, we compute, using \mbox{$\text{col}(\bA\bB) \subseteq \text{col}(\bA)$},
\begin{align*}
\text{col}(\bS_t) &= \text{col}(\bS_{t-1}(\alpha_t \bI - \beta_t \bmk_t \bmk_t^\top), \gamma_t \bmv_t \bmk_t^\top) \\
&\subseteq \text{col}(\bS_{t-1}, \gamma_t \bmv_t \bmk_t^\top) \\
&\subseteq \text{col}(\bmv_1,\ldots,\bmv_t).
\end{align*}
This concludes the proof.
\end{proof}

Interestingly, PCA-based transformations are guaranteed to not decrease the rank of the keys:
\begin{lemma}[Monotonicity of Rank Utilization]
    Pruning the low-variance directions via PCA strictly increases (or maintains) rank utilization. That is, for any $d_k' < d_k$:
    \begin{equation*}
        u(\bK') \geq u(\bK).
    \end{equation*}
\end{lemma}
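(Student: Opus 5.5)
The plan is to reduce the claim to an elementary statement about averages of a sorted sequence. First I fix the setting. The keys are collected in $\bK \in \mathbb{R}^{N, d_k}$ with $N \geq d_k$, so $\min(N, d_k) = d_k$. I use the uncentered empirical covariance $\hat{\mathbf{\Sigma}} = \frac{1}{N}\bK^\top \bK$ from Section~\ref{sec:method}. PCA pruning takes $\bT \in \mathbb{R}^{d_k', d_k}$ whose rows are the top-$d_k'$ eigenvectors of $\hat{\mathbf{\Sigma}}$, and sets $\bK' = \bK\bT^\top \in \mathbb{R}^{N, d_k'}$. Accordingly, $u(\bK) = \operatorname{er}(\bK)/d_k$ and $u(\bK') = \operatorname{er}(\bK')/d_k'$, in the sense of Definition~\ref{def:rank-utilization}.

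The first step identifies the spectrum of $\bK'$. Write the SVD $\bK = \sum_{i=1}^{d_k} \sigma_i \mathbf{a}_i \mathbf{w}_i^\top$ with $\sigma_1 \geq \dots \geq \sigma_{d_k} \geq 0$. The eigenvectors of $\hat{\mathbf{\Sigma}}$ are the right singular vectors $\mathbf{w}_i$, so $\bT^\top = [\mathbf{w}_1, \dots, \mathbf{w}_{d_k'}]$. By orthonormality of the $\mathbf{w}_i$,
\begin{equation*}
\bK' = \sum_{i=1}^{d_k'} \sigma_i \mathbf{a}_i \mathbf{e}_i^\top ,
\end{equation*}
which is an SVD of $\bK'$. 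Its singular values are therefore exactly $\sigma_1, \dots, \sigma_{d_k'}$. In particular the leading singular value $\sigma_1$ is preserved, and $\|\bK'\|_F^2 = \sum_{i \le d_k'} \sigma_i^2$.

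The second step substitutes this into the definitions. Put $s_i \coloneqq \sigma_i^2/\sigma_1^2$, which is nonincreasing in $i$, and assume $\bK \neq \mathbf{0}$ so that $\sigma_1 > 0$. Then
\begin{equation*}
u(\bK') = \frac{1}{d_k'} \sum_{i=1}^{d_k'} s_i, \qquad u(\bK) = \frac{1}{d_k} \sum_{i=1}^{d_k} s_i .
\end{equation*}
The claim is now that the average of the first $m$ terms of a nonincreasing sequence, $A_m \coloneqq \frac{1}{m}\sum_{i \le m} s_i$, is nonincreasing in $m$.

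The third step proves this monotonicity. For each $m$,
\begin{equation*}
A_m - A_{m+1} = \frac{1}{m+1}\left(A_m - s_{m+1}\right) \geq 0,
\end{equation*}
since $A_m$ is an average of terms $s_1, \dots, s_m$, each at least $s_{m+1}$. Iterating from $m = d_k'$ to $m = d_k$ gives $u(\bK') = A_{d_k'} \geq A_{d_k} = u(\bK)$. Equality holds exactly when $s_{d_k'+1} = \dots = s_{d_k}$ all equal $A_{d_k'}$, i.e.\ when the spectrum is flat. So the claimed ``strictly increases'' is really ``increases or maintains'', as the parenthetical already concedes.

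The main obstacle is not the inequality but pinning down the setting. Three points need care: the denominator of $u(\bK)$ must be $d_k$ (this needs $N \geq d_k$); PCA must be taken with respect to $\bK^\top \bK$ itself rather than a mean-centered covariance; and $\bK'$ must be defined as a projection onto the principal directions. With centering, the retained directions are no longer the top right singular vectors of $\bK$, and the argument would need an interlacing bound instead. I would state these conventions explicitly at the start of the proof. I would also note that if $\bK$ is rank-deficient, $u(\bK)$ is still defined as above and the argument is unchanged, since zero singular values only lower the tail averages.
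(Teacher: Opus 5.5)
Your proof is correct and follows the paper's argument, which reduces to the fact that the average of the top $d_k'$ normalized squared singular values of a nonincreasing spectrum is at least the full average. You also make explicit what the paper uses silently: that $\mathbf{K}\mathbf{T}^\top$ has singular values exactly $\sigma_1,\dots,\sigma_{d_k'}$ (so $\sigma_1$ is shared), and that the PCA is uncentered. And you replace the paper's contradiction with the direct identity $A_m - A_{m+1} = (A_m - s_{m+1})/(m+1)$.

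One correction to your closing aside: with mean-centered PCA the inequality can genuinely fail, so interlacing would not rescue it. Take mutually orthogonal key columns $\mathbf{1}$, a zero-sum $\pm 1$ column, and $\sqrt{\epsilon}$ times another zero-sum $\pm 1$ column with $\epsilon<1$, and set $d_k'=2$. Centered PCA discards the constant column, giving $u(\mathbf{K}\mathbf{T}^\top) = (1+\epsilon)/2 < (2+\epsilon)/3 = u(\mathbf{K})$. The uncentered convention is therefore necessary, not just convenient.
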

\begin{proof}
    Assume towards a contradiction that the utilization decreases, i.e., $u(\bK') < u(\bK)$. This means that
    \begin{equation*}
         \frac{1}{d_k'} \sum_{i=1}^{d_k'} \sigma_i^2 < \frac{1}{d_k} \sum_{i=1}^{d_k} \sigma_i^2.
    \end{equation*}
    In words, the average energy of the top $d_k'$ principal components is strictly less than the average energy of the full spectrum. This is a contradiction, as the singular values are non-increasing ($\sigma_1 \geq \dots \geq \sigma_{d_k}$). Thus, our assumption must have been wrong.
\end{proof}

\subsection{Proof of Theorem~\ref{thm:snr-degradation}}
\label{appendix:output-util-proof}

We analyze the error amplification ratio $R$, defined as the relative output error divided by the input noise-to-signal ratio:
\begin{equation*}
    R = \frac{\| \mathbf{o} - \mathbf{o}^* \|_2 / \|\mathbf{o}^*\|_2}{\|\mathbf{n}\|_2 / \|\mathbf{q}^*\|_2} = \frac{\|\mathbf{S}\mathbf{n}\|_2}{\|\mathbf{n}\|_2} \cdot \frac{\|\mathbf{q}^*\|_2}{\|\mathbf{S}\mathbf{q}^*\|_2}.
\end{equation*}
This expression represents the Rayleigh quotient of the noise divided by the Rayleigh quotient of the signal.

We start by showing the lower bound. To find the minimum error, we start by projecting the response to the noise onto $\bmu_1$:
\begin{equation*}
    \|\mathbf{S}\mathbf{n}\|_2 \geq |\mathbf{u}_1^\top \mathbf{S}\mathbf{n}| = \sigma_1 |\mathbf{w}_1^\top \mathbf{n}| = \sigma_1 \delta \|\mathbf{n}\|_2.
\end{equation*}
Next, we upper bound the response to the true signal, using Cauchy-Schwarz:
\begin{equation*}
    \|\mathbf{S}\mathbf{q}^*\|_2 \leq \|\mathbf{S}\|_F \|\mathbf{q}^*\|_2.
\end{equation*}
Substituting these into $R$:
\begin{equation*}
    R \geq \frac{\sigma_1 \delta \|\mathbf{n}\|_2}{\|\mathbf{n}\|_2} \cdot \frac{\|\mathbf{q}^*\|_2}{\|\mathbf{S}\|_F \|\mathbf{q}^*\|_2} = \delta \frac{\sigma_1}{\|\mathbf{S}\|_F} = \delta \frac{1}{\sqrt{\operatorname{er}(\mathbf{S})}}.
\end{equation*}
Using the definition $\operatorname{er}(\mathbf{S}) = d \cdot u(\mathbf{S})$, we obtain the lower bound:
\begin{equation*}
    R \geq \frac{\delta}{\sqrt{d \cdot u(\mathbf{S})}}.
\end{equation*}

Next, we show the upper bound. To find the maximum error, we first upper bound the response of the system to the noise:
\begin{equation*}
        \|\mathbf{S}\mathbf{n}\|_2 \leq \|\mathbf{S}\|_F \|\mathbf{n}\|_2.
\end{equation*}
Next, we lower bound the response to the true query by projecting onto $\bmu_1$:
\begin{equation*}
    \|\mathbf{S}\mathbf{q}^*\|_2 \geq |\mathbf{u}_1^\top \mathbf{S}\mathbf{q}^*| = \sigma_1 |\mathbf{w}_1^\top \mathbf{q}^*| = \sigma_1 \gamma \|\mathbf{q}^*\|_2.
\end{equation*}
Again, substituting these into $R$:
\begin{equation*}
    R \leq \frac{\|\mathbf{S}\|_F \|\mathbf{n}\|_2}{\|\mathbf{n}\|_2} \cdot \frac{\|\mathbf{q}^*\|_2}{\sigma_1 \gamma \|\mathbf{q}^*\|_2} = \frac{1}{\gamma} \frac{\|\mathbf{S}\|_F}{\sigma_1} = \frac{\sqrt{\operatorname{er}(\mathbf{S})}}{\gamma}.
\end{equation*}
Using $\operatorname{er}(\mathbf{S}) = d \cdot u(\mathbf{S})$, we obtain the upper bound:
\begin{equation*}
    R \leq \frac{\sqrt{d \cdot u(\mathbf{S})}}{\gamma}.
\end{equation*}
\qed

\subsection{Proof of Corollary~\ref{cor:expected-error}}
\label{app:proof-expected-error}

Follows by computing the bounds derived in Theorem~\ref{thm:snr-degradation}. The noise follows $\mathbf{n} \sim \mathcal{N}(\mathbf{0}, \xi^2 \mathbf{I})$. Note that the expected norm of an isotropic Gaussian satisfies
\begin{equation*}
    \mu = \mathbb{E}[\|\mathbf{n}\|_2] = \xi \sqrt{2} \frac{\Gamma(\frac{d+1}{2})}{\Gamma(\frac{d}{2})}.
\end{equation*}
Furthermore, the expected alignment $z = \mathbf{w}_1^\top \mathbf{n}$ of an isotropic Gaussian $\bmn$ with a vector $\bmw_1$ can be computed as
\begin{equation*}
    \mathbb E[\delta \|\bmn\|_2] = \mathbb{E}[|z|] = \xi \sqrt{\frac{2}{\pi}}.
\end{equation*}
\qed

\subsection{Linear Algebra}
\label{app:matrix-derivative}
We first need to show that
\begin{equation*}
    \partial_{\text{vec}(\bB)}\bA\bB \bmx = \bmx^\top \otimes \bA
\end{equation*}
for matrices $\bA, \bB$ and a vector $\bmx$. But this follows immediately from the matrix identity~\citep[Equation (520)]{petersen2008matrix}
\begin{equation*}
    \bA\bB\bmx = (\bmx^\top \otimes \bA)\text{vec}(\bB)
\end{equation*}
and then taking the derivative.

Next, we need to show that
\begin{equation*}
    \kappa(\bmx \otimes \bA) = \kappa(\bA).
\end{equation*}
But this is follows from~\citep[Theorem 4.2.15]{horn1994topics} and writing $\kappa$ as a fraction of singular values, so that
\begin{equation*}
    \kappa(\bmx \otimes \bA) = \kappa(\bmx) \kappa(\bA) = \kappa(\bA).
\end{equation*}

\qed

\section{Extended Results}
\label{app:extended-results}

\subsection{Throughput Measurements}
Table~\ref{tab:speedup_whole_model} shows the speedup achieved by compressing the state space. Compared to Table~\ref{tab:mixer_scaling}, it measures the throughput of the whole model during training, not just the sequence mixer layer.
\begin{table*}[t]
\centering
\caption{Throughput measurements (thousands of tokens/second) and relative speedup factors for DeltaNet and Gated DeltaNet models on a single H100. Baselines (0\%) are uncompressed models with head key dimension $d_k=128$ for DeltaNet and $d_k=256$ for Gated DeltaNet.}
\label{tab:speedup_whole_model}
\begin{small}
\setlength{\tabcolsep}{3pt}
\begin{tabular}{c|cc|cc|cc|cc|cc}
\toprule
 & \multicolumn{2}{c|}{\textbf{DeltaNet 370M}} & \multicolumn{2}{c|}{\textbf{DeltaNet 1.3B}} & \multicolumn{2}{c|}{\textbf{DeltaNet 2.7B}} & \multicolumn{2}{c|}{\textbf{Gated DeltaNet 370M}} & \multicolumn{2}{c}{\textbf{Gated DeltaNet 1.3B}} \\
\textbf{$d_k$} & Throughput & Speedup & Throughput & Speedup & Throughput & Speedup & Throughput & Speedup & Throughput & Speedup \\
\midrule
0\%    & 416.2 & $1.00\times$ & 150.7 & $1.00\times$ & 81.5 & $1.00\times$ & 396.5 & $1.00\times$ & 138.4 & $1.00\times$ \\
50\%   & 474.8 & $1.14\times$ & 169.4 & $1.12\times$ & 91.7 & $1.12\times$ & 469.0 & $1.18\times$ & 157.8 & $1.14\times$ \\
75\%   & 505.9 & $1.22\times$ & 180.4 & $1.20\times$ & 98.6 & $1.21\times$ & 504.5 & $1.27\times$ & 170.1 & $1.22\times$ \\
87.5\% & 521.6 & $1.25\times$ & 185.9 & $1.23\times$ & 99.3 & $1.22\times$ & 521.9 & $1.32\times$ & 176.3 & $1.26\times$ \\
\bottomrule
\end{tabular}
\end{small}
\end{table*}

\subsection{Language Modeling}
This subsection contains the extensive zero-shot~\citep{eval-harness} and real-world retrieval~\citep{arora2024just} task evaluations. Table~\ref{tab:unified_summary_50} contains averaged results for all models at a fixed compression ratio of 50\%. Tables~\ref{tab:zs-dn-370M}-\ref{tab:recall-gdn-13B}.

\begin{table*}[t]
\centering
\caption{Comparison of compression methods at a 50\% compression ratio pre-RFT across all model sizes, both pre- and post-RFT. We report Perplexity for WikiText (\textbf{Wiki}) and Lambada (\textbf{LMB}), as well as the average accuracy for Zero-Shot (\textbf{ZS}) and Retrieval (\textbf{Ret}) tasks (best results in \textbf{bold}, second best in \underline{underlined}).}
\label{tab:unified_summary_50}
\begin{small}
\begin{sc}
\setlength{\tabcolsep}{4.5pt}
\begin{tabular}{ll|ccc|cccc}
\toprule
& & \multicolumn{3}{c|}{\textbf{Pre-RFT}} & \multicolumn{4}{c}{\textbf{Post-RFT}} \\
\cmidrule(lr){3-5} \cmidrule(lr){6-9}
\textbf{Model} & \textbf{Method} & \textbf{Wiki} $\downarrow$ & \textbf{LMB} $\downarrow$ & \textbf{ZS Avg} $\uparrow$ & \textbf{Wiki} $\downarrow$ & \textbf{LMB} $\downarrow$ & \textbf{ZS Avg} $\uparrow$ & \textbf{Ret Avg} $\uparrow$ \\
\midrule
\multirow[t]{6}{*}{\textbf{DeltaNet 370M}}
& L1 & 3843.5 & 57304.8 & 33.0 & 32.5 & 56.8 & 43.0 & 15.8 \\
& Rand & 1032.6 & 2882.8 & 34.3 & 32.6 & 49.9 & 42.6 & 16.1 \\
& DRRQR  & \textbf{31.4} & \underline{36.0} & \underline{44.7} & \textbf{29.4} & \underline{36.6} & \textbf{44.5} & \textbf{17.8} \\
& Grad & \underline{31.7} & \textbf{33.3} & \textbf{44.9} & \textbf{29.4} & \textbf{36.3} & \underline{44.4} & \underline{17.8} \\
& S-Wanda  & 915.9 & 6118.0 & 35.1 & \underline{32.1} & 53.2 & 43.4 & 15.6 \\
\midrule
& Baseline & 29.8 & 37.0 & 44.2 & 29.8 & 37.0 & 44.2 & 20.8 \\
\midrule
\multirow[t]{6}{*}{\textbf{DeltaNet 1.3B}}
& L1 & 66.1 & 298.7 & 41.6 & 19.0 & 16.1 & 48.2 & 29.1 \\
& Rand & 41.2 & 68.0 & 44.4 & 19.4 & 14.4 & 48.2 & 30.3 \\
& DRRQR  & \underline{20.5} & \underline{47.2} & \underline{45.7} & \underline{17.5} & \underline{11.8} & \underline{49.7} & \underline{31.1} \\
& Grad & \textbf{18.3} & \textbf{15.4} & \textbf{48.8} & \textbf{17.2} & \textbf{11.3} & \textbf{50.3} & \textbf{33.3} \\
& S-Wanda  & 60.0 & 361.3 & 40.8 & 18.5 & 15.9 & 48.3 & 28.9 \\
\midrule
& Baseline & 16.7 & 11.9 & 50.0 & 16.7 & 11.9 & 50.0 & 40.1 \\
\midrule
\multirow[t]{6}{*}{\textbf{Gated DeltaNet 370M}}
& L1 & 41.8 & 156.6 & 40.4 & 29.2 & 44.6 & 43.5 & 18.1 \\
& Rand & 35.5 & 50.7 & 43.4 & 29.3 & 46.5 & 43.6 & 18.2 \\
& DRRQR  & \textbf{31.6} & \underline{39.4} & \textbf{44.0} & \textbf{28.7} & \underline{40.6} & \textbf{43.8} & \underline{18.3} \\
& Grad & \underline{33.3} & \textbf{39.2} & \underline{43.8} & \textbf{28.7} & \textbf{39.4} & \underline{43.7} & \textbf{19.0} \\
& S-Wanda  & 40.0 & 141.5 & 40.9 & \underline{29.2} & 44.6 & 43.5 & 18.2 \\
\midrule
& Baseline & 28.8 & 35.9 & 44.4 & 28.8 & 35.9 & 44.4 & 23.3 \\
\midrule
\multirow[t]{6}{*}{\textbf{Gated DeltaNet 1.3B}}
& L1 & 26.5 & 34.0 & 53.3 & 16.8 & 13.0 & 57.2 & 32.9 \\
& Rand & 18.8 & 14.4 & 56.5 & 16.9 & \underline{11.5} & 57.8 & 32.1 \\
& DRRQR  & \textbf{17.3} & \underline{14.2} & \underline{57.5} & \textbf{16.3} & 12.0 & \underline{58.0} & 33.0 \\
& Grad & \underline{17.4} & \textbf{10.1} & \textbf{58.6} & \underline{16.4} & \textbf{10.5} & \textbf{58.3} & \textbf{34.3} \\
& S-Wanda  & 21.9 & 24.1 & 55.1 & 16.6 & 11.7 & 57.8 & \underline{33.2} \\
\midrule
& Baseline & 16.8 & 9.7 & 59.4 & 16.8 & 9.7 & 59.4 & 40.3 \\
\bottomrule
\end{tabular}
\end{sc}
\end{small}
\end{table*}

\begin{table}[t]
\centering
\caption{Zero-shot performance of DeltaNet 370M models, evaluated using the \emph{lm-eval-harness}~\citep{eval-harness}, given different compression ratios. Pre- and post RFT.}
\label{tab:zs-dn-370M}
\scriptsize
\setlength{\tabcolsep}{3.5pt}
\resizebox{\columnwidth}{!}{%
\begin{tabular}{lll|cc|cccccc|c}
\toprule
\thead[t]{RFT} & \thead[t]{Method} & \thead[t]{Compr.} & \thead[t]{Wiki.\\ ppl $\downarrow$} & \thead[t]{LMB.\\ ppl $\downarrow$} & \thead[t]{ARC-e\\ acc\_n $\uparrow$} & \thead[t]{ARC-c\\ acc\_n $\uparrow$} & \thead[t]{Hella.\\ acc\_n $\uparrow$} & \thead[t]{Wino.\\ acc $\uparrow$} & \thead[t]{PIQA\\ acc\_n $\uparrow$} & \thead[t]{LMB.\\ acc $\uparrow$} & \thead[t]{Avg\\ $\uparrow$} \\
\midrule
$\times$ & L1 & 75\% & 100083.0 & 9661610.6 & 27.2 & 28.0 & 27.7 & 52.2 & 50.9 & 0.0 & 31.0 \\
$\times$ & Rand & 75\% & 136955.3 & 5879873.3 & 27.0 & 27.4 & 26.6 & 48.9 & 51.4 & 0.0 & 30.2 \\
$\times$ & DRRQR & 75\% & \textbf{42.0} & \underline{74.1} & 51.9 & 27.5 & 38.7 & 50.9 & 64.6 & 25.3 & \underline{43.1} \\
$\times$ & Grad & 75\% & \underline{46.7} & \textbf{58.0} & 52.0 & 27.0 & 38.5 & 49.8 & 64.7 & 29.1 & \textbf{43.5} \\
$\times$ & S-Wanda & 75\% & 33512.5 & 4955134.6 & 28.5 & 26.8 & 27.7 & 51.1 & 48.9 & 0.0 & 30 \\
\addlinespace
\checkmark & L1 & 75\% & 36.3 & 146.7 & 46.0 & 26.2 & 36.1 & 51.0 & 63.0 & 17.1 & 39.9 \\
\checkmark & Rand & 75\% & 40.3 & 129.9 & 46.8 & 26.1 & 35.8 & 51.9 & 62.8 & 17.7 & 40.2 \\
\checkmark & DRRQR & 75\% & \textbf{31.4} & \underline{43.7} & 52.0 & 27.5 & 38.1 & 52.4 & 64.6 & 29.7 & \underline{44.0} \\
\checkmark & Grad & 75\% & \underline{31.5} & \textbf{42.5} & 52.1 & 28.0 & 38.0 & 51.6 & 64.8 & 30.2 & \textbf{44.1} \\
\checkmark & S-Wanda & 75\% & 35.4 & 101.7 & 46.5 & 26.7 & 37.1 & 50.9 & 63.3 & 20.3 & 40.8 \\
\midrule
$\times$ & L1 & 50\% & 3843.5 & 57304.8 & 35.8 & 23.6 & 31.2 & 50.8 & 55.0 & 1.6 & 33.0 \\
$\times$ & Rand & 50\% & 1032.6 & 2882.8 & 37.8 & 23.7 & 30.3 & 50.5 & 56.6 & 6.9 & 34.3 \\
$\times$ & DRRQR & 50\% & \textbf{31.4} & \underline{36.0} & 52.1 & 27.6 & 38.7 & 52.0 & 65.1 & 32.7 & \underline{44.7} \\
$\times$ & Grad & 50\% & \underline{31.7} & \textbf{33.3} & 51.8 & 27.9 & 38.7 & 52.2 & 65.5 & 33.4 & \textbf{44.9} \\
$\times$ & S-Wanda & 50\% & 915.9 & 6118.0 & 39.9 & 24.3 & 33.5 & 50.1 & 57.4 & 5.4 & 35.1 \\
\addlinespace
\checkmark & L1 & 50\% & 32.5 & 56.8 & 49.2 & 27.0 & 37.7 & 53.0 & 64.6 & 26.4 & 43.0 \\
\checkmark & Rand & 50\% & 32.6 & 49.9 & 49.2 & 27.9 & 37.6 & 49.0 & 64.9 & 27.3 & 42.6 \\
\checkmark & DRRQR & 50\% & \textbf{29.4} & \underline{36.6} & 51.8 & 27.7 & 38.1 & 52.1 & 65.3 & 31.7 & \textbf{44.5} \\
\checkmark & Grad & 50\% & \textbf{29.4} & \textbf{36.3} & 51.0 & 27.7 & 38.1 & 52.1 & 65.5 & 32.0 & \underline{44.4} \\
\checkmark & S-Wanda & 50\% & 32.1 & 53.2 & 51.1 & 26.1 & 37.9 & 52.6 & 65.0 & 27.7 & 43.4 \\
\midrule
$\times$ & L1 & 40\% & 1232.0 & 5431.6 & 39.6 & 23.9 & 34.0 & 50.2 & 56.4 & 5.7 & 34.9 \\
$\times$ & Rand & 40\% & 121.3 & 329.1 & 42.8 & 25.4 & 34.2 & 49.8 & 61.5 & 15.0 & 38.1 \\
$\times$ & DRRQR & 40\% & \textbf{30.2} & \underline{33.4} & 52.6 & 28.0 & 38.7 & 51.9 & 65.2 & 33.1 & \textbf{44.9} \\
$\times$ & Grad & 40\% & \underline{30.4} & \textbf{32.7} & 52.2 & 27.7 & 38.7 & 51.8 & 65.3 & 33.4 & \textbf{44.9} \\
$\times$ & S-Wanda & 40\% & 276.8 & 603.4 & 45.3 & 25.4 & 35.8 & 50.3 & 60.3 & 13.4 & 38.4 \\
\addlinespace
\checkmark & L1 & 40\% & 31.7 & 53.9 & 50.8 & 26.7 & 37.7 & 53.0 & 65.1 & 26.5 & 43.3 \\
\checkmark & Rand & 40\% & 31.0 & 43.6 & 49.1 & 27.6 & 38.0 & 52.5 & 65.2 & 29.2 & 43.6 \\
\checkmark & DRRQR & 40\% & \textbf{29.0} & \underline{36.3} & 51.4 & 27.8 & 38.2 & 52.0 & 65.4 & 32.0 & \textbf{44.5} \\
\checkmark & Grad & 40\% & \underline{29.1} & \textbf{35.7} & 51.5 & 27.4 & 38.2 & 51.5 & 65.5 & 32.1 & \underline{44.4} \\
\checkmark & S-Wanda & 40\% & 31.2 & 48.3 & 50.7 & 26.9 & 38.0 & 52.4 & 64.8 & 28.8 & 43.6 \\
\midrule
$\times$ & L1 & 30\% & 270.1 & 307.0 & 45.6 & 24.7 & 36.2 & 50.2 & 60.1 & 18.5 & 39.2 \\
$\times$ & Rand & 30\% & 52.7 & 103.2 & 48.4 & 26.8 & 36.1 & 50.4 & 62.9 & 22.7 & 41.2 \\
$\times$ & DRRQR & 30\% & \textbf{29.3} & \textbf{32.7} & 52.4 & 27.7 & 38.6 & 52.7 & 65.0 & 33.5 & \textbf{45.0} \\
$\times$ & Grad & 30\% & \underline{29.4} & \underline{33.0} & 51.7 & 27.9 & 38.6 & 52.5 & 64.7 & 33.4 & \underline{44.8} \\
$\times$ & S-Wanda & 30\% & 138.9 & 133.9 & 46.3 & 26.0 & 37.1 & 50.0 & 62.0 & 22.7 & 40.7 \\
\addlinespace
\checkmark & L1 & 30\% & 30.8 & 50.9 & 51.1 & 27.0 & 37.9 & 53.6 & 64.7 & 27.4 & 43.6 \\
\checkmark & Rand & 30\% & 30.0 & 40.7 & 50.1 & 27.6 & 38.1 & 52.2 & 65.5 & 30.3 & 44.0 \\
\checkmark & DRRQR & 30\% & \textbf{28.8} & \textbf{36.0} & 51.3 & 27.6 & 38.2 & 52.1 & 65.1 & 32.3 & \underline{44.4} \\
\checkmark & Grad & 30\% & \textbf{28.8} & \textbf{36.0} & 51.3 & 27.7 & 38.1 & 51.9 & 65.5 & 32.3 & \textbf{44.5} \\
\checkmark & S-Wanda & 30\% & 30.7 & 45.4 & 51.2 & 27.2 & 38.0 & 52.6 & 64.9 & 29.6 & 43.9 \\
\midrule
\textbf{Baseline} & -- & 0\% & 29.8 & 37.0 & 51.1 & 27.6 & 38.1 & 52.2 & 65.0 & 31.3 & 44.2 \\
\bottomrule
\end{tabular}
}
\end{table}

\begin{table}[t]
\centering
\caption{Zero-shot performance of DeltaNet 1.3B models, evaluated using the \emph{lm-eval-harness}~\citep{eval-harness}, given different compression ratios. Pre- and post RFT.}
\label{tab:zs-dn-13B}
\centering
\scriptsize
\setlength{\tabcolsep}{3.5pt}
\resizebox{\columnwidth}{!}{%
\begin{tabular}{lll|cc|cccccc|c}
\toprule
\thead[t]{RFT} & \thead[t]{Method} & \thead[t]{Compr.} & \thead[t]{Wiki.\\ ppl $\downarrow$} & \thead[t]{LMB.\\ ppl $\downarrow$} & \thead[t]{ARC-e\\ acc\_n $\uparrow$} & \thead[t]{ARC-c\\ acc\_n $\uparrow$} & \thead[t]{Hella.\\ acc\_n $\uparrow$} & \thead[t]{Wino.\\ acc $\uparrow$} & \thead[t]{PIQA\\ acc\_n $\uparrow$} & \thead[t]{LMB.\\ acc $\uparrow$} & \thead[t]{Avg\\ $\uparrow$} \\
\midrule
$\times$ & L1 & 75\% & 2860.8 & 6728.9 & 36.0 & 26.4 & 41.5 & 51.9 & 62.4 & 2.6 & 36.8 \\
$\times$ & Rand & 75\% & 53664.2 & 3585.3 & 37.4 & 26.9 & 36.5 & 50.2 & 60.3 & 3.6 & 35.8 \\
$\times$ & DRRQR & 75\% & \underline{43.9} & \underline{375.5} & 43.6 & 27.7 & 49.1 & 51.3 & 69.5 & 9.9 & \underline{41.8} \\
$\times$ & Grad & 75\% & \textbf{27.0} & \textbf{61.5} & 49.7 & 27.0 & 49.3 & 52.3 & 69.8 & 22.4 & \textbf{45.1} \\
$\times$ & S-Wanda & 75\% & 3160.0 & 6856.0 & 37.0 & 27.4 & 40.5 & 51.8 & 64.3 & 2.0 & 37.2 \\
\addlinespace
\checkmark & L1 & 75\% & 20.2 & 26.6 & 48.1 & 25.8 & 47.3 & 51.1 & 68.7 & 33.1 & 45.7 \\
\checkmark & Rand & 75\% & 22.9 & 22.9 & 47.8 & 27.1 & 46.5 & 51.5 & 69.7 & 35.3 & 46.3 \\
\checkmark & DRRQR & 75\% & \underline{19.2} & \underline{17.5} & 50.5 & 27.3 & 48.7 & 53.5 & 69.6 & 40.5 & \underline{48.4} \\
\checkmark & Grad & 75\% & \textbf{19.0} & \textbf{14.7} & 50.7 & 26.8 & 49.1 & 52.4 & 69.9 & 43.6 & \textbf{48.8} \\
\checkmark & S-Wanda & 75\% & 20.0 & 23.1 & 48.6 & 26.4 & 47.2 & 51.9 & 69.5 & 35.6 & 46.5 \\
\midrule
$\times$ & L1 & 50\% & 66.1 & 298.7 & 42.4 & 27.2 & 47.9 & 54.4 & 67.1 & 10.6 & 41.6 \\
$\times$ & Rand & 50\% & 41.2 & 68.0 & 46.9 & 27.8 & 47.9 & 53.7 & 69.5 & 20.4 & 44.4 \\
$\times$ & DRRQR & 50\% & \underline{20.5} & \underline{47.2} & 46.5 & 27.8 & 51.3 & 53.0 & 69.9 & 25.5 & \underline{45.7} \\
$\times$ & Grad & 50\% & \textbf{18.3} & \textbf{15.4} & 48.8 & 27.5 & 50.9 & 53.4 & 70.2 & 42.2 & \textbf{48.8} \\
$\times$ & S-Wanda & 50\% & 60.0 & 361.3 & 40.8 & 27.0 & 47.1 & 53.7 & 66.4 & 9.9 & 40.8 \\
\addlinespace
\checkmark & L1 & 50\% & 19.0 & 16.1 & 48.9 & 27.1 & 48.6 & 53.7 & 69.7 & 41.1 & 48.2 \\
\checkmark & Rand & 50\% & 19.4 & 14.4 & 48.5 & 26.2 & 48.6 & 52.5 & 69.9 & 43.4 & 48.2 \\
\checkmark & DRRQR & 50\% & \underline{17.5} & \underline{11.8} & 51.0 & 26.6 & 49.7 & 53.3 & 69.7 & 48.2 & \underline{49.7} \\
\checkmark & Grad & 50\% & \textbf{17.2} & \textbf{11.3} & 51.2 & 26.5 & 49.9 & 54.5 & 70.7 & 48.7 & \textbf{50.3} \\
\checkmark & S-Wanda & 50\% & 18.5 & 15.9 & 49.7 & 27.5 & 48.5 & 52.0 & 70.2 & 41.5 & 48.3 \\
\midrule
$\times$ & L1 & 40\% & 45.2 & 194.4 & 44.1 & 27.3 & 48.2 & 53.8 & 68.5 & 13.6 & 42.6 \\
$\times$ & Rand & 40\% & 23.9 & \underline{30.0} & 49.3 & 27.6 & 49.1 & 52.3 & 69.3 & 30.6 & 46.4 \\
$\times$ & DRRQR & 40\% & \underline{19.3} & 37.8 & 47.8 & 28.3 & 52.0 & 54.4 & 69.5 & 28.4 & \underline{46.7} \\
$\times$ & Grad & 40\% & \textbf{17.9} & \textbf{13.0} & 48.8 & 26.9 & 51.0 & 53.5 & 70.3 & 45.6 & \textbf{49.4} \\
$\times$ & S-Wanda & 40\% & 44.7 & 189.0 & 42.6 & 27.6 & 47.8 & 54.1 & 68.3 & 13.7 & 42.4 \\
\addlinespace
\checkmark & L1 & 40\% & 18.5 & 13.7 & 49.8 & 27.1 & 48.8 & 52.2 & 69.8 & 44.6 & 48.7 \\
\checkmark & Rand & 40\% & 18.6 & 13.0 & 49.6 & 26.4 & 48.9 & 52.8 & 70.2 & 44.8 & 48.8 \\
\checkmark & DRRQR & 40\% & \textbf{17.1} & \underline{11.1} & 51.5 & 27.0 & 49.9 & 53.9 & 69.7 & 49.2 & \underline{50.2} \\
\checkmark & Grad & 40\% & \textbf{17.1} & \textbf{10.4} & 50.8 & 26.8 & 50.1 & 54.5 & 70.7 & 50.8 & \textbf{50.6} \\
\checkmark & S-Wanda & 40\% & \underline{18.2} & 13.7 & 50.7 & 27.2 & 48.9 & 51.8 & 70.3 & 45.0 & 49.0 \\
\midrule
$\times$ & L1 & 30\% & 37.6 & 128.6 & 44.5 & 27.1 & 48.7 & 53.4 & 68.6 & 16.6 & 43.2 \\
$\times$ & Rand & 30\% & 19.8 & \underline{22.9} & 48.9 & 27.3 & 49.3 & 53.7 & 69.6 & 36.7 & 47.6 \\
$\times$ & DRRQR & 30\% & \underline{18.3} & 27.4 & 48.5 & 29.0 & 52.3 & 55.2 & 70.4 & 33.2 & \underline{48.1} \\
$\times$ & Grad & 30\% & \textbf{17.2} & \textbf{11.9} & 50.3 & 26.8 & 50.8 & 54.6 & 70.8 & 47.5 & \textbf{50.1} \\
$\times$ & S-Wanda & 30\% & 36.9 & 119.0 & 44.4 & 26.1 & 48.6 & 52.6 & 68.7 & 17.4 & 43.0 \\
\addlinespace
\checkmark & L1 & 30\% & 18.1 & 12.8 & 50.0 & 27.0 & 49.3 & 52.9 & 70.0 & 45.9 & 49.2 \\
\checkmark & Rand & 30\% & 17.4 & 12.0 & 50.4 & 26.9 & 49.6 & 53.9 & 70.3 & 46.6 & 49.6 \\
\checkmark & DRRQR & 30\% & \textbf{16.8} & \underline{10.5} & 51.1 & 27.4 & 50.0 & 53.9 & 70.0 & 50.2 & \underline{50.4} \\
\checkmark & Grad & 30\% & \underline{16.9} & \textbf{10.1} & 51.3 & 26.5 & 50.4 & 54.8 & 70.6 & 51.1 & \textbf{50.8} \\
\checkmark & S-Wanda & 30\% & 18.0 & 13.0 & 50.5 & 27.0 & 49.2 & 54.0 & 70.3 & 45.4 & 49.4 \\
\midrule
\textbf{Baseline} & -- & 0\% & 16.7 & 11.9 & 51.3 & 26.1 & 50.6 & 53.3 & 70.5 & 48.4 & 50.0 \\
\bottomrule
\end{tabular}
}
\end{table}

\begin{table}[t]
\centering
\caption{Zero-shot performance of Gated DeltaNet 370M models, evaluated using the \emph{lm-eval-harness}~\citep{eval-harness}, given different compression ratios. Pre- and post RFT.}
\label{tab:zs-gdn-370M}
\centering
\scriptsize
\setlength{\tabcolsep}{3.5pt}
\resizebox{\columnwidth}{!}{%
\begin{tabular}{lll|cc|cccccc|c}
\toprule
\thead[t]{RFT} & \thead[t]{Method} & \thead[t]{Compr.} & \thead[t]{Wiki.\\ ppl $\downarrow$} & \thead[t]{LMB.\\ ppl $\downarrow$} & \thead[t]{ARC-e\\ acc\_n $\uparrow$} & \thead[t]{ARC-c\\ acc\_n $\uparrow$} & \thead[t]{Hella.\\ acc\_n $\uparrow$} & \thead[t]{Wino.\\ acc $\uparrow$} & \thead[t]{PIQA\\ acc\_n $\uparrow$} & \thead[t]{LMB.\\ acc $\uparrow$} & \thead[t]{Avg\\ $\uparrow$} \\
\midrule
$\times$ & L1 & 75\% & 116.4 & 1399.7 & 36.8 & 27.6 & 34.7 & 50.3 & 59.7 & 8.5 & 36.3 \\
$\times$ & Rand & 75\% & 135.3 & 1078.1 & 41.5 & 26.5 & 34.4 & 51.2 & 60.4 & 11.2 & 37.5 \\
$\times$ & DRRQR & 75\% & \textbf{45.9} & \textbf{97.8} & 49.7 & 27.8 & 38.4 & 50.1 & 63.7 & 23.8 & \textbf{42.3} \\
$\times$ & Grad & 75\% & \underline{79.6} & \underline{215.6} & 41.5 & 27.9 & 37.4 & 50.0 & 62.9 & 20.1 & \underline{40.0} \\
$\times$ & S-Wanda & 75\% & 96.6 & 1205.5 & 37.2 & 26.5 & 35.5 & 50.4 & 60.6 & 8.8 & 36.5 \\
\addlinespace
\checkmark & L1 & 75\% & 32.2 & 60.0 & 49.6 & 27.4 & 38.2 & 49.3 & 64.6 & 25.2 & 42.4 \\
\checkmark & Rand & 75\% & 33.5 & 70.9 & 49.5 & 26.4 & 37.9 & 51.9 & 65.0 & 22.5 & 42.2 \\
\checkmark & DRRQR & 75\% & \textbf{31.4} & \underline{53.5} & 50.8 & 27.1 & 38.5 & 49.9 & 65.3 & 26.3 & \underline{43.0} \\
\checkmark & Grad & 75\% & \textbf{31.4} & \textbf{49.1} & 50.3 & 27.4 & 38.5 & 50.6 & 65.8 & 27.1 & \textbf{43.3} \\
\checkmark & S-Wanda & 75\% & \underline{32.1} & 58.2 & 50.0 & 27.0 & 38.0 & 49.3 & 64.5 & 25.6 & 42.4 \\
\midrule
$\times$ & L1 & 50\% & 41.8 & 156.6 & 45.6 & 28.3 & 38.8 & 47.8 & 64.0 & 17.9 & 40.4 \\
$\times$ & Rand & 50\% & 35.5 & 50.7 & 47.1 & 27.0 & 39.1 & 53.1 & 64.9 & 29.4 & 43.4 \\
$\times$ & DRRQR & 50\% & \textbf{31.6} & \underline{39.4} & 49.8 & 27.9 & 39.9 & 49.5 & 65.2 & 31.5 & \textbf{44.0} \\
$\times$ & Grad & 50\% & \underline{33.3} & \textbf{39.2} & 48.2 & 27.6 & 40.0 & 49.2 & 65.8 & 31.8 & \underline{43.8} \\
$\times$ & S-Wanda & 50\% & 40.0 & 141.5 & 45.5 & 28.7 & 38.9 & 49.6 & 64.0 & 18.9 & 40.9 \\
\addlinespace
\checkmark & L1 & 50\% & 29.2 & 44.6 & 50.2 & 27.8 & 39.0 & 49.6 & 65.3 & 29.0 & 43.5 \\
\checkmark & Rand & 50\% & 29.3 & 46.5 & 50.3 & 26.6 & 38.9 & 51.9 & 66.1 & 28.2 & 43.6 \\
\checkmark & DRRQR & 50\% & \textbf{28.7} & \underline{40.6} & 50.9 & 27.5 & 39.2 & 50.4 & 65.3 & 29.8 & \textbf{43.8} \\
\checkmark & Grad & 50\% & \textbf{28.7} & \textbf{39.4} & 50.3 & 27.4 & 39.0 & 50.6 & 65.7 & 29.6 & \underline{43.7} \\
\checkmark & S-Wanda & 50\% & 29.2 & 44.6 & 50.3 & 27.3 & 39.1 & 50.2 & 65.0 & 29.0 & 43.5 \\
\midrule
$\times$ & L1 & 40\% & 36.9 & 96.4 & 46.1 & 28.3 & 38.7 & 49.0 & 65.0 & 21.5 & 41.4 \\
$\times$ & Rand & 40\% & 32.2 & 42.2 & 48.8 & 27.3 & 39.2 & 53.0 & 65.3 & 30.5 & 44.0 \\
$\times$ & DRRQR & 40\% & \textbf{30.1} & \underline{38.3} & 50.5 & 28.0 & 40.0 & 49.6 & 65.4 & 31.7 & \underline{44.2} \\
$\times$ & Grad & 40\% & \underline{30.7} & \textbf{34.6} & 49.5 & 27.5 & 40.2 & 50.4 & 65.8 & 33.0 & \textbf{44.4} \\
$\times$ & S-Wanda & 40\% & 35.9 & 95.8 & 46.7 & 28.0 & 39.2 & 50.2 & 64.1 & 22.1 & 41.7 \\
\addlinespace
\checkmark & L1 & 40\% & 28.6 & 40.6 & 50.8 & 28.2 & 39.1 & 50.1 & 65.4 & 30.8 & \underline{44.0} \\
\checkmark & Rand & 40\% & 28.6 & 43.1 & 50.7 & 26.5 & 38.9 & 50.9 & 65.9 & 29.5 & 43.7 \\
\checkmark & DRRQR & 40\% & \textbf{28.2} & \underline{39.5} & 51.1 & 27.8 & 39.4 & 50.0 & 65.4 & 30.5 & \underline{44.0} \\
\checkmark & Grad & 40\% & \textbf{28.2} & \textbf{38.0} & 50.6 & 27.7 & 39.2 & 50.7 & 65.6 & 30.5 & \textbf{44.1} \\
\checkmark & S-Wanda & 40\% & 38.5 & 40.6 & 50.9 & 27.8 & 38.8 & 49.6 & 65.2 & 30.8 & 43.9 \\
\midrule
$\times$ & L1 & 30\% & 33.9 & 65.1 & 47.7 & 28.2 & 39.4 & 49.3 & 64.9 & 25.1 & 42.4 \\
$\times$ & Rand & 30\% & \underline{29.9} & 38.4 & 48.9 & 26.5 & 39.6 & 51.3 & 65.2 & 31.3 & 43.8 \\
$\times$ & DRRQR & 30\% & \textbf{29.0} & \underline{36.6} & 51.1 & 28.2 & 39.9 & 50.4 & 65.3 & 32.4 & \textbf{44.5} \\
$\times$ & Grad & 30\% & \textbf{29.0} & \textbf{32.4} & 50.6 & 27.0 & 40.2 & 50.4 & 64.7 & 33.7 & \underline{44.4} \\
$\times$ & S-Wanda & 30\% & 33.2 & 66.5 & 47.4 & 27.7 & 39.5 & 50.0 & 65.2 & 25.3 & 42.5 \\
\addlinespace
\checkmark & L1 & 30\% & 28.1 & 38.5 & 50.6 & 27.9 & 39.1 & 49.9 & 65.7 & 31.1 & 44.1 \\
\checkmark & Rand & 30\% & 28.2 & 40.9 & 50.8 & 27.0 & 39.3 & 50.5 & 65.4 & 29.7 & 43.8 \\
\checkmark & DRRQR & 30\% & \textbf{27.8} & \underline{38.0} & 51.2 & 27.9 & 39.4 & 50.0 & 65.5 & 31.4 & \underline{44.2} \\
\checkmark & Grad & 30\% & \textbf{27.8} & \textbf{35.8} & 51.3 & 27.7 & 39.4 & 50.6 & 65.5 & 31.7 & \textbf{44.4} \\
\checkmark & S-Wanda & 30\% & 28.0 & 38.8 & 50.5 & 28.0 & 39.1 & 49.0 & 65.3 & 30.7 & 43.8 \\
\midrule
\textbf{Baseline} & -- & 0\% & 28.8 & 35.9 & 51.5 & 28.2 & 39.6 & 50.4 & 65.5 & 31.5 & 44.4 \\
\bottomrule
\end{tabular}
}
\end{table}

\begin{table}[t]
\centering
\caption{Zero-shot performance of Gated DeltaNet 1.3B models, evaluated using the \emph{lm-eval-harness}~\citep{eval-harness}, given different compression ratios. Pre- and post RFT.}
\label{tab:zs-gdn-13B}
\centering
\scriptsize
\setlength{\tabcolsep}{3.5pt}
\resizebox{\columnwidth}{!}{%
\begin{tabular}{lll|cc|cccccc|c}
\toprule
\thead[t]{RFT} & \thead[t]{Method} & \thead[t]{Compr.} & \thead[t]{Wiki.\\ ppl $\downarrow$} & \thead[t]{LMB.\\ ppl $\downarrow$} & \thead[t]{ARC-e\\ acc\_n $\uparrow$} & \thead[t]{ARC-c\\ acc\_n $\uparrow$} & \thead[t]{Hella.\\ acc\_n $\uparrow$} & \thead[t]{Wino.\\ acc $\uparrow$} & \thead[t]{PIQA\\ acc\_n $\uparrow$} & \thead[t]{LMB.\\ acc $\uparrow$} & \thead[t]{Avg\\ $\uparrow$} \\
\midrule
$\times$ & L1 & 75\% & 68.8 & 279.5 & 54.4 & 32.8 & 47.2 & 55.4 & 66.2 & 15.9 & 45.3 \\
$\times$ & Rand & 75\% & 32.8 & 37.8 & 56.0 & 34.3 & 51.8 & 54.5 & 70.2 & 30.8 & 49.6 \\
$\times$ & DRRQR & 75\% & \underline{22.5} & \underline{23.6} & 62.6 & 36.5 & 55.7 & 59.8 & 72.1 & 37.0 & \underline{54.0} \\
$\times$ & Grad & 75\% & \textbf{22.3} & \textbf{17.7} & 64.0 & 38.3 & 57.1 & 60.2 & 71.4 & 42.5 & \textbf{55.6} \\
$\times$ & S-Wanda & 75\% & 30.6 & 77.0 & 59.7 & 36.3 & 54.3 & 58.9 & 69.9 & 24.6 & 50.6 \\
\addlinespace
\checkmark & L1 & 75\% & 18.6 & 16.8 & 66.1 & 37.8 & 57.0 & 58.0 & 72.4 & 40.7 & 55.3 \\
\checkmark & Rand & 75\% & 19.3 & 18.1 & 64.7 & 36.5 & 56.6 & 57.8 & 72.5 & 39.5 & 54.6 \\
\checkmark & DRRQR & 75\% & \underline{17.9} & \underline{14.7} & 64.0 & 37.8 & 58.4 & 61.1 & 73.1 & 43.1 & \underline{56.2} \\
\checkmark & Grad & 75\% & \textbf{17.8} & \textbf{12.5} & 65.1 & 37.5 & 58.4 & 60.9 & 72.9 & 45.8 & \textbf{56.8} \\
\checkmark & S-Wanda & 75\% & 18.2 & 15.1 & 63.9 & 37.7 & 57.5 & 59.0 & 72.6 & 41.7 & 55.4 \\
\midrule
$\times$ & L1 & 50\% & 26.5 & 34.0 & 64.8 & 36.5 & 56.3 & 58.2 & 71.6 & 32.3 & 53.3 \\
$\times$ & Rand & 50\% & 18.8 & 14.4 & 65.5 & 38.3 & 58.8 & 59.7 & 73.2 & 43.7 & 56.5 \\
$\times$ & DRRQR & 50\% & \textbf{17.3} & \underline{14.2} & 66.2 & 39.2 & 59.9 & 62.5 & 73.1 & 44.0 & \underline{57.5} \\
$\times$ & Grad & 50\% & \underline{17.4} & \textbf{10.1} & 65.9 & 39.8 & 60.5 & 61.2 & 72.7 & 51.3 & \textbf{58.6} \\
$\times$ & S-Wanda & 50\% & 21.9 & 24.1 & 65.1 & 38.5 & 58.7 & 60.0 & 72.1 & 36.1 & 55.1 \\
\addlinespace
\checkmark & L1 & 50\% & 16.8 & 13.0 & 66.0 & 39.2 & 59.2 & 59.8 & 73.6 & 45.3 & 57.2 \\
\checkmark & Rand & 50\% & 16.9 & \underline{11.5} & 66.6 & 38.9 & 58.8 & 61.2 & 73.6 & 47.8 & 57.8 \\
\checkmark & DRRQR & 50\% & \textbf{16.3} & 12.0 & 66.8 & 39.4 & 59.4 & 61.1 & 73.7 & 47.4 & \underline{58.0} \\
\checkmark & Grad & 50\% & \underline{16.4} & \textbf{10.5} & 65.7 & 39.3 & 59.6 & 61.1 & 73.9 & 49.9 & \textbf{58.3} \\
\checkmark & S-Wanda & 50\% & 16.6 & 11.7 & 66.6 & 39.2 & 59.4 & 60.5 & 73.8 & 47.3 & 57.8 \\
\midrule
$\times$ & L1 & 40\% & 23.7 & 25.3 & 64.3 & 38.5 & 57.8 & 58.6 & 71.7 & 36.1 & 54.5 \\
$\times$ & Rand & 40\% & \underline{18.3} & \underline{11.0} & 65.7 & 39.3 & 59.7 & 60.5 & 73.9 & 49.7 & 58.1 \\
$\times$ & DRRQR & 40\% & \textbf{16.7} & 11.9 & 67.2 & 39.4 & 60.5 & 60.9 & 73.5 & 47.8 & \underline{58.2} \\
$\times$ & Grad & 40\% & \textbf{16.7} & \textbf{9.4} & 67.8 & 40.5 & 60.6 & 62.3 & 73.5 & 52.6 & \textbf{59.6} \\
$\times$ & S-Wanda & 40\% & 20.7 & 21.6 & 65.4 & 39.6 & 59.5 & 60.3 & 72.4 & 37.4 & 55.8 \\
\addlinespace
\checkmark & L1 & 40\% & 16.5 & 11.9 & 67.2 & 38.8 & 59.6 & 59.8 & 74.0 & 47.5 & 57.8 \\
\checkmark & Rand & 40\% & 16.5 & \underline{10.7} & 66.4 & 39.2 & 59.6 & 61.3 & 73.8 & 49.5 & 58.3 \\
\checkmark & DRRQR & 40\% & \textbf{16.1} & 11.0 & 67.2 & 40.0 & 59.8 & 61.3 & 73.9 & 49.8 & \textbf{58.7} \\
\checkmark & Grad & 40\% & \textbf{16.1} & \textbf{10.1} & 66.7 & 39.4 & 59.9 & 61.2 & 74.2 & 50.5 & \underline{58.6} \\
\checkmark & S-Wanda & 40\% & 16.4 & 11.5 & 66.8 & 39.5 & 59.9 & 60.8 & 74.2 & 47.7 & 58.1 \\
\midrule
$\times$ & L1 & 30\% & 22.2 & 24.3 & 64.5 & 38.1 & 58.4 & 61.0 & 71.9 & 35.4 & 54.9 \\
$\times$ & Rand & 30\% & \underline{16.7} & \underline{11.4} & 65.2 & 39.2 & 60.2 & 60.2 & 73.9 & 48.7 & 57.9 \\
$\times$ & DRRQR & 30\% & \textbf{16.3} & 11.6 & 67.0 & 39.7 & 60.5 & 61.0 & 73.7 & 47.5 & \underline{58.2} \\
$\times$ & Grad & 30\% & \textbf{16.3} & \textbf{9.2} & 68.5 & 40.4 & 60.7 & 62.4 & 73.7 & 52.8 & \textbf{59.7} \\
$\times$ & S-Wanda & 30\% & 19.8 & 18.4 & 65.5 & 39.8 & 60.1 & 61.5 & 73.1 & 39.7 & 56.6 \\
\addlinespace
\checkmark & L1 & 30\% & 16.3 & 11.3 & 67.5 & 39.2 & 59.7 & 61.2 & 74.1 & 48.3 & 58.3 \\
\checkmark & Rand & 30\% & \underline{16.1} & \underline{10.2} & 66.5 & 39.6 & 59.8 & 61.6 & 74.4 & 50.4 & 58.7 \\
\checkmark & DRRQR & 30\% & \textbf{15.9} & 10.7 & 67.3 & 39.9 & 59.9 & 61.8 & 74.3 & 49.7 & \underline{58.8} \\
\checkmark & Grad & 30\% & \textbf{15.9} & \textbf{9.9} & 66.9 & 40.4 & 60.0 & 61.6 & 74.4 & 50.9 & \textbf{59.0} \\
\checkmark & S-Wanda & 30\% & 16.1 & 11.0 & 67.6 & 40.3 & 60.0 & 60.5 & 74.0 & 49.0 & 58.6 \\
\midrule
\textbf{Baseline} & -- & 0\% & 16.8 & 9.7 & 67.7 & 41.0 & 60.1 & 62.2 & 74.0 & 51.5 & 59.4 \\
\bottomrule
\end{tabular}
}
\end{table}

\begin{table}[t]
\caption{Accuracy of RFT'ed DeltaNet 370M on recall-intensive retrieval tasks with input truncated to $2K$ tokens, given different compression ratios. Computed using \emph{prefix-linear-attention}~\citep{arora2024just}.}
\label{tab:recall-dn-370M}
\centering
\small
\begin{tabular}{ll|cccccc|c}
\toprule
\thead[t]{Method} & \thead[t]{\textbf{Compr.}} & \thead[t]{Drop\\ cont. $\uparrow$} & \thead[t]{FDA\\ cont. $\uparrow$} & \thead[t]{NQ\\ cont. $\uparrow$} & \thead[t]{SQuAD\\ cont. $\uparrow$} & \thead[t]{SWDE\\ cont. $\uparrow$} & \thead[t]{Triv.\\ cont. $\uparrow$} & \thead[t]{Avg} \\
\midrule
\addlinespace
L1 & 75\% & 11.6 & 1.3 & 9.1 & 15.9 & 6.4 & 37.7 & 13.7 \\
Rand & 75\% & 9.6 & 1.3 & 9.2 & 13.8 & 6.5 & 31.9 & 12.1 \\
DRRQR & 75\% & 14.4 & 2.3 & 10.7 & 18.5 & 7.1 & 39.6 & \underline{15.4} \\
Grad & 75\% & 14.0 & 2.3 & 10.5 & 18.5 & 7.5 & 40.5 & \textbf{15.5} \\
S-Wanda & 75\% & 12.6 & 1.5 & 9.6 & 15.2 & 6.6 & 39.0 & 14.1 \\
\midrule
L1 & 50\% & 13.2 & 2.2 & 12.4 & 18.3 & 8.0 & 40.7 & 15.8 \\
Rand & 50\% & 14.5 & 3.2 & 11.8 & 18.6 & 7.9 & 40.8 & 16.1 \\
DRRQR & 50\% & 15.1 & 4.5 & 13.1 & 21.7 & 9.7 & 43.0 & \textbf{17.8} \\
Grad & 50\% & 15.1 & 5.1 & 13.1 & 21.5 & 9.2 & 42.9 & \underline{17.8} \\
S-Wanda & 50\% & 13.1 & 2.6 & 11.8 & 18.7 & 7.1 & 40.1 & 15.6 \\
\midrule
L1 & 40\% & 13.3 & 2.8 & 12.3 & 18.6 & 8.5 & 40.6 & 16.0 \\
Rand & 40\% & 13.6 & 4.1 & 12.9 & 20.3 & 9.0 & 42.5 & 17.1 \\
DRRQR & 40\% & 15.8 & 6.0 & 14.1 & 22.3 & 10.2 & 44.0 & \textbf{18.7} \\
Grad & 40\% & 15.4 & 6.9 & 14.2 & 22.7 & 9.7 & 43.2 & \underline{18.7} \\
S-Wanda & 40\% & 13.7 & 3.5 & 12.2 & 19.7 & 7.9 & 40.7 & 16.3 \\
\midrule
L1 & 30\% & 14.1 & 3.3 & 12.8 & 19.6 & 8.7 & 40.7 & 16.5 \\
Rand & 30\% & 14.8 & 4.5 & 12.5 & 21.7 & 9.2 & 42.0 & 17.4 \\
DRRQR & 30\% & 15.6 & 8.3 & 14.1 & 23.2 & 10.4 & 43.7 & \underline{19.2} \\
Grad & 30\% & 15.1 & 8.6 & 14.1 & 23.8 & 10.5 & 43.4 & \textbf{19.3} \\
S-Wanda & 30\% & 14.6 & 4.3 & 12.5 & 20.7 & 8.3 & 41.8 & 17.0 \\
\midrule
-- & 0\% & 15.2 & 13.1 & 15.0 & 24.9 & 13.0 & 43.5 & 20.8 \\
\bottomrule
\end{tabular}
\end{table}

\begin{table}[t]
\caption{Accuracy of RFT'ed DeltaNet 1.3B on recall-intensive retrieval tasks with input truncated to $2K$ tokens, given different compression ratios. Computed using \emph{prefix-linear-attention}~\citep{arora2024just}.}
\label{tab:recall-dn-13B}
\centering
\small
\begin{tabular}{ll|cccccccc|c}
\toprule
\thead[t]{Method} & \thead[t]{\textbf{Compr.}} & \thead[t]{Drop\\ cont. $\uparrow$} & \thead[t]{FDA\\ cont. $\uparrow$} & \thead[t]{NQ\\ cont. $\uparrow$} & \thead[t]{SQuAD\\ cont. $\uparrow$} & \thead[t]{SWDE\\ cont. $\uparrow$} & \thead[t]{Triv.\\ cont. $\uparrow$} & \thead[t]{Avg} \\
\midrule
\addlinespace
L1 & 75\% & 18.5 & 7.0 & 19.1 & 21.2 & 17.3 & 48.2 & 21.9 \\
Rand & 75\% & 17.3 & 7.1 & 17.3 & 22.1 & 15.2 & 46.6 & 20.9 \\
DRRQR & 75\% & 19.1 & 9.2 & 17.8 & 22.3 & 17.7 & 48.2 & 22.4 \\
Grad & 75\% & 18.9 & 15.6 & 20.3 & 26.0 & 20.9 & 51.4 & \textbf{25.5} \\
S-Wanda & 75\% & 17.4 & 7.7 & 19.1 & 21.8 & 17.3 & 49.9 & 22.2 \\
\midrule
L1 & 50\% & 20.7 & 23.1 & 21.6 & 27.3 & 29.0 & 53.0 & 29.1 \\
Rand & 50\% & 19.6 & 30.1 & 24.1 & 29.4 & 26.6 & 52.0 & 30.3 \\
DRRQR & 50\% & 20.0 & 31.5 & 23.3 & 28.7 & 29.2 & 53.6 & \underline{31.1} \\
Grad & 50\% & 19.6 & 38.6 & 25.2 & 30.4 & 32.2 & 53.6 & \textbf{33.3} \\
S-Wanda & 50\% & 20.0 & 21.4 & 21.8 & 27.0 & 29.3 & 53.5 & 28.9 \\
\midrule
L1 & 40\% & 19.1 & 29.9 & 22.9 & 27.4 & 32.2 & 53.3 & 30.8 \\
Rand & 40\% & 21.1 & 32.9 & 24.8 & 30.0 & 31.7 & 52.8 & 32.2 \\
DRRQR & 40\% & 19.8 & 38.0 & 23.4 & 29.8 & 33.2 & 53.6 & \underline{33.0} \\
Grad & 40\% & 20.3 & 44.6 & 25.7 & 31.1 & 35.6 & 54.4 & \textbf{35.3} \\
S-Wanda & 40\% & 19.8 & 29.5 & 23.2 & 27.4 & 32.2 & 53.6 & 31.0 \\
\midrule
L1 & 30\% & 21.5 & 34.2 & 24.9 & 28.4 & 31.6 & 53.9 & 32.4 \\
Rand & 30\% & 21.5 & 38.2 & 25.5 & 30.9 & 35.2 & 54.0 & 34.2 \\
DRRQR & 30\% & 20.6 & 41.7 & 24.9 & 30.5 & 34.9 & 53.8 & \underline{34.4} \\
Grad & 30\% & 20.6 & 44.4 & 26.4 & 31.7 & 37.9 & 55.3 & \textbf{36.1} \\
S-Wanda & 30\% & 22.2 & 32.5 & 24.5 & 29.3 & 33.9 & 54.3 & 32.8 \\
\midrule
-- & 0\% & 21.0 & 57.8 & 27.2 & 34.6 & 42.8 & 57.5 & 40.1 \\
\bottomrule
\end{tabular}
\end{table}

\begin{table}[t]
\caption{Accuracy of RFT'ed Gated DeltaNet 370M on recall-intensive retrieval tasks with input truncated to $2K$ tokens, given different compression ratios. Computed using \emph{prefix-linear-attention}~\citep{arora2024just}.}
\label{tab:recall-gdn-370M}
\centering
\small
\begin{tabular}{ll|ccccccccc}
\toprule
\thead[t]{Method} & \thead[t]{\textbf{Compr.}} & \thead[t]{Drop\\ cont. $\uparrow$} & \thead[t]{FDA\\ cont. $\uparrow$} & \thead[t]{NQ\\ cont. $\uparrow$} & \thead[t]{SQuAD\\ cont. $\uparrow$} & \thead[t]{SWDE\\ cont. $\uparrow$} & \thead[t]{Triv.\\ cont. $\uparrow$} & \thead[t]{Avg} \\
\midrule
\addlinespace
L1 & 75\% & 13.9 & 2.2 & 10.0 & 18.2 & 5.3 & 36.9 & 14.4 \\
Rand & 75\% & 13.8 & 2.2 & 10.2 & 17.6 & 4.8 & 37.1 & 14.3 \\
DRRQR & 75\% & 13.7 & 1.9 & 10.3 & 18.7 & 5.5 & 40.6 & \underline{15.1} \\
Grad & 75\% & 14.1 & 2.3 & 10.3 & 19.4 & 5.3 & 40.0 & \textbf{15.2} \\
S-Wanda & 75\% & 15.1 & 2.2 & 10.3 & 18.0 & 5.2 & 38.0 & 14.8 \\
\midrule
L1 & 50\% & 16.1 & 5.8 & 12.5 & 22.8 & 9.5 & 41.7 & 18.1 \\
Rand & 50\% & 14.8 & 6.8 & 13.3 & 22.6 & 8.9 & 42.7 & 18.2 \\
DRRQR & 50\% & 16.1 & 4.6 & 13.0 & 23.4 & 10.4 & 42.6 & \underline{18.3} \\
Grad & 50\% & 16.6 & 6.3 & 13.8 & 23.1 & 9.4 & 45.1 & \textbf{19.0} \\
S-Wanda & 50\% & 16.9 & 6.5 & 12.6 & 22.8 & 8.5 & 42.1 & 18.2 \\
\midrule
L1 & 40\% & 16.9 & 8.9 & 13.6 & 23.8 & 11.5 & 42.9 & 19.6 \\
Rand & 40\% & 16.4 & 10.7 & 13.9 & 24.2 & 10.7 & 42.8 & 19.8 \\
DRRQR & 40\% & 16.5 & 6.0 & 13.6 & 23.8 & 11.6 & 43.7 & 19.2 \\
Grad & 40\% & 18.2 & 8.1 & 14.1 & 24.1 & 11.9 & 45.2 & \textbf{20.3} \\
S-Wanda & 40\% & 17.1 & 9.9 & 14.1 & 23.9 & 10.6 & 43.5 & \underline{19.8} \\
\midrule
L1 & 30\% & 18.5 & 11.1 & 14.0 & 24.6 & 13.1 & 43.7 & 20.8 \\
Rand & 30\% & 16.4 & 13.0 & 14.7 & 25.0 & 12.6 & 43.4 & 20.8 \\
DRRQR & 30\% & 17.8 & 8.3 & 14.7 & 24.7 & 12.8 & 44.6 & 20.5 \\
Grad & 30\% & 18.4 & 9.6 & 15.1 & 25.2 & 12.8 & 45.6 & \textbf{21.1} \\
S-Wanda & 30\% & 19.2 & 9.4 & 14.6 & 25.1 & 12.6 & 44.1 & 20.8 \\
\midrule
-- & 0\% & 18.1 & 16.3 & 16.0 & 26.8 & 16.8 & 46.1 & 23.3 \\
\bottomrule
\end{tabular}
\end{table}

\begin{table}[t]
\caption{Accuracy of RFT'ed Gated DeltaNet 1.3B on recall-intensive retrieval tasks with input truncated to $2K$ tokens, given different compression ratios. Computed using \emph{prefix-linear-attention}~\citep{arora2024just}.}
\label{tab:recall-gdn-13B}
\centering
\small
\begin{tabular}{ll|cccccccc|c}
\toprule
\thead[t]{Method} & \thead[t]{\textbf{Compr.}} & \thead[t]{Drop\\ cont. $\uparrow$} & \thead[t]{FDA\\ cont. $\uparrow$} & \thead[t]{NQ\\ cont. $\uparrow$} & \thead[t]{SQuAD\\ cont. $\uparrow$} & \thead[t]{SWDE\\ cont. $\uparrow$} & \thead[t]{Triv.\\ cont. $\uparrow$} & \thead[t]{Avg} \\
\midrule
\addlinespace
L1 & 75\% & 18.4 & 13.4 & 20.5 & 29.1 & 15.8 & 56.8 & 25.7 \\
Rand & 75\% & 19.2 & 11.2 & 18.0 & 28.0 & 14.1 & 52.8 & 23.9 \\
DRRQR & 75\% & 17.9 & 18.7 & 19.9 & 28.7 & 15.0 & 56.1 & 26.1 \\
Grad & 75\% & 17.8 & 17.3 & 21.1 & 29.1 & 17.2 & 58.7 & \textbf{26.9} \\
S-Wanda & 75\% & 18.9 & 15.6 & 20.8 & 28.5 & 16.0 & 57.6 & \underline{26.2} \\
\midrule
L1 & 50\% & 22.8 & 28.9 & 24.3 & 34.9 & 25.4 & 60.9 & 32.9 \\
Rand & 50\% & 20.3 & 31.1 & 23.0 & 33.3 & 25.2 & 59.8 & 32.1 \\
DRRQR & 50\% & 20.5 & 32.1 & 23.5 & 33.6 & 26.5 & 61.6 & 33.0 \\
Grad & 50\% & 20.8 & 36.6 & 24.5 & 33.8 & 28.4 & 61.7 & \textbf{34.3} \\
S-Wanda & 50\% & 21.3 & 32.8 & 24.2 & 33.4 & 25.4 & 62.1 & \underline{33.2} \\
\midrule
L1 & 40\% & 22.4 & 33.1 & 25.9 & 34.7 & 29.6 & 62.1 & 34.6 \\
Rand & 40\% & 22.2 & 39.5 & 24.0 & 34.7 & 26.0 & 61.2 & 34.6 \\
DRRQR & 40\% & 21.5 & 39.1 & 24.6 & 34.8 & 29.2 & 61.1 & \underline{35.0} \\
Grad & 40\% & 21.7 & 39.8 & 25.1 & 34.7 & 31.1 & 62.7 & \textbf{35.8} \\
S-Wanda & 40\% & 22.1 & 39.1 & 24.1 & 34.6 & 25.1 & 61.5 & 34.4 \\
\midrule
L1 & 30\% & 20.9 & 37.2 & 26.0 & 35.4 & 30.5 & 62.4 & 35.4 \\
Rand & 30\% & 22.8 & 44.4 & 25.1 & 35.6 & 33.4 & 62.6 & \underline{37.3} \\
DRRQR & 30\% & 21.6 & 42.4 & 25.2 & 35.4 & 32.2 & 61.9 & 36.5 \\
Grad & 30\% & 22.6 & 45.7 & 26.4 & 36.2 & 33.7 & 63.5 & \textbf{38.0} \\
S-Wanda & 30\% & 22.6 & 44.2 & 24.9 & 35.3 & 33.5 & 62.6 & 37.2 \\
\midrule
-- & 0\% & 22.1 & 53.7 & 27.0 & 37.0 & 37.5 & 64.5 & 40.3 \\
\bottomrule
\end{tabular}
\end{table}

\newpage
\clearpage

\section{More Plots}
\begin{figure}[htbp]
    \centering
    \includegraphics[width=0.7\linewidth]{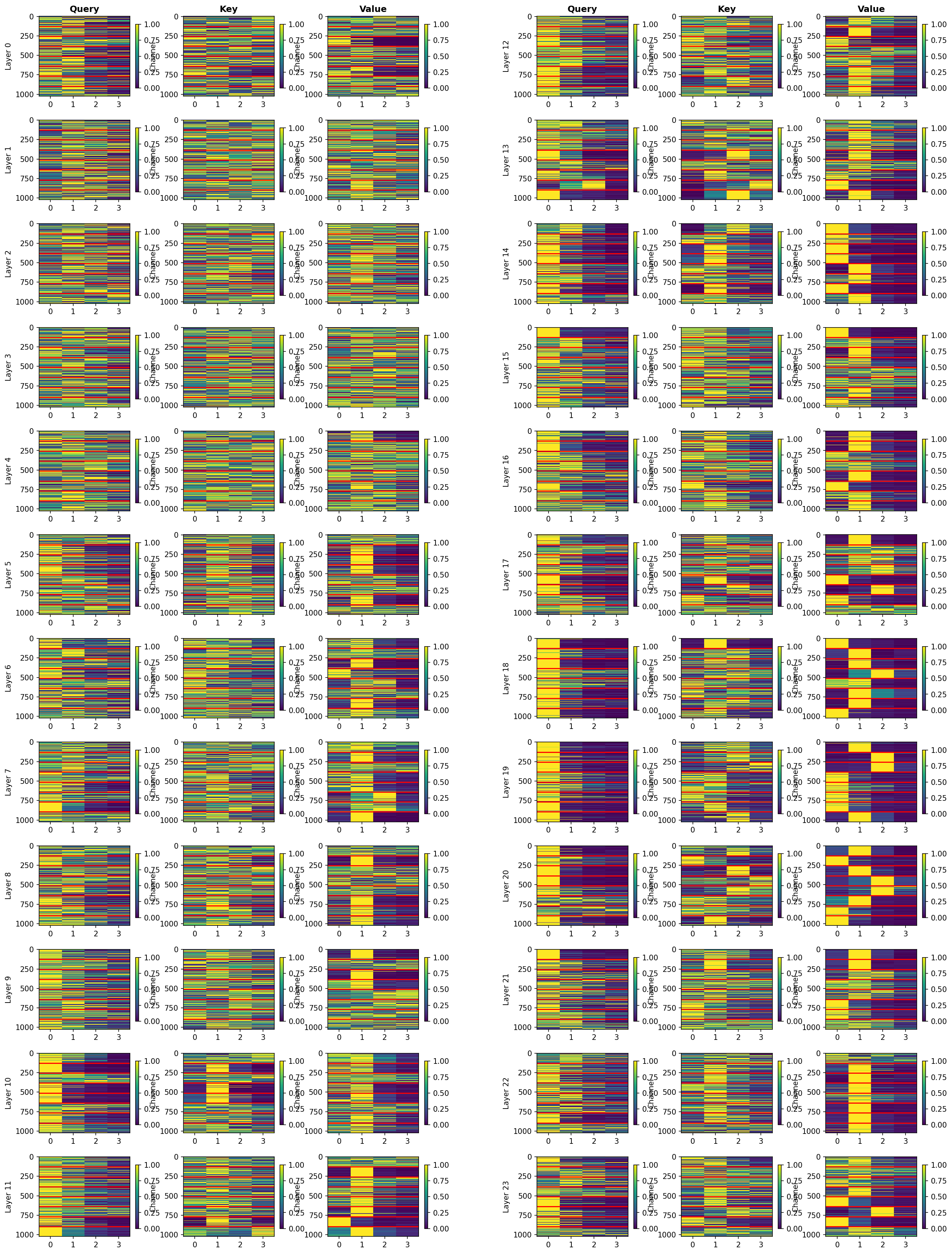}
    \caption{Convolution filters for queries, keys, and values of a standard DeltaNet 370M (non-shared). High similarity within heads (separated by red lines) suggests implicit sharing.}
    \label{fig:filters_dn}
\end{figure}

\begin{figure}[htbp]
    \centering
    \includegraphics[width=0.7\linewidth]{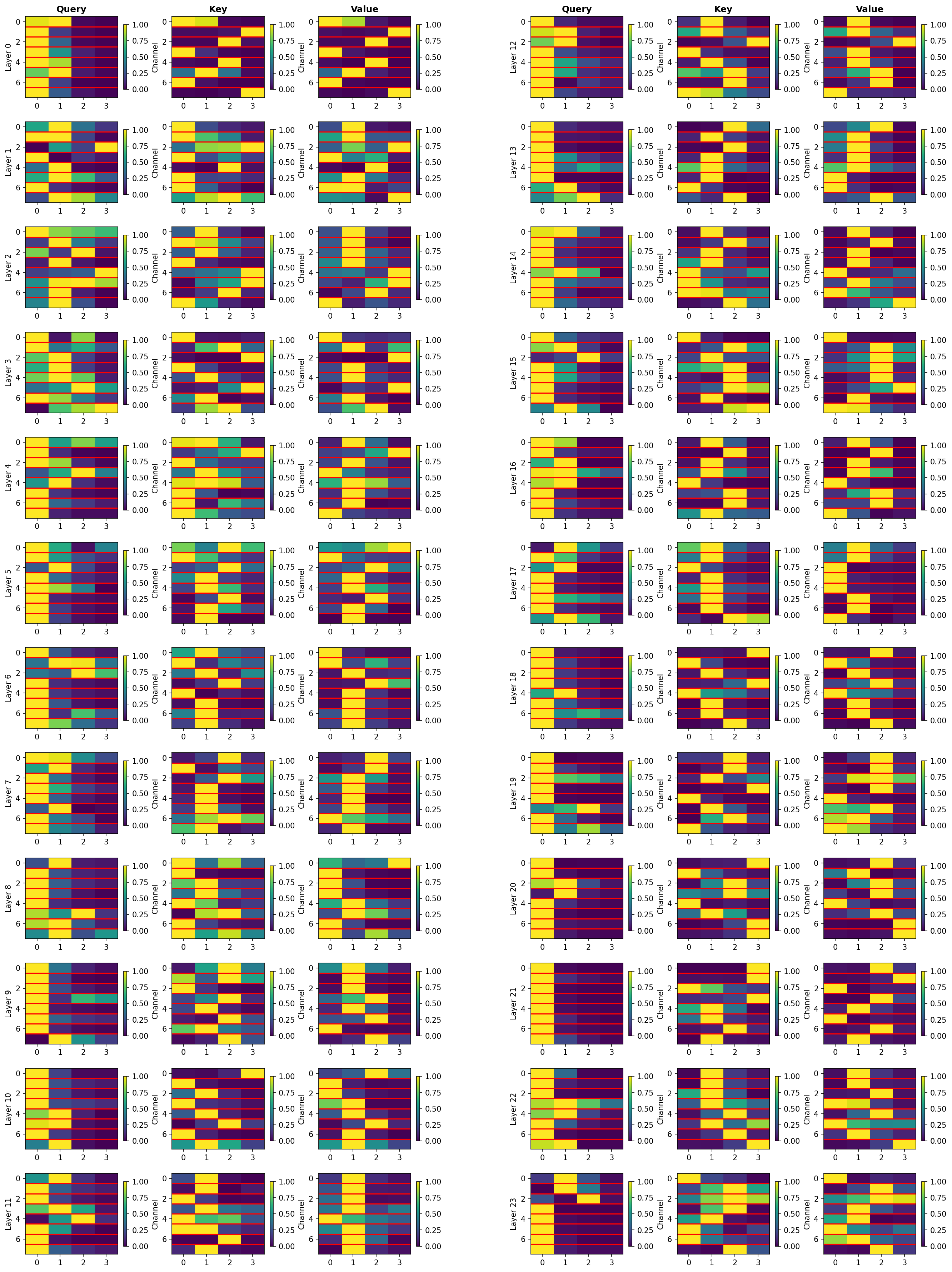}
    \caption{Convolution filters for DeltaNet 370M with explicitly shared convolutions. The model learns distinct patterns for Q, K, and V.}
    \label{fig:filters_dn2}
\end{figure}

\end{document}

%% file: vmr-symbols-vecbold.tex
\usepackage{amssymb}
\usepackage{amsfonts}
\usepackage{mathrsfs}
\usepackage{xspace}
\usepackage{bm}
\usepackage{upgreek}

\newcommand{\safemath}[2]{\newcommand{#1}{\ensuremath{#2}\xspace}}

\safemath{\bma}{\mathbf{a}}
\safemath{\bmb}{\mathbf{b}}
\safemath{\bmc}{\mathbf{c}}
\safemath{\bmd}{\mathbf{d}}
\safemath{\bme}{\mathbf{e}}
\safemath{\bmf}{\mathbf{f}}
\safemath{\bmg}{\mathbf{g}}
\safemath{\bmh}{\mathbf{h}}
\safemath{\bmi}{\mathbf{i}}
\safemath{\bmj}{\mathbf{j}}
\safemath{\bmk}{\mathbf{k}}
\safemath{\bml}{\mathbf{l}}
\safemath{\bmm}{\mathbf{m}}
\safemath{\bmn}{\mathbf{n}}
\safemath{\bmo}{\mathbf{o}}
\safemath{\bmp}{\mathbf{p}}
\safemath{\bmq}{\mathbf{q}}
\safemath{\bmr}{\mathbf{r}}
\safemath{\bms}{\mathbf{s}}
\safemath{\bmt}{\mathbf{t}}
\safemath{\bmu}{\mathbf{u}}
\safemath{\bmv}{\mathbf{v}}
\safemath{\bmw}{\mathbf{w}}
\safemath{\bmx}{\mathbf{x}}
\safemath{\bmy}{\mathbf{y}}
\safemath{\bmz}{\mathbf{z}}
\safemath{\bmzero}{\mathbf{0}}
\safemath{\bmone}{\mathbf{1}}

\safemath{\bmbeta}{\mathbf{beta}}

\bmdefine{\biad}{a}
\bmdefine{\bibd}{b}
\bmdefine{\bicd}{c}
\bmdefine{\bidd}{d}
\bmdefine{\bied}{e}
\bmdefine{\bifd}{f}
\bmdefine{\bigd}{g}
\bmdefine{\bihd}{h}
\bmdefine{\biid}{i}
\bmdefine{\bijd}{j}
\bmdefine{\bikd}{k}
\bmdefine{\bild}{l}
\bmdefine{\bimd}{m}
\bmdefine{\bind}{n}
\bmdefine{\biod}{o}
\bmdefine{\bipd}{p}
\bmdefine{\biqd}{q}
\bmdefine{\bird}{r}
\bmdefine{\bisd}{s}
\bmdefine{\bitd}{t}
\bmdefine{\biud}{u}
\bmdefine{\bivd}{v}
\bmdefine{\biwd}{w}
\bmdefine{\bixd}{x}
\bmdefine{\biyd}{y}
\bmdefine{\bizd}{z}

\bmdefine{\bixid}{\xi}
\bmdefine{\bilambdad}{\lambda}
\bmdefine{\bimud}{\mu}
\bmdefine{\bithetad}{\theta}
\bmdefine{\biphid}{\phi}

\safemath{\bmia}{\biad}
\safemath{\bmib}{\bibd}
\safemath{\bmic}{\bicd}
\safemath{\bmid}{\bidd}
\safemath{\bmie}{\bied}
\safemath{\bmif}{\bifd}
\safemath{\bmig}{\bigd}
\safemath{\bmih}{\bihd}
\safemath{\bmii}{\biid}
\safemath{\bmij}{\bijd}
\safemath{\bmik}{\bikd}
\safemath{\bmil}{\bild}
\safemath{\bmim}{\bimd}
\safemath{\bmin}{\bind}
\safemath{\bmio}{\biod}
\safemath{\bmip}{\bipd}
\safemath{\bmiq}{\biqd}
\safemath{\bmir}{\bird}
\safemath{\bmis}{\bisd}
\safemath{\bmit}{\bitd}
\safemath{\bmiu}{\biud}
\safemath{\bmiv}{\bivd}
\safemath{\bmiw}{\biwd}
\safemath{\bmix}{\bixd}
\safemath{\bmiy}{\biyd}
\safemath{\bmiz}{\bizd}

\safemath{\bmxi}{\bixid}
\safemath{\bmlambda}{\bilambdad}
\safemath{\bmmu}{\bimud}
\safemath{\bmtheta}{\bithetad}
\safemath{\bmphi}{\biphid}

\safemath{\bA}{\mathbf{A}}
\safemath{\bB}{\mathbf{B}}
\safemath{\bC}{\mathbf{C}}
\safemath{\bD}{\mathbf{D}}
\safemath{\bE}{\mathbf{E}}
\safemath{\bF}{\mathbf{F}}
\safemath{\bG}{\mathbf{G}}
\safemath{\bH}{\mathbf{H}}
\safemath{\bI}{\mathbf{I}}
\safemath{\bJ}{\mathbf{J}}
\safemath{\bK}{\mathbf{K}}
\safemath{\bL}{\mathbf{L}}
\safemath{\bM}{\mathbf{M}}
\safemath{\bN}{\mathbf{N}}
\safemath{\bO}{\mathbf{O}}
\safemath{\bP}{\mathbf{P}}
\safemath{\bQ}{\mathbf{Q}}
\safemath{\bR}{\mathbf{R}}
\safemath{\bS}{\mathbf{S}}
\safemath{\bT}{\mathbf{T}}
\safemath{\bU}{\mathbf{U}}
\safemath{\bV}{\mathbf{V}}
\safemath{\bW}{\mathbf{W}}
\safemath{\bX}{\mathbf{X}}
\safemath{\bY}{\mathbf{Y}}
\safemath{\bZ}{\mathbf{Z}}

\safemath{\bZero}{\mathbf{0}}
\safemath{\bOne}{\mathbf{1}}
\safemath{\bDelta}{\mathbf{\Delta}}
\safemath{\bLambda}{\mathbf{\UpLambda}}
\safemath{\bPhi}{\mathbf{\Upphi}}
\safemath{\bSigma}{\mathbf{\Upsigma}}
\safemath{\bOmega}{\mathbf{\Upomega}}
\safemath{\bTheta}{\mathbf{\Uptheta}}

\bmdefine{\biAd}{A}
\bmdefine{\biBd}{B}
\bmdefine{\biCd}{C}
\bmdefine{\biDd}{D}
\bmdefine{\biEd}{E}
\bmdefine{\biFd}{F}
\bmdefine{\biGd}{G}
\bmdefine{\biHd}{H}
\bmdefine{\biId}{I}
\bmdefine{\biJd}{J}
\bmdefine{\biKd}{K}
\bmdefine{\biLd}{L}
\bmdefine{\biMd}{M}
\bmdefine{\biOd}{N}
\bmdefine{\biPd}{O}
\bmdefine{\biQd}{P}
\bmdefine{\biRd}{R}
\bmdefine{\biSd}{S}
\bmdefine{\biTd}{T}
\bmdefine{\biUd}{U}
\bmdefine{\biVd}{V}
\bmdefine{\biWd}{W}
\bmdefine{\biXd}{X}
\bmdefine{\biYd}{Y}
\bmdefine{\biZd}{Z}

\bmdefine{\biDelta}{\Delta}
\bmdefine{\biLambda}{\Lambda}
\bmdefine{\biPhi}{\Phi}
\bmdefine{\biSigma}{\Sigma}
\bmdefine{\biOmega}{\Omega}
\bmdefine{\biTheta}{\Theta}

\safemath{\bimA}{\biAd}
\safemath{\bimB}{\biBd}
\safemath{\bimC}{\biCd}
\safemath{\bimD}{\biDd}
\safemath{\bimE}{\biEd}
\safemath{\bimF}{\biFd}
\safemath{\bimG}{\biGd}
\safemath{\bimH}{\biHd}
\safemath{\bimI}{\biId}
\safemath{\bimJ}{\biJd}
\safemath{\bimK}{\biKd}
\safemath{\bimL}{\biLd}
\safemath{\bimM}{\biMd}
\safemath{\bimN}{\biNd}
\safemath{\bimO}{\biOd}
\safemath{\bimP}{\biPd}
\safemath{\bimQ}{\biQd}
\safemath{\bimR}{\biRd}
\safemath{\bimS}{\biSd}
\safemath{\bimT}{\biTd}
\safemath{\bimU}{\biUd}
\safemath{\bimV}{\biVd}
\safemath{\bimW}{\biWd}
\safemath{\bimX}{\biXd}
\safemath{\bimY}{\biYd}
\safemath{\bimZ}{\biZd}

\safemath{\bimDelta}{\biDelta}
\safemath{\bimLambda}{\biLambda}
\safemath{\bimPhi}{\biPhi}
\safemath{\bimSigma}{\biSigma}
\safemath{\bimOmega}{\biOmega}
\safemath{\bimTheta}{\biTheta}

\safemath{\setA}{\mathcal{A}}
\safemath{\setB}{\mathcal{B}}
\safemath{\setC}{\mathcal{C}}
\safemath{\setD}{\mathcal{D}}
\safemath{\setE}{\mathcal{E}}
\safemath{\setF}{\mathcal{F}}
\safemath{\setG}{\mathcal{G}}
\safemath{\setH}{\mathcal{H}}
\safemath{\setI}{\mathcal{I}}
\safemath{\setJ}{\mathcal{J}}
\safemath{\setK}{\mathcal{K}}
\safemath{\setL}{\mathcal{L}}
\safemath{\setM}{\mathcal{M}}
\safemath{\setN}{\mathcal{N}}
\safemath{\setO}{\mathcal{O}}
\safemath{\setP}{\mathcal{P}}
\safemath{\setQ}{\mathcal{Q}}
\safemath{\setR}{\mathcal{R}}
\safemath{\setS}{\mathcal{S}}
\safemath{\setT}{\mathcal{T}}
\safemath{\setU}{\mathcal{U}}
\safemath{\setV}{\mathcal{V}}
\safemath{\setW}{\mathcal{W}}
\safemath{\setX}{\mathcal{X}}
\safemath{\setY}{\mathcal{Y}}
\safemath{\setZ}{\mathcal{Z}}
\safemath{\emptySet}{\varnothing}

\safemath{\colA}{\mathscr{A}}
\safemath{\colB}{\mathscr{B}}
\safemath{\colC}{\mathscr{C}}
\safemath{\colD}{\mathscr{D}}
\safemath{\colE}{\mathscr{E}}
\safemath{\colF}{\mathscr{F}}
\safemath{\colG}{\mathscr{G}}
\safemath{\colH}{\mathscr{H}}
\safemath{\colI}{\mathscr{I}}
\safemath{\colJ}{\mathscr{J}}
\safemath{\colK}{\mathscr{K}}
\safemath{\colL}{\mathscr{L}}
\safemath{\colM}{\mathscr{M}}
\safemath{\colN}{\mathscr{N}}
\safemath{\colO}{\mathscr{O}}
\safemath{\colP}{\mathscr{P}}
\safemath{\colQ}{\mathscr{Q}}
\safemath{\colR}{\mathscr{R}}
\safemath{\colS}{\mathscr{S}}
\safemath{\colT}{\mathscr{T}}
\safemath{\colU}{\mathscr{U}}
\safemath{\colV}{\mathscr{V}}
\safemath{\colW}{\mathscr{W}}
\safemath{\colX}{\mathscr{X}}
\safemath{\colY}{\mathscr{Y}}
\safemath{\colZ}{\mathscr{Z}}

\safemath{\opA}{\mathbb{A}}
\safemath{\opB}{\mathbb{B}}
\safemath{\opC}{\mathbb{C}}
\safemath{\opD}{\mathbb{D}}
\safemath{\opE}{\mathbb{E}}
\safemath{\opF}{\mathbb{F}}
\safemath{\opG}{\mathbb{G}}
\safemath{\opH}{\mathbb{H}}
\safemath{\opI}{\mathbb{I}}
\safemath{\opJ}{\mathbb{J}}
\safemath{\opK}{\mathbb{K}}
\safemath{\opL}{\mathbb{L}}
\safemath{\opM}{\mathbb{M}}
\safemath{\opN}{\mathbb{N}}
\safemath{\opO}{\mathbb{O}}
\safemath{\opP}{\mathbb{P}}
\safemath{\opQ}{\mathbb{Q}}
\safemath{\opR}{\mathbb{R}}
\safemath{\opS}{\mathbb{S}}
\safemath{\opT}{\mathbb{T}}
\safemath{\opU}{\mathbb{U}}
\safemath{\opV}{\mathbb{V}}
\safemath{\opW}{\mathbb{W}}
\safemath{\opX}{\mathbb{X}}
\safemath{\opY}{\mathbb{Y}}
\safemath{\opZ}{\mathbb{Z}}
\safemath{\opZero}{\mathbb{O}}
\safemath{\identityop}{\opI}

\safemath{\veca}{\bma}
\safemath{\vecb}{\bmb}
\safemath{\vecc}{\bmc}
\safemath{\vecd}{\bmd}
\safemath{\vece}{\bme}
\safemath{\vecf}{\bmf}
\safemath{\vecg}{\bmg}
\safemath{\vech}{\bmh}
\safemath{\veci}{\bmi}
\safemath{\vecj}{\bmj}
\safemath{\veck}{\bmk}
\safemath{\vecl}{\bml}
\safemath{\vecm}{\bmm}
\safemath{\vecn}{\bmn}
\safemath{\veco}{\bmo}
\safemath{\vecp}{\bmmp}
\safemath{\vecq}{\bmq}
\safemath{\vecr}{\bmr}
\safemath{\vecs}{\bms}
\safemath{\vect}{\bmt}
\safemath{\vecu}{\bmu}
\safemath{\vecv}{\bmv}
\safemath{\vecw}{\bmw}
\safemath{\vecx}{\bmx}
\safemath{\vecy}{\bmy}
\safemath{\vecz}{\bmz}

\safemath{\veczero}{\bmzero}
\safemath{\vecone}{\bmone}
\safemath{\vecxi}{\bmxi}
\safemath{\veclambda}{\bmlambda}
\safemath{\vecmu}{\bmmu}
\safemath{\vectheta}{\bmtheta}
\safemath{\vecphi}{\bmphi}

\safemath{\matA}{\bA}
\safemath{\matB}{\bB}
\safemath{\matC}{\bC}
\safemath{\matD}{\bD}
\safemath{\matE}{\bE}
\safemath{\matF}{\bF}
\safemath{\matG}{\bG}
\safemath{\matH}{\bH}
\safemath{\matI}{\bI}
\safemath{\matJ}{\bJ}
\safemath{\matK}{\bK}
\safemath{\matL}{\bL}
\safemath{\matM}{\bM}
\safemath{\matN}{\bN}
\safemath{\matO}{\bO}
\safemath{\matP}{\bP}
\safemath{\matQ}{\bQ}
\safemath{\matR}{\bR}
\safemath{\matS}{\bS}
\safemath{\matT}{\bT}
\safemath{\matU}{\bU}
\safemath{\matV}{\bV}
\safemath{\matW}{\bW}
\safemath{\matX}{\bX}
\safemath{\matY}{\bY}
\safemath{\matZ}{\bZ}
\safemath{\matzero}{\bmzero}

\safemath{\matDelta}{\bDelta}
\safemath{\matLambda}{\bLambda}
\safemath{\matPhi}{\bPhi}
\safemath{\matSigma}{\bSigma}
\safemath{\matOmega}{\bOmega}
\safemath{\matTheta}{\bTheta}

\safemath{\matidentity}{\matI}
\safemath{\matone}{\matO}

\safemath{\rnda}{A}
\safemath{\rndb}{B}
\safemath{\rndc}{C}
\safemath{\rndd}{D}
\safemath{\rnde}{E}
\safemath{\rndf}{F}
\safemath{\rndg}{G}
\safemath{\rndh}{H}
\safemath{\rndi}{I}
\safemath{\rndj}{J}
\safemath{\rndk}{K}
\safemath{\rndl}{L}
\safemath{\rndm}{M}
\safemath{\rndn}{N}
\safemath{\rndo}{O}
\safemath{\rndp}{P}
\safemath{\rndq}{Q}
\safemath{\rndr}{R}
\safemath{\rnds}{S}
\safemath{\rndt}{T}
\safemath{\rndu}{U}
\safemath{\rndv}{V}
\safemath{\rndw}{W}
\safemath{\rndx}{X}
\safemath{\rndy}{Y}
\safemath{\rndz}{Z}

\safemath{\rveca}{\bimA}
\safemath{\rvecb}{\bimB}
\safemath{\rvecc}{\bimC}
\safemath{\rvecd}{\bimD}
\safemath{\rvece}{\bimE}
\safemath{\rvecf}{\bimF}
\safemath{\rvecg}{\bimG}
\safemath{\rvech}{\bimH}
\safemath{\rveci}{\bimI}
\safemath{\rvecj}{\bimJ}
\safemath{\rveck}{\bimK}
\safemath{\rvecl}{\bimL}
\safemath{\rvecm}{\bimM}
\safemath{\rvecn}{\bimN}
\safemath{\rveco}{\bomO}
\safemath{\rvecp}{\bimP}
\safemath{\rvecq}{\bimQ}
\safemath{\rvecr}{\bimR}
\safemath{\rvecs}{\bimS}
\safemath{\rvect}{\bimT}
\safemath{\rvecu}{\bimU}
\safemath{\rvecv}{\bimV}
\safemath{\rvecw}{\bimW}
\safemath{\rvecx}{\bimX}
\safemath{\rvecy}{\bimY}
\safemath{\rvecz}{\bimZ}

\safemath{\rvecxi}{\bmxi}
\safemath{\rveclambda}{\bmlambda}
\safemath{\rvecmu}{\bmmu}
\safemath{\rvectheta}{\bmtheta}
\safemath{\rvecphi}{\bmphi}

\safemath{\rmatA}{\bimA}
\safemath{\rmatB}{\bimB}
\safemath{\rmatC}{\bimC}
\safemath{\rmatD}{\bimD}
\safemath{\rmatE}{\bimE}
\safemath{\rmatF}{\bimF}
\safemath{\rmatG}{\bimG}
\safemath{\rmatH}{\bimH}
\safemath{\rmatI}{\bimI}
\safemath{\rmatJ}{\bimJ}
\safemath{\rmatK}{\bimK}
\safemath{\rmatL}{\bimL}
\safemath{\rmatM}{\bimM}
\safemath{\rmatN}{\bimN}
\safemath{\rmatO}{\bimO}
\safemath{\rmatP}{\bimP}
\safemath{\rmatQ}{\bimQ}
\safemath{\rmatR}{\bimR}
\safemath{\rmatS}{\bimS}
\safemath{\rmatT}{\bimT}
\safemath{\rmatU}{\bimU}
\safemath{\rmatV}{\bimV}
\safemath{\rmatW}{\bimW}
\safemath{\rmatX}{\bimX}
\safemath{\rmatY}{\bimY}
\safemath{\rmatZ}{\bimZ}

\safemath{\rmatDelta}{\bimDelta}
\safemath{\rmatLambda}{\bimLambda}
\safemath{\rmatPhi}{\bimPhi}
\safemath{\rmatSigma}{\bimSigma}
\safemath{\rmatOmega}{\bimOmega}
\safemath{\rmatTheta}{\bimTheta}

%% file: standard-macros.tex
\usepackage{amssymb}
\usepackage{amsfonts}
\usepackage{mathrsfs}
\usepackage{xspace}
\usepackage{bm}

\usepackage{scalerel}

\DeclareMathOperator{\row}{row}

\newenvironment{textbmatrix}{	\setlength{\arraycolsep}{2.5pt}%
								\big[\begin{matrix}}{\end{matrix}\big]%
								\raisebox{0.08ex}{\vphantom{M}}}

\def\be{\begin{equation}}
\def\ee{\end{equation}}
\def\een{\nonumber \end{equation}}
\def\mat{\begin{bmatrix}}
\def\emat{\end{bmatrix}}
\def\btm{\begin{textbmatrix}}
\def\etm{\end{textbmatrix}}

\def\ba#1\ea{\begin{align}#1\end{align}}
\def\bas#1\eas{\begin{align*}#1\end{align*}}
\def\bs#1\es{\begin{split}#1\end{split}} 
\def\bg#1\eg{\begin{gather}#1\end{gather}} 
\def\bi#1\ei{\begin{itemize}#1\end{itemize}}

\DeclareMathOperator{\rank}{rank}			

\safemath{\dirac}{\delta}					
\safemath{\krond}{\dirac}					

\safemath{\upto}{\uparrow}
\safemath{\downto}{\downarrow}
\safemath{\iu}{j}							
\safemath{\ev}{\lambda}						
\safemath{\hilseqspace}{l^{2}}				
\newcommand{\banachfunspace}[1]{\setL^{#1}}	
\safemath{\hilfunspace}{\banachfunspace{2}}	

\safemath{\SNR}{\text{\sc snr}} 				
\safemath{\No}{N_0}							
\safemath{\Es}{E_s}							
\safemath{\Eb}{E_b}							
\safemath{\EbNo}{\frac{\Eb}{\No}}
\safemath{\EsNo}{\frac{\Es}{\No}}

\DeclareMathOperator{\CHop}{\ensuremath{\opH}} 
\safemath{\tvir}{\rndh_{\CHop}}				
\safemath{\tvtf}{\rndl_{\CHop}}				
\safemath{\spf}{\rnds_{\CHop}}				
\safemath{\bff}{H_{\CHop}}					

\safemath{\ircf}{r_{h}}						
\safemath{\tftvcf}{r_{s}}					
\safemath{\tfcf}{r_{l}}						
\safemath{\bfcf}{r_{H}}						

\safemath{\tcorr}{c_h}						
\safemath{\scf}{c_{s}}						
\safemath{\tfcorr}{c_{l}}					
\safemath{\fcorr}{c_{H}}						

\safemath{\mi}{I}							
\safemath{\capacity}{C}						

\safemath{\normal}{\mathcal{N}}			
\safemath{\jpg}{\mathcal{CN}}			
\safemath{\mchain}{\leftrightarrow}		

\safemath{\dB}{\,\mathrm{dB}}
\safemath{\dBm}{\,\mathrm{dBm}}
\safemath{\Hz}{\,\mathrm{Hz}}
\safemath{\kHz}{\,\mathrm{kHz}}
\safemath{\MHz}{\,\mathrm{MHz}}
\safemath{\GHz}{\,\mathrm{GHz}}
\safemath{\s}{\,\mathrm{s}}
\safemath{\ms}{\,\mathrm{ms}}
\safemath{\mus}{\,\mathrm{\mu s}}
\safemath{\ns}{\,\mathrm{ns}}
\safemath{\meter}{\,\mathrm{m}}
\safemath{\mm}{\,\mathrm{mm}}
\safemath{\cm}{\,\mathrm{cm}}
\safemath{\m}{\,\mathrm{m}}
\safemath{\W}{\,\mathrm{W}}
\safemath{\J}{\,\mathrm{J}}
\safemath{\K}{\,\mathrm{K}}
\safemath{\bit}{\,\mathrm{bit}}

\safemath{\define}{=}			

\safemath{\equivalent}{\sim}
\safemath{\distas}{\sim}					
\safemath{\sdiff}{\Delta}				

\safemath{\reals}{\mathbb{R}}
\safemath{\positivereals}{\reals_{+}}
\safemath{\integers}{\mathbb{Z}}
\safemath{\posint}{\integers_{+}}
\safemath{\naturals}{\mathbb{N}}
\safemath{\posnaturals}{\naturals_{+}}
\safemath{\complexset}{\mathbb{C}}
\safemath{\rationals}{\mathbb{Q}}